%% file: arxiv.tex
\documentclass{article}

\usepackage[preprint]{corl_2026} % Use this for the initial submission.

\newcommand{\methodname}{Interaction-weighted Resampling}
\newcommand{\methodshort}{IWR}

\usepackage[utf8]{inputenc}
\usepackage[T1]{fontenc}

\usepackage{url}
\usepackage{booktabs}
\usepackage{amsmath,amssymb,amsfonts,amsthm,mathtools}
\usepackage{enumitem}
\usepackage{bbm}
\usepackage{titletoc}

\usepackage{minitoc}
\usepackage{minitoc}
\usepackage{xcolor}         % colors
\usepackage{csquotes}
\usepackage{graphicx}
\usepackage{changepage}
\usepackage{algorithm}
\usepackage{algorithmic}
\usepackage{multirow}
\usepackage{wrapfig}
\usepackage[table]{xcolor}
\usepackage{xurl}
\usepackage{subcaption}
\definecolor{bluegray}{rgb}{0.4, 0.6, 0.8}
\definecolor{lightcarminepink}{rgb}{0.9, 0.4, 0.38}
\definecolor{darkred}{rgb}{0.55, 0.0, 0.0}

\hypersetup{
  colorlinks=true,
  linkcolor=darkred,
  citecolor=brown,
  urlcolor=darkred
}

\definecolor{funcBlue}{RGB}{43,145,175} 
\definecolor{commentTeal}{RGB}{110,154,155}
\setcitestyle{numbers, square, sort&compress, open={[}, close={]}}
\bibpunct{[}{]}{,}{n}{}{,}

\usepackage{microtype}
\usepackage{xcolor}
\input{math.tex}

\title{Learning Object Manipulation from Scratch via Contrastive Interaction}

\author{
Tongle Shen$^{1}$,
Caleb Chuck$^{2,\dagger}$,
Fan Feng$^{1,\dagger}$,
Biwei Huang$^{1,\dagger}$\\
$^{1}$UC San Diego \quad
$^{2}$UT Austin\\
\texttt{\{t7shen, f2feng, bih007\}@ucsd.edu; calebc@cs.utexas.edu}\\
{\small $^{\dagger}$Equal advising}
}
\begin{document}
\maketitle

%===============================================================================

\begin{abstract}
    \input{sections/abstract}
\end{abstract}

% Two or three meaningful keywords should be added here

\keywords{Object-Centric Manipulation, Contrastive RL, Unsupervised Learning}

%===============================================================================

\input{sections/intro}
\input{sections/relatedwork}
\input{sections/preliminaries}

\input{sections/loc2mani}
\input{sections/method}
\input{sections/experiments}

\input{sections/conclusion}

% \input{sections/theory_bundle}

%===============================================================================
%===============================================================================

%\clearpage
% The acknowledgments are automatically included only in the final and preprint versions of the paper.
\acknowledgments{We thank Jinzhou Tang, Minghao Fu, and Xinyue Wang for fruitful and insightful discussions.}

%===============================================================================
\clearpage
% no \bibliographystyle is required, since the corl style is automatically used.
\bibliography{main.bib}  % .bib

\newpage
\setcounter{page}{1}
\appendix
\begin{center}
\hrule height .5pt
\vspace{4mm}
{\Large\textbf{{{Appendix of} \textit{Learning Object Manipulation from Scratch via Contrastive Interaction}}}}
\vspace{4mm}
\hrule height .5pt
\end{center}
\startcontents[sections]
{%
\hypersetup{linkcolor=black}%
\printcontents[sections]{l}{1}{\setcounter{tocdepth}{3}}%
}
\vspace{4mm}
\hrule height .5pt
\vspace{8mm}

\setcounter{figure}{0}
\renewcommand{\thefigure}{A\arabic{figure}}
\setcounter{table}{0}
\renewcommand{\thetable}{A\arabic{table}}
\setcounter{section}{0}
    \renewcommand{\thesection}{\Alph{section}}
    \renewcommand{\theequation}{A\arabic{equation}}
    \setcounter{equation}{0}
    
    \setcounter{figure}{0}
    \renewcommand{\thefigure}{A\arabic{figure}}
    \setcounter{table}{0}
    \renewcommand{\thetable}{A\arabic{table}}
    \renewcommand{\thesection}{\Alph{section}}
    \renewcommand{\theequation}{A\arabic{equation}}
    \setcounter{equation}{0}
\input{appendix/theory}
\input{appendix/related_work}
\input{appendix/iwr_details}
\input{appendix/training_setup}

\input{appendix/simulation_setup}
\input{appendix/real_setup}
\input{appendix/energy_visualization}

\input{appendix/ablation}

\end{document}

%% file: math.tex
\newtheorem{definition}{Definition}
\newtheorem{assumption}{Assumption}
\newtheorem{proposition}{Proposition}

\newtheorem{lemma}{Lemma}

%% file: sections/abstract.tex
Contrastive Reinforcement Learning (CRL) has seen recent success in a wide variety of goal-conditioned robotics tasks by learning structured representations of the dynamics. However, despite its success in locomotion and similar domains, CRL often struggles in interaction-rich manipulation. We argue that a key source of this difficulty is object-centric \textit{interaction}, such as contact or grasping, that induces distinct changes in the underlying dynamic modes. In this work, we formulate manipulation dynamics as a piecewise-smooth Markov process and show that interaction-induced mode changes create piecewise nonlinear reachability structures that are difficult for standard CRL energy functions to represent and plan over. Based on this analysis, we introduce \methodname{} (\methodshort{}). \methodshort{} performs interaction-aware resampling around phases before, during, and after interactions, encouraging the learned representations to better capture the dynamic complexity around interactions. Across interaction-centric environments, including 2D dynamic control, robotic manipulation, and robot air hockey, \methodshort{} improves both sample efficiency and overall performance over prior CRL methods, with around $19.8\%$ average improvement in simulation. Finally, using a sim-to-real pipeline with policies trained by \methodshort{}, we demonstrate the first real-world goal-conditioned robot air hockey agent capable of hitting goals, improving success rate from $25\%$ to $60\%$.

\centering{Project Page: \href{IWR-arxiv.github.io}{IWR-arxiv.github.io}}

%% file: sections/intro.tex
%\vspace{-1.5em}
\section{Introduction}
%\vspace{-1em}
Object manipulation remains a core challenge for robot learning, but the factors that make it difficult to learn generalist manipulation policies remain unclear. Most existing approaches rely heavily on either expert demonstrations~\citep{vecerik2017leveraging, zitkovich2023rt, kim2024openvla, chi2025visuomotor, o2024open} or carefully designed reward functions~\citep{ma2024eureka, tang2025deep}. While these paradigms have impressive progress, they also introduce strong supervision requirements:  demonstrations can be expensive to collect and may cover only a narrow range of behaviors, while task-specific rewards often require substantial engineering and do not easily scale to open-ended multitask settings. These limitations motivate unsupervised approaches that can acquire reusable manipulation skills and representations directly from interaction, without assuming demonstrations or manually specified rewards~\citep{eysenbach2018diversity, touati2021learning, laskin2022unsupervised, park2024metra, agarwal2025unified}. Among them, goal-conditioned Reinforcement Learning (GCRL) offers a promising direction for learning a family of policies to handle the inherent multitask nature of manipulation~\citep{liu2022goal}. Successful GCRL requires learning strong representations to capture the shared properties between different control behaviors. While the strategies for learning these representations remain an open challenge, recent work in Contrastive Reinforcement Learning (CRL) offers a promising direction forward~\citep{eysenbach2022contrastive}. CRL has shown impressive capabilities to learn complex control from scratch, and even limited success in manipulation tasks~\citep{liu2025single}. However, despite the clear fit of GCRL algorithms for manipulation, their performance remains limited.

%In this work, we ask a central question: \textit{when and why does CRL fail for manipulation}, and \textit{what is needed to make it work}? We identify \textit{interactions} as a core property of manipulation that makes effective policy representation difficult. Specifically, these relational events between entities induce distinct changes in dynamic modes. 
%These induced mode changes make the CRL representation no longer well approximated by the linear-Gaussian structure observed in locomotion tasks~\citep{eysenbach2024inference}, thereby making planning over this representation space substantially more difficult. To formalize this challenge, we view manipulation as a mode-switching dynamical system, where the active mode changes across phases before, during, and after an interaction. These interaction-dependent modes induce piecewise nonlinear dynamics that are substantially harder to represent than the smoother structures often observed in locomotion tasks. This provides a concrete mechanism for why CRL representations can become poorly shaped in manipulation: they may smooth over critical interaction boundaries and fail to capture the mode changes that determine future reachability. We empirically observe this failure mode and show how it leads to poor policy performance, as illustrated in Figure~\ref{fig:intro_energy}.

In this work, we ask a central question: \textit{when and why does CRL fail for manipulation}, and \textit{what is needed to make it work}? We identify \textit{interactions} as a core property of manipulation that makes effective policy representation difficult. Specifically, interactions are relational events between entities that induce distinct changes in dynamic modes. Since CRL relies on the learned representation to define goal-reaching energies for planning, these interaction-induced mode changes can substantially distort the geometry over which policies are learned and planned.

To formalize this challenge, we view manipulation as a mode-switching dynamical system, where the active mode changes across phases before, during, and after an interaction. These interaction-dependent modes induce piecewise nonlinear dynamics and reachability structures that are substantially harder to represent than the smoother structures often observed in locomotion tasks, which can be well approximated by linear-Gaussian dynamics~\citep{eysenbach2024inference}. This provides a concrete mechanism for why CRL representations can become poorly shaped in manipulation: they may smooth over critical interaction boundaries and fail to capture the mode changes that determine future reachability. We empirically observe this failure mode and show how it leads to poor policy performance in Fig.~\ref{fig:example}.
\begin{wrapfigure}{r}{0.48\textwidth}
    \centering
    \includegraphics[width=0.47\textwidth]{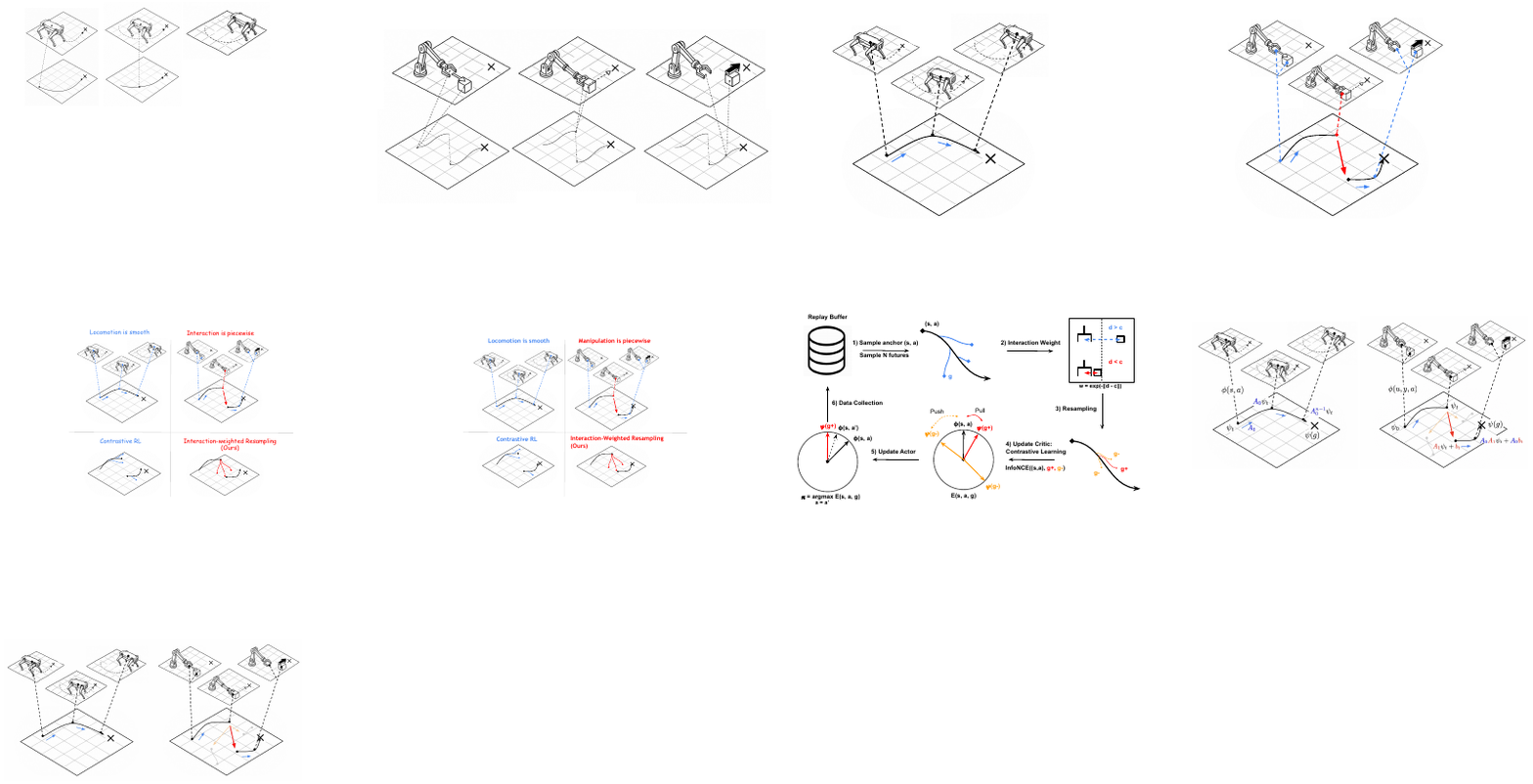}
    \caption{\textbf{Motivation}. In locomotion-like domains, future reachability often follows a smooth temporal structure, making standard CRL effective.  In manipulation, object interactions induce piecewise mode changes, causing standard CRL to miss critical interaction transitions.  IWR  emphasizes these interaction-relevant points to better learn the goal-reaching energy.
}    
\label{fig:example}    
\vspace{-15pt}
\end{wrapfigure}
Guided by our analysis of how interactions induce piecewise nonlinear structure in manipulation, we propose a principled yet simple scheme to improve CRL in these settings. Building on insights from interaction modeling~\citep{chuck2023granger, hwang2024fine, chuck2025null, lei2026spartan} and combinatorial resampling~\citep{kim2024enhancing, ziarko2026contrastive}, we introduce \methodname{} (\methodshort{}), a contrastive resampling principle that reshapes the training distribution toward interaction-induced mode transitions. Instead of treating all transitions uniformly, \methodshort{} constructs informative state-goal comparisons around interaction-relevant points, encouraging the representation to preserve the mode boundaries that govern future reachability. 
Through evaluation on robotic manipulation domains with complex object interactions, including Meta-World, simulated dynamic maze environments, and robot air hockey, we show that \methodshort{} provides a principled way to address the core challenges of CRL by improving both its training distribution and representation capacity. \methodshort{} achieves significantly better sample efficiency and overall performance than prior GCRL methods. Notably, we demonstrate the first goal-conditioned robot air hockey agent capable of hitting multiple goals from a dynamically moving puck.

%Our approach is based on two key insights: \textbf{1)} Combining prior work related to interaction~\citep{} and combinatorial resampling~\citep{}, we introduce \methodname{}, which directs the representations to emphasize interaction-relevant points. \textbf{2)} By recognizing the challenge of learning a representation that can handle multiple modes, we introduce \archname{,} a network augmentation that allows the model to capture greater complexity in how the future distribution can be reached from the current distribution, including nonlinear relationships. We demonstrate that these two modifications are principled solutions to the core challenge by modeling the distribution and representation capacity of CRL. Combining \methodname{} and \archname{}, we are able to demonstrate significantly improved sample efficiency and overall performance compared to prior GCRL work in interaction-centric environments, including: dynamic maze domains, robotic manipulation, and robot air hockey, demonstrating the first instance of a goal-conditioned robot air hockey agent that can hit multiple goals from a dynamically moving puck. 

%In summary, our contributions are:
%\begin{enumerate}
%    \item Formalizing the core challenge of representation learning for robot manipulation with interactions
%    \item Introducing \methodname{} and \archname{} to address these challenges, and empirically validating their effectiveness.
%    \item Providing proofs for the theoretical properties of the interaction weighting
%\end{enumerate}

Our contributions are: \textbf{\textit{(i)}} a principled analysis of why interaction-rich manipulation poses a fundamental challenge for CRL
% , showing how interaction-induced mode changes lead to piecewise nonlinear reachability structures that are difficult to represent and plan over; 
\textbf{\textit{(ii)}} Introducing \methodname{} (\methodshort{}), inspired by this principled analysis, which reshapes the effective contrastive training distribution toward interaction-relevant transitions.
% to capture multi-modal, piecewise nonlinear reachability; 
\textbf{\textit{(iii)}} Empirical validation of \methodshort{}
% , showing how it improves the learning of interaction-relevant representations, and empirically validate our method 
across dynamic maze domains, robotic manipulation, and substantial improvement in robot air hockey.

%% file: sections/relatedwork.tex
%\vspace{-1em}
\section{Related Work}
%\vspace{-1em}
Our work lies at the intersection of CRL, unsupervised goal-conditioned control, and robot manipulation with interactions. An extended discussion on these works is given in Appendix~\ref{app: rl}.

\textbf{Contrastive Reinforcement Learning (CRL)} applies contrastive learning~\citep{oord2018representation} to goal-conditioned reinforcement learning (GCRL), through both classification~\citep{eysenbach2021clearning} and regression  objectives~\citep{eysenbach2022contrastive}. Recent work has extended CRL toward metric learning~\citep{zheng2025multistep,myers2026offline}, planning~\citep{eysenbach2024inference}, language alignment~\citep{myers2023goal,myers2026temporal}, and combinatorial reasoning and search~\citep{wang2026temporal,ziarko2026contrastive}. \methodname{} is most closely related to~\citet{ziarko2026contrastive}, which also uses resampling. However, \methodshort{} targets resampling using interaction structure to focus on difficult-to-model dynamics, whereas~\citet{ziarko2026contrastive} uses different contexts for in-trajectory sampling.
We also build on recent successes of CRL in simulated manipulation~\citep{wang20261000}, including emergent exploration~\citep{liu2025single,bastankhah2026demystifying}, which enables limited but sample-efficient manipulation performance compared to prior GCRL methods~\citep{puterman1990markov,kaelbling1993learning,liu2022goal,andrychowicz2017hindsight,bai2021addressing}. Beyond CRL, \methodshort{} draws on broader contrastive learning ideas, including the use of privileged information such as labels or structured metadata~\citep{feng2022adaptive,denize2023similarity,khosla2020supervised,hoffmann2022ranking}, and debiasing strategies that carefully select negatives~\citep{chuang2020debiased,huynh2022boosting,dwibedi2021little}. Our view of the tension between non-smooth dynamics and smooth contrastive representations is partly inspired by~\citet{betser2026infonce}, which observes that learned representations tend toward a thin-shell Gaussian. Furthermore, the use of factorization and entity interactions in contrastive learning remains limited, which is the focus of \methodshort{}.

\textbf{Manipulation Learning with Interactions} 
Robot manipulation covers tasks such as grasping, moving, and throwing objects~\citep{han2023survey}. Interactions~\citep{chuck2024automated} have been used in manipulation learning through controllability~\citep{seitzer2021causal}, data augmentation~\citep{pitis2020counterfactual,pitis2022mocoda}, hierarchical RL~\citep{chuck2020hypothesis,chuck2023granger,wang2024skild}, skill learning~\citep{hu2024disentangled,rodriguez2025pixels,hosseini2026susd}, causal modeling~\citep{lee2023scale,biswas2024gaze}, and exploration~\citep{wang2023elden}. In contrast, our work formalizes manipulation difficulty as interaction-induced mode-changing dynamics and introduces \methodshort{} as an interaction-aware algorithm for training CRL. \methodshort{} improves sample efficiency and, to our knowledge, provides the first CRL application to robot air hockey~\citep{chuck2024air}.

%% file: sections/preliminaries.tex
%\vspace{-1em}
\section{Background}
%\vspace{-1em}
In this section, we set up the basic modeling framework. Specifically, we model manipulation as a Markov Decision Process with factored structures that describe interactions, and then learn goal-conditioned policies using CRL within the GCRL framework.

\textbf{Factorized Markov Decision Process} We consider a Markov decision process (MDP) $\mathcal{M}=(\mathcal{S},\mathcal{A},P,r,\gamma)$ with a factored state space~\citep{guestrin2003efficient}. Specifically, the state is decomposed into $n$ components, $s = (s^{1}, s^{2}, \ldots, s^{n}) \in \mathcal{S},
    \quad 
    \mathcal{S} = \mathcal{S}^{1} \times \mathcal{S}^{2} \times \cdots \times \mathcal{S}^{n}$.
Each factor $s^{i}$ represents the state of an entity, such as the robot, an object, and its properties. This factorization gives a natural way to describe manipulation domains with multiple interacting entities.

\textbf{Goal-Conditioned Reinforcement Learning}
In Goal-Conditioned Reinforcement Learning (GCRL)~\citep{kaelbling1993learning}, an agent learns the policy that can solve a family of tasks specified by different goals. Let $\mathcal{S}$ denote the state space, $\mathcal{A}$ the action space, and $\mathcal{G}$ the goal space. At the beginning of each episode, a goal $g \in \mathcal{G}$ is sampled from a goal distribution $p(g)$. The agent then acts according to a goal-conditioned policy $\pi(a \mid s, g)$, which maps the current state and desired goal to a distribution over actions. 
Given transition dynamics $p(s_{t+1}\mid s_t,a_t)$ and a goal-conditioned reward function $r:\mathcal{S}\times\mathcal{A}\times\mathcal{G}\rightarrow \mathbb{R}$, the objective is to maximize the expected discounted return over both trajectories and goals: $J(\pi)
    =
    \mathbb{E}_{g \sim p(g),\; \tau \sim p_\pi(\tau \mid g)}
    \left[
    \sum_{t=0}^{T} \gamma^t r(s_t,a_t,g)
    \right]$,
where $\tau=(s_0,a_0,\ldots,s_T)$ is a trajectory induced by $\pi(\cdot\mid s,g)$ and $\gamma \in [0,1)$ is the discount factor. 
%In many manipulation settings, the reward is sparse and only indicates whether the achieved state is close to the desired goal, \textit{e.g.},
%$r(s_t,a_t,g) = \mathbbm{1}\{d(\phi(s_{t+1}), g) \leq \epsilon\}$,
%where $\phi(s)$ extracts the achieved goal from the state and $d(\cdot,\cdot)$ measures goal distance.

\textbf{Contrastive Reinforcement Learning} We consider goals as future states or achieved goal states. For a state-action pair $(s,a)$, we define the discounted future occupancy under policy $\pi$ as
\begin{equation*}
    \rho^\pi(g\mid s,a)
    =
    (1-\gamma)\sum_{k=1}^{\infty}\gamma^{k-1}
    p^\pi(s_{t+k}=g\mid s_t=s,a_t=a).
\end{equation*}
This essentially measures how likely the policy is to reach goal $g$ in the future after taking action $a$ at state $s$. Thus, learning a goal-conditioned policy can be viewed as learning a representation of state-goal reachability.

Given this goal-conditioned setting, contrastive reinforcement learning (CRL)~\citep{eysenbach2022contrastive} learns a goal-reaching energy that measures how likely a future goal $g$ is reachable from a state-action pair $(s,a)$. CRL can be viewed as estimating the density ratio between this conditional future occupancy $\rho^\pi(g\mid s,a)$ and a replay-marginal future distribution $\bar{\rho}^{\pi}_{\mathcal{B}}(g)$:
\begin{equation}
    E^\star(s,a,g)
    =
    \log \rho^\pi(g\mid s,a)
    -
    \log \bar{\rho}^{\pi}_{\mathcal{B}}(g).
\end{equation}
In practice, CRL parameterizes this energy with an inner product between a state-action encoder and a goal encoder~\citep{eysenbach2022contrastive,liu2025singlegoal}. Let
$\phi:\mathcal{S}\times\mathcal{A}\rightarrow\mathbb{R}^d$
and
$\psi:\mathcal{S}\rightarrow\mathbb{R}^d$.
The learned energy is
\begin{equation}
    E_{\phi,\psi}(s,a,g)=\phi(s,a)^\top\psi(g).
\label{eq:inner_energy}
\end{equation}
For each state-action pair $(s,a)\sim d_{\mathcal{B}}(s,a)$, CRL samples one positive future
$g^{(1)}\sim\rho^\pi(\cdot\mid s,a)$
and $N-1$ negative futures
$g^{(2)},\ldots,g^{(N)}\sim\bar{\rho}^{\pi}_{\mathcal{B}}(\cdot)$.
The critic is trained with an InfoNCE-style objective, together with a LogSumExp regularizer:
\begin{equation}
  \max_{\phi,\psi}\;
  \mathbb{E}
  \Bigg[
  \log
  \frac{
  \exp(\phi(s,a)^\top\psi(g^{(1)}))
  }{
  \sum_{j=1}^N \exp(\phi(s,a)^\top\psi(g^{(j)}))
  }
  -
  \beta_{\rm lse}
  \left(
  \log\sum_{j=1}^N \exp(\phi(s,a)^\top\psi(g^{(j)}))
  \right)^2
  \Bigg],
  \label{eq:sgcrl-critic-objective}
\end{equation}
where the expectation is over
$(s,a)\sim d_{\mathcal{B}}(s,a)$,
$g^{(1)}\sim\rho^\pi(\cdot\mid s,a)$, and
$g^{(2)},\ldots,g^{(N)}\sim\bar{\rho}^{\pi}_{\mathcal{B}}(\cdot)$.
The learned energy is then used as an implicit goal-reaching reward for actor learning.

%The learned energy is then used as an implicit goal-reaching reward for actor learning. Thus, the success of CRL depends on whether the inner-product energy can faithfully represent the reachability structure of the environment. This representation is effective when reachability varies smoothly, but can become insufficient in manipulation domains where interactions induce sharp mode changes and piecewise nonlinear future distributions.

% Placeholder: definition and a short summary

% Placeholder: An elegant way to define object-object relational process.
% In this paper we only consider state input, and fmdp could be easily
% achieved by object-centric model, or 3d point cloud, anything has a rough
% distance measure

%% file: sections/loc2mani.tex
%\vspace{-1em}
\section{Analysis: Why does CRL Struggle with Interactions in Manipulation?}
%\vspace{-1em}
\label{sec: analysis}
%The success of CRL depends on whether the inner-product energy $E_{\phi,\psi}(s,a,g)=\phi(s,a)^\top\psi(g)$ can faithfully represent the future-density ratio $E^\star(s,a,g)
%    =
%    \log \rho^\pi(g\mid s,a)
%    -
%    \log \bar{\rho}^{\pi}_{\mathcal{B}}(g)$.
The success of CRL depends on whether the inner-product energy $E_{\phi,\psi}(s,a,g)$ can faithfully represent the future-density ratio $E^\star(s,a,g)$.
This motivates our analysis: what class of future distributions is naturally represented by the inner-product CRL energy function, and when does this representation become insufficient? Building on the Gaussian idealization behind temporal interpolation in CRL~\citep{eysenbach2024inference}, we first revisit why contrastive representations are well-suited to smooth, single-mode dynamics, and then show why this structure breaks down in manipulation, where interactions induce latent mode changes and piecewise nonlinear future distributions.
%\vspace{-1em}
\subsection{CRL as a Linear-Gaussian Temporal Model}
\label{subsec:crl-gaussian-view}
%\vspace{-1em}
We begin from the Gaussian idealization of temporal contrastive representations. \citet{eysenbach2024inference} show that CRL induces a simple linear-Gaussian model of discounted future occupancy in representation space. In particular, when the marginal representation distribution is approximately isotropic Gaussian and the contrastive density ratio is represented by a quadratic score, the conditional future representation admits a Gaussian form,
\begin{equation*}
    p(\psi_{t+n}\mid \psi_t)
    =
    \mathcal N(\mu_n(\psi_t),\Sigma_n).
\end{equation*}
Naturally, for a single-mode MDP, the temporal mean can be parameterized by a single linear operator $A$, yielding
\begin{equation}
    \mu_n(\psi_t) = A^n\psi_t,
    \qquad
    p(\psi_{t+n}\mid \psi_t)
    =
    \mathcal N(A^n\psi_t,\Sigma_n).
\end{equation}
All temporal structure is generated by powers of the same operator $A$. The geometrically sampled future states used by CRL provide dense supervision for estimating this temporal operator across different horizons, which explains the strength of CRL in navigation and locomotion domains. Essentially, CRL can be understood as learning a smooth Gaussian-like geometry for future reachability.

% This explains the empirical strength of CRL in navigation and locomotion domains~\citep{eysenbach2022contrastive}. When the action affects the same state variables that define the future goal, and the resulting reachability structure is organized by a single smooth temporal mode -- an inner-product CRL critic can learn a representation in which temporal interpolation and planning are well behaved. In this sense, CRL can be understood as learning a smooth Gaussian-like geometry for future reachability.
%\vspace{-0.8em}
\subsection{Manipulation Induces Mode-Conditioned Gaussian Futures}
%\vspace{-0.8em}
\label{subsec:mode-conditioned-gaussian}
However, the single-operator Gaussian view becomes insufficient in manipulation. Unlike locomotion domains, where the action persistently affects the same state variable used as the future goal, manipulation often separates the actuated factor from the target factor. Let the state be factored as $s_t=(u_t,y_t)$,
where $u_t$ denotes the actuated factor, such as the robot end-effector or paddle, and $y_t$ denotes the target factor, such as an object or puck, and the state-action anchor is $x_t=(u_t,y_t,a_t)$.

In the average case, actions do not always affect the target factor. Instead, their effects are activated only through interaction events. We represent this using an interaction indicator $\omega_t=m(x_t)\in\{0,1\}$,
where $\omega_t=0$ denotes \textit{passive} target dynamics and $\omega_t=1$ denotes an interaction mode. The target dynamics can then be written as a piecewise transition:
\begin{equation}
y_{t+1}
=
\begin{cases}
f_0(y_t), & \omega_t=0,\\
f_1(u_t,y_t,a_t), & \omega_t=1.
\end{cases}
\label{eq:piecewise-target-dynamics}
\end{equation}
In the passive mode, the target dynamics are without direct dependence on the robot action; in the interaction mode, the target dynamics depend on the actuated factor and action.

Applying the Gaussian temporal view mode-wise yields a different representation-level future model for each mode. Let \(\psi_t=\psi(y_t)\) be the target-factor representation. In the \textit{passive mode}, the future representation can be approximated by a target-only temporal operator, $\psi_{t+1}=A_0\psi_t+\epsilon_t$.
In the \textit{interaction mode}, however, the target update includes an additional displacement induced by the actuated factor and action:
    $\psi_{t+1}
    =
    A_1\psi_t + b_t + \epsilon_t$,
    $b_t = B(u_t,y_t,a_t)$.

Combining the two cases, manipulation induces a switched affine Gaussian model:
\begin{equation}
    \psi_{t+1}
    =
    A_{\omega_t}\psi_t
    +
    \omega_t b_t
    +
    \epsilon_t,
    \qquad
    \epsilon_t\sim\mathcal N(0,\Sigma_{\omega_t}).
    \label{eq:switched-affine-gaussian}
\end{equation}
This indicates the \textit{structural mismatch} between manipulation and the single-mode CRL geometry. Interactions change the active mode of the representation dynamics and introduce an interaction residual $b_t$ that depends on the robot-object relation and action. As a result, the future representation is no longer governed by powers of a single operator $A$, but by a sequence of mode-dependent operators and interaction residuals. An illustration is in Fig.~\ref{fig:theory_illustration}.
% \begin{figure}[h]
%     \vspace{-8pt}   
%     \centering
%     \small
% \includegraphics[width=1.0\linewidth]{figure/theory-illustration.pdf}
%     \caption{\textbf{Interaction-induced mode changes break the single-operator CRL geometry}. In smooth locomotion domains, future representations can often be organized by repeatedly applying a single temporal operator, producing a coherent trajectory toward the goal. In manipulation, interactions such as contact or object transfer switch the active dynamics and introduce residual displacements that depend on the actuated factor and action. Hence, the future distribution becomes piecewise and branching, making standard CRL smoothing over critical interaction transitions.}
%     \label{fig:theory_illustration}
%     \vspace{-.5cm}
% \end{figure}
%\textcolor{red}{An illustration in Appendix xxx}
%\vspace{-0.8em}
\subsection{Two Challenges for CRL in Interaction-Rich Manipulation}
%\vspace{-0.8em}
The mode-conditioned Gaussian reveals two distinct challenges for applying CRL to manipulation. 

\textcolor{darkred}{\textbf{Sampling Challenge}}: Standard CRL learns from endpoint pairs $(x_t,y_f)$, where $y_f$ is sampled as a discounted future. However, the interaction event that changes the target dynamics may occur only at a sparse and latent time $\tau_t$. If the sampled future lies before the interaction, the tuple only observes passive target evolution and provides no direct supervision for the interaction-induced shift. If the sampled future lies far after the interaction, the effect of the interaction is propagated through subsequent passive dynamics, which can attenuate or mix the signal. As a result, ordinary contrastive sampling can be dominated by smooth pre-interaction or post-interaction transitions, while the critical bridge transition that changes the dynamic mode remains underrepresented. 

\textcolor{darkred}{\textbf{Error Propagation}}: Even when an interaction-crossing future is sampled, the interaction information may be observed only after it has propagated through subsequent target dynamics.
This makes the learned energy fragile around contact.
We formalize this issue in the one-reset case, where the target first evolves passively, then experiences one interaction, and then evolves passively again.

\begin{proposition}[Error Propagation]
\label{prop:endpoint-energy-error} Let $e$ denote the local interaction-bridge estimation error, and let $k$ denote the number of passive steps after the interaction. 
In the one-reset case, the endpoint energy error scales as
\begin{equation}
    \sup|\widehat{E}_k - E_k|
    \propto
    \|A_0^k\|
    \|e\|
    +
    \frac{1}{2}
    \|A_0^k\|^2
    \|e\|^2 .
    \label{eq:energy-error-bound-main}
\end{equation}
\end{proposition}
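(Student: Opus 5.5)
We work in the switched affine Gaussian model \eqref{eq:switched-affine-gaussian} and fix the one-reset trajectory: passive steps before the interaction, a single interaction step at time $\tau_t$, then $k$ passive steps. Up to noise, the representation after the interaction is $\psi_{\tau+1}=A_1\psi_\tau+b_\tau$. Its learned counterpart is $\widehat\psi_{\tau+1}=\psi_{\tau+1}+e$, where $e$ collects the error in estimating the bridge (the operator $A_1$ and the residual $b_\tau$). Since the remaining $k$ steps are passive and governed by $A_0$, the endpoint means are $\mu_k=A_0^k\psi_{\tau+1}$ and $\widehat\mu_k=A_0^k\widehat\psi_{\tau+1}$. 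The endpoint mean error is therefore exactly $\delta_k:=\widehat\mu_k-\mu_k=A_0^k e$, and $\|\delta_k\|\le\|A_0^k\|\|e\|$. The first step is to write this propagation explicitly, letting the passive noise covariance $\Sigma_k=\sum_{j<k}A_0^j\Sigma_0 A_0^{j\top}$ be shared by the true and estimated models.

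The second step converts the mean error into an energy error. Following the Gaussian idealization of Section~\ref{subsec:crl-gaussian-view}, the energy equals the log conditional density minus a $g$-dependent marginal term, and the marginal term cancels in the difference. For the Gaussian quadratic score we have $E_k(g)=-\tfrac12(\psi(g)-\mu_k)^\top\Sigma_k^{-1}(\psi(g)-\mu_k)+c$, and the same with $\widehat\mu_k$ for $\widehat E_k$. Expanding the quadratic gives
\begin{equation*}
\widehat E_k-E_k=\delta_k^\top\Sigma_k^{-1}(\psi(g)-\mu_k)-\tfrac12\delta_k^\top\Sigma_k^{-1}\delta_k .
\end{equation*}
The linear term is bounded by Cauchy--Schwarz as $\|\Sigma_k^{-1}\|\,\|\psi(g)-\mu_k\|\,\|A_0^k\|\|e\|$. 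The quadratic term is bounded by $\tfrac12\|\Sigma_k^{-1}\|\|A_0^k\|^2\|e\|^2$. Taking the supremum over goals gives the stated form, with the proportionality constant absorbing $\|\Sigma_k^{-1}\|$ and a bound on $\|\psi(g)-\mu_k\|$.

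The main obstacle is making the supremum finite. The linear term grows with $\|\psi(g)-\mu_k\|$, so we restrict $g$ to a bounded goal set. This is natural because CRL representations are normalized and concentrate on a thin shell, so $\|\psi(g)\|\le R$ and $\|\mu_k\|\le\|A_0^k\|\|\psi_{\tau+1}\|$ are bounded. We also need $\Sigma_k^{-1}$ to be uniformly bounded in $k$, which holds because $\Sigma_k\succeq\Sigma_0\succ0$. These assumptions make the "$\propto$" precise: the bound holds up to constants independent of $e$. An alternative is to state the result at the inner-product level $\phi^\top\psi$ and treat the error through $\mu_k$ directly. Either way, the final remark is that when $\|A_0^k\|$ is not contractive, a small local bridge error is amplified at the endpoint. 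This motivates resampling near the interaction, where $k$ is small.
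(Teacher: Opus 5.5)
Your proposal is correct and follows essentially the same route as the paper. Both arguments propagate the bridge error to the endpoint mean as $A_0^k e$ and expand the Gaussian quadratic energy into a linear term plus $-\tfrac12\|A_0^k e\|^2$. The paper uses the whitened isotropic case and remarks that a general covariance only adds conditioning constants, which is your $\|\Sigma_k^{-1}\|$ factor. Both then finish with the triangle inequality, Cauchy--Schwarz and submultiplicativity, under a boundedness assumption on the endpoint residual.

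One small caution concerns how you bound that residual. The paper simply assumes $\|\psi(y_f)-\mu_k\|\le 1$. Your bound $\|\mu_k\|\le\|A_0^k\|\|\psi_{\tau+1}\|$ makes the ``constant'' in the linear term grow like $\|A_0^k\|$, and this happens precisely in the non-contractive regime you highlight. You should therefore bound $\|\psi(g)-\mu_k\|$ directly, for example by taking $\mu_k$ in the same bounded representation set, rather than through $\|A_0^k\|$.
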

Proposition~\ref{prop:endpoint-energy-error} shows that the endpoint energy error depends on two factors: the local bridge estimation error $\|e\|$ and the post-interaction propagation factor $\|A_0^k\|$. 
In manipulation, after contact the target often evolves passively and cannot be corrected by the robot.
Therefore, an inaccurate interaction shift is carried forward by the passive dynamics, producing fragile energy estimates and inaccurate actor gradients near the interaction.
In tasks with repeated or branched interactions, such as pick-place, this issue becomes more severe because the future distribution becomes a mixture over multiple interaction times and interaction residuals. More detailed theorems, analysis, and the full proof can be found in Appendix~\ref{app:theory}. 

%% file: sections/method.tex
\section{Method}
\begin{figure}[t]
    \vspace{-8pt}   
    \centering
    \small
\includegraphics[width=1.0\linewidth]{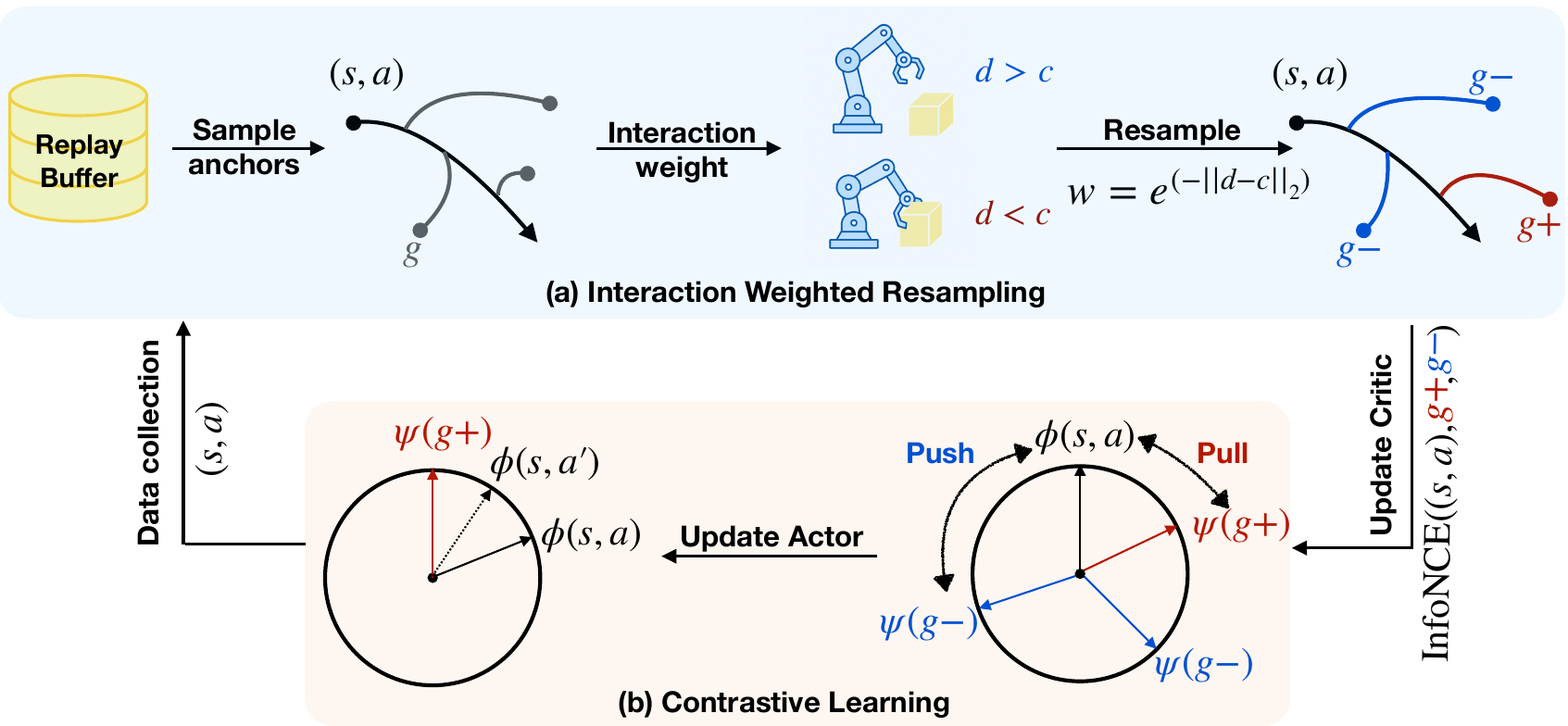}
    \caption{\textbf{Overview of the learning framework}. (a) IWR samples anchor state-action pairs from the replay buffer and reweights candidate future goals by their proximity to the interaction threshold, increasing the chance of selecting interaction-relevant positives. 
(b) The resampled positives and replay-marginal negatives are used in the standard CRL critic update. The learned energy then updates the actor, whose new rollouts are added back to the replay buffer.}
    \label{fig:method}
    \vspace{-.5cm}
\end{figure}
%To address these challenges,  we propose {Interaction-Weighted Resampling (IWR)}, aiming to emphasize contrastive-energy updates on the interaction part.
To address the weak identification of interactions under standard CRL sampling, we propose \emph{Interaction-Weighted Resampling} (IWR). The goal of IWR is to shift the effective contrastive training distribution toward interaction-centered futures, while leaving the CRL critic objective unchanged.

For an anchor transition $x_t=(s_t,a_t)$, we consider a set of candidate future goals $g_k=s_{t+k}$ from the same trajectory. Let $u_{p,t+k}$ and $y_{p,t+k}$ denote the positions of the actuated factor and target factor at the candidate future time $t+k$, respectively. We define their distance as
\begin{equation*}
    d_{t+k}=\|u_{p,t+k}-y_{p,t+k}\|_2 .
\end{equation*}
Let $c$ be a loose interaction threshold, encoding prior knowledge of where interaction is likely to occur, e.g., contact distance between the gripper and object. IWR assigns each candidate a future soft interaction weight:
\begin{equation}
    w_k
    =
    \epsilon
    +
    \exp\left(
        -\frac{||d_{t+k}-c||_2}{2\sigma^2}
    \right),
    \label{eq:iwr}
\end{equation}
where $\epsilon>0$ preserves global coverage and $\sigma$ controls the width of the interaction window. 
%For each sampled transition $(u,y,a)$, denote the position of the actuated factor as $u_p$ and the position of the target factor as $y_p$.
%Let the distance be $d=\|u_p-y_p\|_2$, where $\|\cdot \|_2$ denotes $L_2$ distance obtained from state or visual inputs.
%Denote the interaction-threshold heuristic by $c$, a loose constant that encodes prior knowledge about where interactions are most likely to occur.
%With a small noise term $\epsilon$ and the emphasis power $\sigma$, we calculate the interaction-weighting label $w$ as:
%\begin{equation}
%    w = \epsilon + \frac{1}{\sigma}\exp(-\|d-c\|_2)
%\label{eq:iwr}
%\end{equation}
%Figure~\ref{fig:method} shows the whole learning pipeline of our algorithm.
%We first sample an anchor $(s,a)$ and multiple future goals $g$ from the replay buffer.
%Next, IWR calculates the sampling weight of each future goal using Eq.~\ref{eq:iwr}.
Figure~\ref{fig:method} summarizes the learning pipeline. We first sample anchors \((s,a)\) from the replay buffer and collect candidate future goals from the same trajectory. IWR computes the interaction weights in Eq.~\ref{eq:iwr} and resamples positive futures toward likely interaction regions. The resampled positives are then used in the standard CRL critic objective in Eq.~\ref{eq:sgcrl-critic-objective}. The actor is updated by maximizing the learned goal-reaching energy, and the updated policy collects new transitions into the replay buffer, forming a self-improving loop. Then, the method follows the standard Contrastive RL pipeline: it trains the critic with Eq.~\ref{eq:sgcrl-critic-objective} and updates the actor policy by $\pi = \text{argmax}_a E(s,a,g)$.
Finally, the updated policy collects new transitions in the replay buffer. 
Thus, \methodshort{} focuses the sampling distribution of contrastive updates towards mode-switching regions to improve the modeling of this region. We describe the tighter link between \methodshort{} and Proposition~\ref{prop:endpoint-energy-error} in Appendix~\ref{app:iwr_details}.

%% file: sections/experiments.tex
\section{Experiments}
%\vspace{-0.5em}
\subsection{Simulation Experiment}
%\vspace{-0.5em}
%\begin{figure}[t]
%    \centering
%    \begin{minipage}[t]{0.75\textwidth}
%        \centering
%        \vspace{0pt}
%        \includegraphics[width=\linewidth]{figure/task_set_figure.png}
%        \caption{Task sets used in the simulation experiments.}
%        \label{fig:task_set}
%    \end{minipage}%
%    \begin{minipage}[t]{0.25\textwidth}
%        \centering
%        \vspace{0pt}
%        \includegraphics[width=\linewidth]{figure/air_hockey_setup.png}
%        \caption{Real-world robotic air hockey testbed.}
%        \label{fig:real_world_air_hockey_setup}
%    \end{minipage}
%    \vspace{-1em}
%\end{figure}

\paragraph{Experimental Settings}
We evaluate IWR across three groups of interaction-centric environments: 
(i) \textbf{2D Box2D manipulation.} We test IWR with state inputs on five 2D manipulation tasks in Box2D~\citep{catto2026box2d} (Fig.~\ref{fig:task_set}), where the agent controls the blue ball to move the green ball into the red goal region. We report \emph{success ticks}, defined as the number of ticks for which the green ball remains inside the red circle, normalized by 100 ticks, with each episode lasting 200 ticks. 
(ii) \textbf{Robotic manipulation.} We then evaluate IWR on four manipulation tasks: \textsc{Push}, \textsc{Pick-Place}, \textsc{Sweep-Into}, and \textsc{Peg-Insert}, using success rate as the metric. 
(iii) \textbf{Robot air hockey.} Finally, we test IWR in the Air-Hockey simulator~\citep{chuck2024air} using the velocity-observation mode. The task is to hit a falling puck into a small circular goal of radius $r=0.06$, where the paddle starts on the lower board and strikes the puck toward the goal. We further evaluate performance in a consecutive goal-reaching setting. This setting challenges the trained policy to reach as many goals as possible within one minute. Once a goal is reached, a new goal is sampled. We report both the number of successful goal reaches and the survival rate, which indicates whether the puck avoids falling to the bottom within one minute. For sim-to-real transfer, we add observation noise and switch from velocity observations to history-based observations (Sec.~\ref{subsec:real-robot}).
%We first evaluate IWR with state inputs on five 2D manipulation tasks in Box2D \cite{catto2026box2d} (Fig.~\ref{fig:box2d}), where we control the blue ball to move the green ball into the red circle. The metric is success ticks, which indicates how long the green ball stays in the red circle, normalized by 100 ticks, with an episode length of 200 ticks. Then, we evaluate four manipulation tasks: push, pick-place, sweep-into, and peg-insert. The metric is the success rate. Finally, we test the Air-Hockey simulator \cite{chuck2024air} with the velocity-observation mode. We evaluate success rate, and the goal is a small circle with radius $r=0.06$. The puck falls from the top, and the paddle hits the puck on the lower board to hit the goal. For sim-to-real transfer, we add noise and change the observation mode from velocity to history. See Section~\ref{subsec:real-robot} for sim-to-real transfer.

We compare our method against two groups of baselines.
(i) \textbf{Reward-based RL with sparse rewards.}
We evaluate PPO~\citep{schulman2017proximal} and SAC~\citep{haarnoja2018sac}. For SAC, we also consider variants with a hindsight replay buffer~\citep{andrychowicz2017hindsight} and an interaction replay buffer~\citep{chuck2025null}.
(ii) \textbf{Contrastive RL methods.}
We compare against SGCRL~\citep{liu2025singlegoal} and CRTR~\citep{ziarko2026contrastive}. SGCRL and CRTR differ in whether sampling is repeated $R$ times from a single episode. Since our method belongs to the same family as CRTR, we set $R=8$. CRTR can also be viewed as an ablation of our method with interaction-agnostic uniform sampling.
For the CRL family, after the policy first succeeds at goal reaching, we further evaluate consecutive goal reaching: once a goal is reached, the goal position is changed, and the policy continues acting. We report the average consecutive survival rate, defined by whether the puck avoids falling to the bottom, and the goal-reaching count. Each task is repeated over 5 seeds and details are in Appendix~\ref{app:training_setup}.

\begin{figure*}[t]
    \centering
    \begin{subfigure}{0.33\textwidth}
        \centering
        \includegraphics[width=\linewidth]{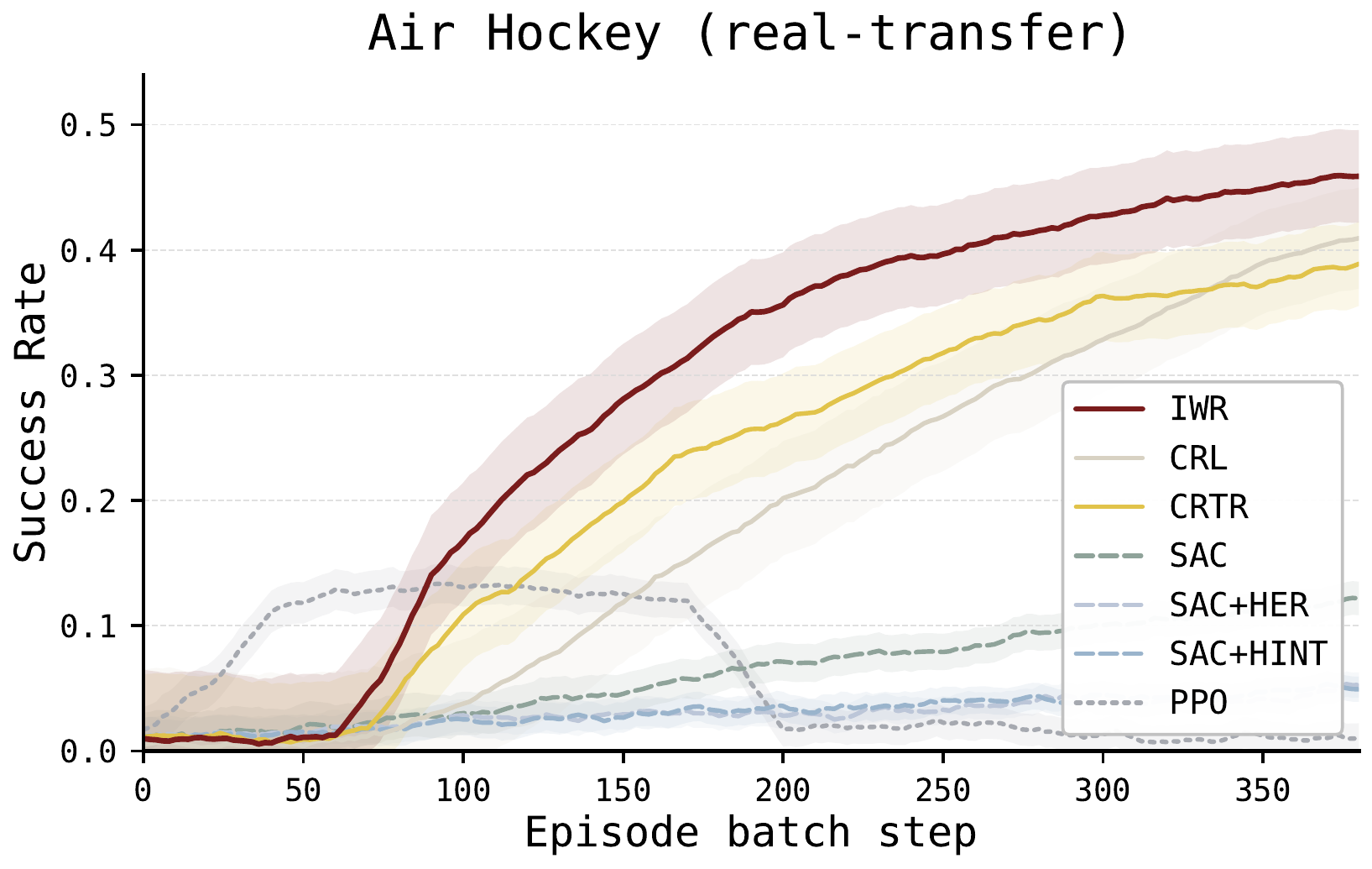}
        \caption{Training curve.}
        \label{fig:air_hockey_real_transfer_curve}
    \end{subfigure}
    \hfill
    \begin{subfigure}{0.41\textwidth}
        \centering
        \includegraphics[width=\linewidth]{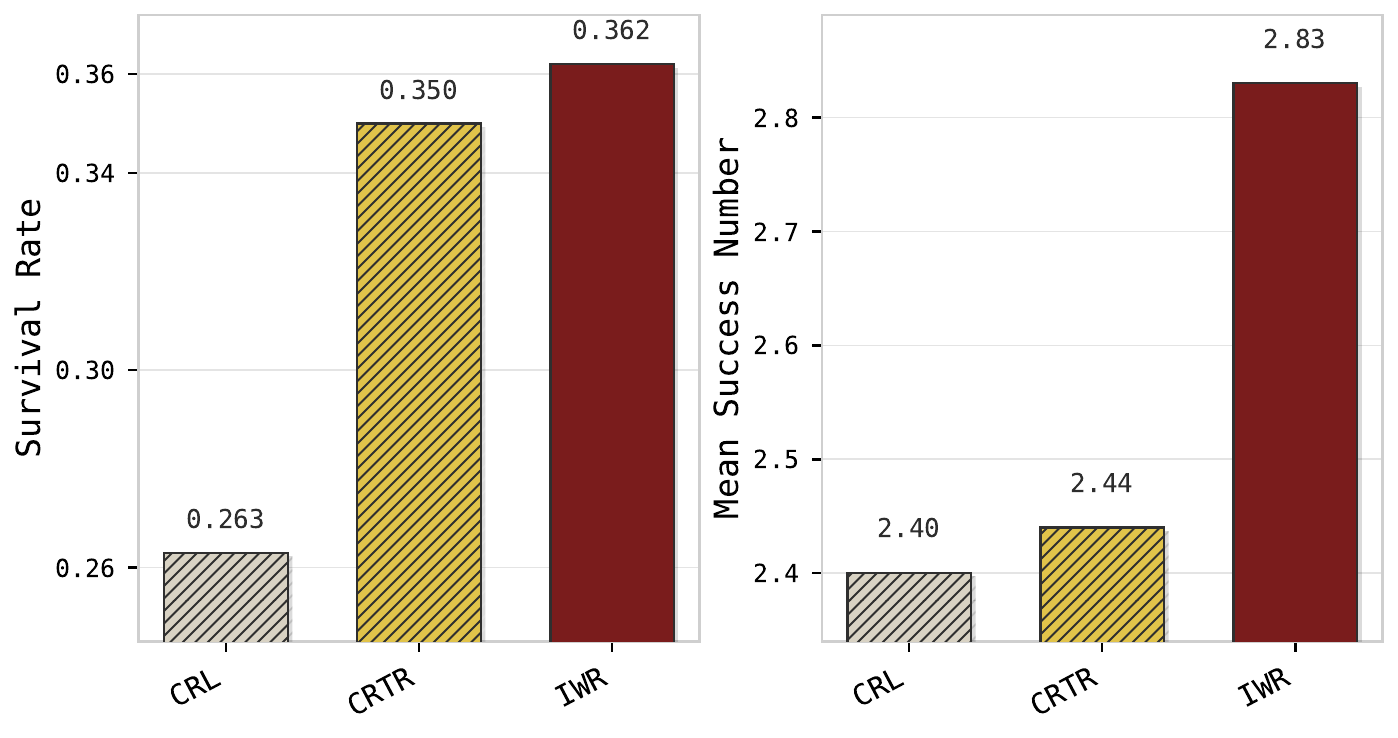}
        \caption{Consecutive evaluation.}
        \label{fig:air_hockey_consecutive_eval}
    \end{subfigure}
    \hfill
    \begin{subfigure}{0.2\textwidth}
        \centering
        \includegraphics[width=\linewidth]{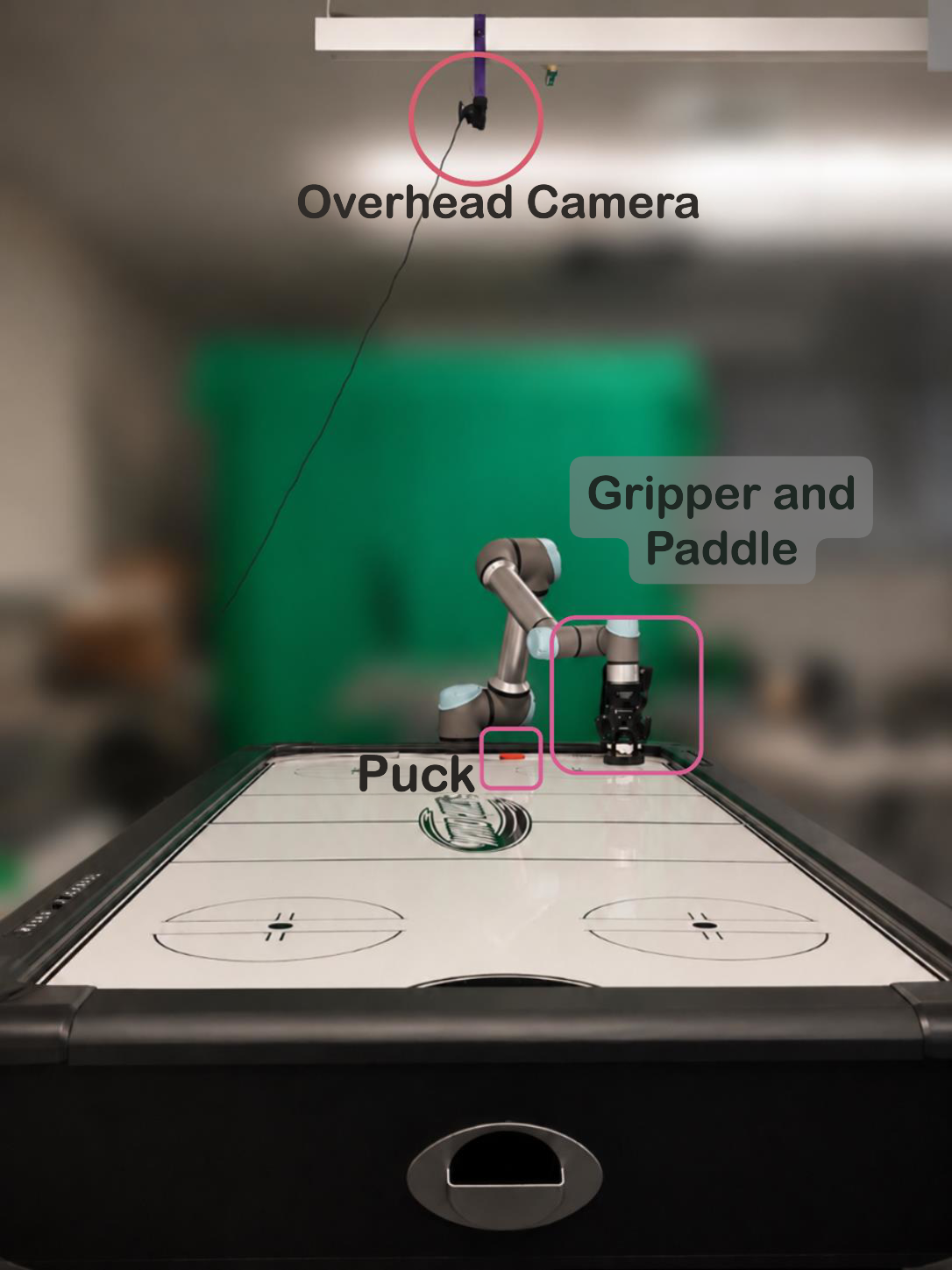}
        \caption{Real robot setup.}
    \label{fig:air_hockey_setup}
    \end{subfigure}
    \caption{
    \textbf{Air hockey real-transfer}.    (a) Training performance in simulation under the real-transfer setting.
    (b) Consecutive evaluation after training in simulator.
    (c) Real robot air hockey setup.
    }
\label{fig:air_hockey_real_transfer}
    \vspace{-0.6cm}
\end{figure*}

\textbf{Results} 
Table~\ref{tab:main1} reports the best average performance of all algorithms. 
The reward-free \textbf{CRL family outperforms the PPO algorithm and SAC family}, even when SAC is augmented with hindsight replay and interaction replay. The reward-based method has no success on MetaWorld and Air Hockey Real Robot. This agrees with previously observed~\citep{eysenbach2021clearning} comparative results.
% efficacy of self-improvement through contrastive representations, which provides better estimation in sparse-reward settings than Bellman-backup-based Q-value estimation.
IWR outperforms prior methods on most tasks, with an average gain of 19.8\%. 
A key pattern is that the gains are \textbf{largest when interaction and goal-reaching signals are sparse}, supporting our hypothesis that IWR amplifies interaction-relevant supervision and learns a better goal-reaching energy landscape.

In \texttt{\textbf{Box2D}}, precise-control tasks require the agent to keep the green ball near the goal center, demanding both exploration and accurate interaction modeling. The hard variants make the objects smaller, further reducing successful interaction signals; here IWR yields larger gains. In the hard-velocity setting, random initial velocities occasionally create early successes, so the relative improvement is smaller but still substantial. This shows that IWR is especially useful when interaction events are rare but necessary for success.
\texttt{\textbf{Metaworld}} shows a similar trend. In \textsc{Push} and \textsc{Sweep-Into}, random behavior can sometimes succeed, and the goal can be reached through multiple strategies, such as pushing or stabbing the object into the goal. In contrast, \textsc{Pick-Place} requires precise long-horizon interaction, since the policy must grasp and carry the object. IWR achieves a much larger gain in this task, suggesting that interaction-weighted sampling extends beyond the simplified one-step setting in Proposition~\ref{prop:endpoint-energy-error} to hybrid dynamics with repeated interaction propagation, e.g., terms of the form $A_1^k$.
For \texttt{\textbf{air hockey}}, the single-goal success-rate gain is modest, but Fig.~\ref{fig:air_hockey_real_transfer_curve} shows great improvements in real-transfer setting, which includes noise and discontinuous observations. This indicates IWR would be consistent when the environment is noisy, and the learned policy and critic is more generalizable with higher sample coverage near interaction.

Fig.~\ref{fig:air_hockey_consecutive_eval} shows the evaluation on consecutive goal-hitting evaluation. This indicates that IWR improves not only isolated successes, but also the robustness and precision of interaction control. The effect is even more pronounced in the real-world setting, as discussed in Sec.~\ref{subsec:real-robot}.

\input{table/table1}
% \input{table/table2}
%\vspace{-0.8em}
\subsection{Real-Robot Experiment}
%\vspace{-0.8em}
\label{subsec:real-robot}
\paragraph{Experimental Settings} In this work, we utilize the setting of \citet{chuck2024robot} as seen in Fig.~\ref{fig:real_world_air_hockey_setup} with a sim-to-real pipeline with domain randomization to transfer policies from the 2D simulator to a UR5e robot on one end of an air hockey table (see Fig.~\ref{fig:paired_sim_real}). The state space is unified between both 2D and 3D through puck detection and coordinate frame transformation. No training is performed on the real robot setup---all policies are transferred sim-to-real, and evaluated on a grid of $4\times5$ goals. Additional robot details, especially the sim-to-real pipeline, can be found in Appendix~\ref{app:real_setup}.
\vspace{-0.8em}
\paragraph{Results} As we see in Table~\ref{tab:main1}, the success rate for \methodshort{} significantly exceeds that of the other methods ($140\%$ more success), even though the gap in simulation (Air Hockey real-transfer) is significantly smaller ($5\%$ increase). We believe this is because when transferring sim-to-real, the qualitative aspects of the policy, that is, how it induces interactions, make a larger difference. Because \methodshort{} emphasizes greater representation capacity for interactions, it makes contact with the puck more consistently, and with greater upward force. This results in more overall strikes, which significantly boosts the performance. Other comparable methods make contact with the puck only once, resulting in unrecoverable failures. Nonetheless, the performance improvement of \methodshort{} is substantial, suggesting that by focusing on interactions, it learns a more robust policy. This is qualitatively visible in the visualizations of the policy performance found in Appendix Figures~\ref{fig:crtr-paired}-\ref{fig:sgcrl-render-paired}.

\begin{figure}
    \centering
    \vspace{-1em}
    \includegraphics[width=0.98\textwidth]{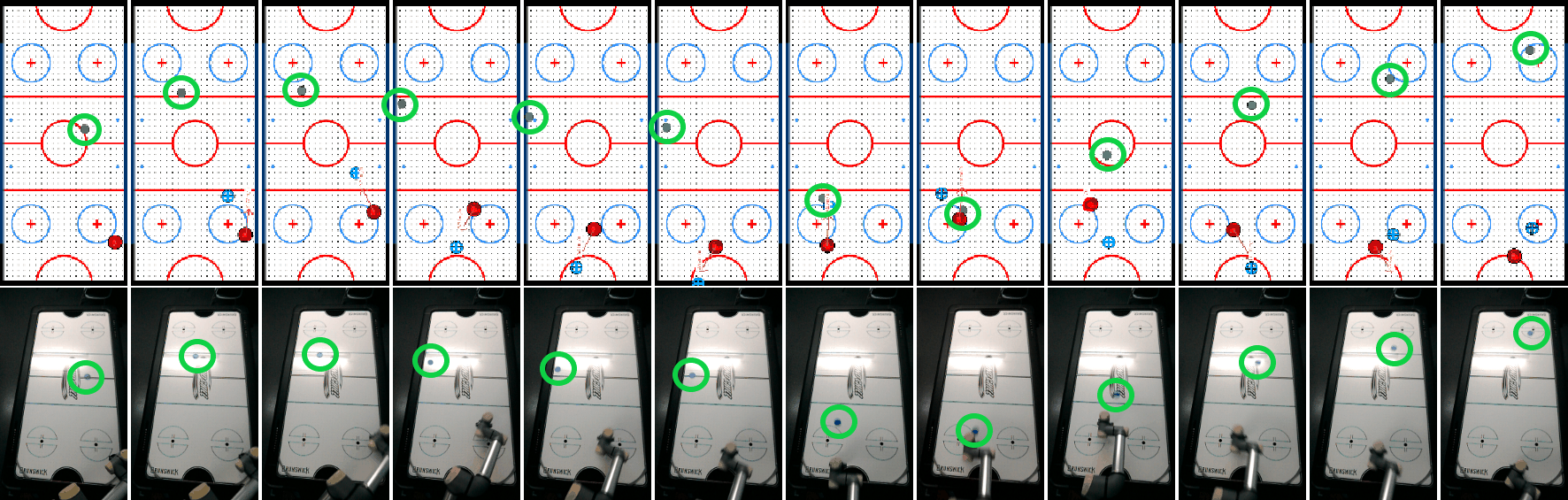}
    \caption{
    2D sim paired with corresponding real frames. Puck emphasized with green circle.
    }
    \label{fig:paired_sim_real}
    \vspace{-1em}
\end{figure}

%% file: table/table1.tex
\definecolor{ourscol}{RGB}{255,238,238}
\definecolor{gainrow}{RGB}{255,248,246}
\definecolor{strongcell}{RGB}{255,220,220}

\begin{table*}[t]
\centering
\small
\setlength{\tabcolsep}{4pt}
\caption{
Success rates over Air Hockey, Box2D, and MetaWorld tasks.
Light red rows highlight interaction-centric tasks where IWR provides larger gains.
}
\label{tab:main1}
\resizebox{\textwidth}{!}{%
\begin{tabular}{lrrrrrrr}
\toprule
Task & PPO & SAC & SAC+HER & SAC+HINT & CRL & CRTR & IWR (Ours) \\
\midrule
Air Hockey (Simulation) & 0.617 & 0.145 & 0.398 & 0.422 & 0.695 & 0.727 & \cellcolor{ourscol}\textbf{0.742 (+2.1\%)} \\
Air Hockey (real-transfer) & 0.160 & 0.215 & 0.129 & 0.125 & 0.477 & 0.465 & \cellcolor{ourscol}\textbf{0.500 (+4.8\%)} \\
\rowcolor{gainrow}
Air Hockey (Real Robot) & 0/20 & 0/20 & 0/20 & 0/20 & 5/20 & 2/20 & \cellcolor{strongcell}\textbf{12/20 (+140.0\%)} \\

\midrule
Box2D (center) & 0.086 & 0.058 & 0.088 & 0.088 & 0.278 & 0.274 & \cellcolor{ourscol}\textbf{0.288 (+3.6\%)} \\
Box2D (goal) & 0.089 & 0.046 & 0.086 & 0.064 & 0.450 & 0.558 & \cellcolor{ourscol}\textbf{0.709 (+27.1\%)} \\
\rowcolor{gainrow}
Box2D (hard) & 0.060 & 0.042 & 0.064 & 0.076 & 0.317 & 0.365 & \cellcolor{strongcell}\textbf{0.565 (+54.8\%)} \\
Box2D (hard velocity) & 0.148 & 0.149 & 0.152 & 0.139 & 0.387 & 0.377 & \cellcolor{ourscol}\textbf{0.436 (+12.7\%)} \\
Box2D (maze) & 0.033 & 0.012 & 0.031 & 0.035 & 0.217 & 0.206 & \cellcolor{ourscol}\textbf{0.223 (+2.8\%)} \\

\midrule
MetaWorld (peg insert) & 0.000 & 0.000 & 0.000 & 0.000 & 0.430 & 0.367 & \cellcolor{ourscol}\textbf{0.438 (+1.9\%)} \\
\rowcolor{gainrow}
MetaWorld (pick place) & 0.000 & 0.000 & 0.004 & 0.000 & 0.266 & 0.305 & \cellcolor{strongcell}\textbf{0.570 (+86.9\%)} \\
MetaWorld (push) & 0.000 & 0.000 & 0.004 & 0.000 & 0.699 & \textbf{0.750} & \cellcolor{ourscol}0.730 \\
MetaWorld (sweep into) & 0.000 & 0.004 & 0.020 & 0.004 & 0.805 & 0.910 & \cellcolor{ourscol}\textbf{0.926 (+1.8\%)} \\

\midrule
Average IWR improvement &  &  &  &  &  &  & \cellcolor{strongcell}\textbf{+19.8\%} \\
\bottomrule
\end{tabular}%
}
\end{table*}

%% file: sections/conclusion.tex
%\vspace{-1em}
\section{Conclusion and Limitations}
%\vspace{-1em}
By formulating manipulation as mode-conditioned dynamics, we illustrate why Contrastive Reinforcement Learning will struggle in manipulation. \methodname{} offers a practical, simple solution that provides an inductive bias towards this complexity, resulting in improved performance across interaction-rich tasks---especially those with rarer or more complex interactions, such as hard dynamic ball interactions, pick-and-place, and real-world air hockey. As robot learning moves towards more complex settings, handling this complexity will become increasingly crucial.

\textbf{Limitations:} Despite the promising success of \methodname{}, it remains an incomplete initial step toward leveraging the insight of mode-conditioned dynamics in robot learning. First, \methodshort{} still represents multimodal manipulation dynamics with a unimodal model---it only changes sampling. Second, CRL is one form of unsupervised RL, with limited representational power, but \methodshort{} can be applied to other unsupervised RL methods. Third, \methodshort{} assumes a factorized state and heuristic interaction indicators---integrating \methodshort{} with an end-to-end perception system is key to future systems. Fourth, CRL incurs a high sample-efficiency cost, preventing it from real-world application---our failures stem from the sim-to-real gap. Thus, future directions involve both algorithmic modifications and better robot systems for transferring core knowledge to the agent.

%% file: appendix/theory.tex
\section{Theory Details for Proposition~\ref{prop:endpoint-energy-error}}
\label{app:theory}
\label{app:one-reset-error}

\newtheorem{corollary}{Corollary}[section]

\subsection{From Factored Dynamics to Local Affine Representation Dynamics}
\label{subsec:local-affine-dynamics}

We first justify the switched affine representation model used in the main paper.
The key point is not that manipulation dynamics are globally linear.
Rather, the claim is that after factorization, the local dependency structure of the target factor changes across passive and interaction modes.
This change forces us to distinguish the passive operator $A_0$ from the interaction operator $A_1$, and to include an interaction-dependent affine term $b_t$.

Let the state be factored as $s_t=(u_t,y_t)$, where $u_t$ is the actuated factor and $y_t$ is the target factor.
We write the target representation as
\[
    \psi_t \triangleq \psi(y_t).
\]
The CRL critic conditions on the anchor through $\phi(s_t,a_t)=\phi(u_t,y_t,a_t)$.
If the action is produced by a deterministic policy, $a_t=\pi(u_t,y_t,g)$, then the same argument can be viewed as conditioning on $(u_t,y_t)$ under the current policy.
Thus, for the theory, we only need to study how the anchor variables induce the next target representation $\psi_{t+1}$.

\paragraph{Dependency contrast.}
In locomotion-like domains, the controlled factor and the goal factor are typically the same state variable.
If this factor is denoted by $z_t$, the local dynamics have the form
\[
    z_{t+1}=f_{\rm loc}(z_t,a_t),
\]
and the action affects the goal-relevant state almost everywhere:
\[
    \left\|
    \frac{\partial f_{\rm loc}(z_t,a_t)}{\partial a_t}
    \right\|
    >0
    \quad
    \text{for a.e. } (z_t,a_t).
\]
Under a fixed policy or fixed anchor action, this gives a single smooth closed-loop transition map.
This is the setting in which a single local temporal operator is a natural approximation for the future representation, consistent with the linear-Gaussian view of temporal contrastive representations~\citep{eysenbach2024inference}.

Manipulation has a different dependency structure.
The target factor is often not directly actuated.
Using the notation from the main paper, the target dynamics are
\[
y_{t+1}
=
\begin{cases}
f_0(y_t), & \omega_t=0,\\
f_1(u_t,y_t,a_t), & \omega_t=1,
\end{cases}
\]
where $\omega_t=0$ denotes passive target dynamics and $\omega_t=1$ denotes an interaction.
Within each smooth mode, the action-to-target derivative is therefore gated by the interaction indicator:
\[
    \frac{\partial y_{t+1}}{\partial a_t}
    =
    \begin{cases}
    0, & \omega_t=0,\\
    \frac{\partial f_1(u_t,y_t,a_t)}{\partial a_t}, & \omega_t=1.
    \end{cases}
\]
Thus, action influence on the target is absent in the passive mode and appears only during interaction.
This is a structural property of manipulation, not a choice of parameterization.

\begin{assumption}[Mode-dependent target controllability]
\label{ass:mode-dependent-controllability}
Let
\[
    H_0(y) \triangleq \psi(f_0(y)),
    \qquad
    H_1(u,y,a) \triangleq \psi(f_1(u,y,a)).
\]
We assume that $H_0$ and $H_1$ are differentiable in the local regions considered.
In the passive mode, the represented target update has no direct action dependence:
\[
    \frac{\partial H_0(y)}{\partial a}=0.
\]
In the interaction mode, there exists a local interaction point
$(\bar u,\bar y,\bar a)$ and an action direction $v$ such that
\[
    D_a H_1(\bar u,\bar y,\bar a)v \neq 0.
\]
\end{assumption}

Assumption~\ref{ass:mode-dependent-controllability} implies that a target-only predictor is appropriate in the passive mode, but insufficient in the interaction mode.
The reason is that two anchors can share the same target state $y_t$ while differing in the actuated factor or action.
During interaction, these two anchors can produce different next target states.
A predictor depending only on $\psi_t$ would assign the same next representation to both anchors, and therefore cannot represent the interaction update.

\begin{lemma}[Need for an interaction-dependent residual]
\label{lem:need-affine-residual}
Suppose Assumption~\ref{ass:mode-dependent-controllability} holds.
In the passive mode, a first-order local predictor of the represented target update can be written as a target-only linear predictor.
In the interaction mode, no target-only linear predictor of the form $A\psi(y)$ can approximate the represented target update to first order in the interaction context.
Therefore, the interaction-mode local predictor requires an additional context-dependent residual, yielding the affine form
\[
    \psi_{t+1}
    =
    A_1\psi_t+b_t .
\]
\end{lemma}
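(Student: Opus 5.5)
The plan is to work by first-order Taylor expansion around a local base point in each mode and compare the Jacobian structure of the represented update with that of a target-only linear predictor.

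\emph{Passive mode.} Fix a base point $\bar y$. Assumption~\ref{ass:mode-dependent-controllability} gives $\partial H_0/\partial a=0$; likewise $H_0$ has no $u$-argument at all. Differentiability of $H_0$ then yields
\[
H_0(y)=H_0(\bar y)+D_yH_0(\bar y)(y-\bar y)+o(\|y-\bar y\|).
\]
We need this in terms of $\psi_t=\psi(y)$. Assume, as the local-region hypothesis suggests, that $\psi$ is a local diffeomorphism onto its image near $\bar y$; if not, restrict to the image and use a left inverse of $D\psi$. Then the linear part becomes $A_0(\psi_t-\bar\psi)$ with
\[
A_0=D_yH_0(\bar y)\,\bigl(D\psi(\bar y)\bigr)^{-1}.
\]
Absorbing the constant $H_0(\bar y)-A_0\bar\psi$ by centering, or into the homogeneous coordinate, gives the target-only linear predictor $\psi_{t+1}\approx A_0\psi_t$. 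This half is routine.

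\emph{Interaction mode.} Argue by contradiction. Suppose some target-only $A$ satisfies
\[
H_1(u,y,a)=A\psi(y)+c+o(\|(u,y,a)-(\bar u,\bar y,\bar a)\|)
\]
near the interaction point. Fix $y=\bar y$ and $u=\bar u$, and vary $a=\bar a+hv$. The right-hand side is then constant in $h$ up to $o(h)$. The left-hand side equals
\[
H_1(\bar u,\bar y,\bar a)+h\,D_aH_1(\bar u,\bar y,\bar a)v+o(h).
\]
Dividing by $h$ and letting $h\to 0$ gives $D_aH_1 v=0$, which contradicts the assumption. The same argument shows that two anchors with identical $\psi_t$ but different $a$ (or $u$) have next representations differing at first order. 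No map of $\psi_t$ alone can reproduce this, so any first-order-correct predictor must carry a term depending on $(u,a)$.

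\emph{Constructing the residual.} Expand $H_1$ jointly:
\[
H_1\approx H_1(\bar x)+D_yH_1(y-\bar y)+D_uH_1(u-\bar u)+D_aH_1(a-\bar a).
\]
Define $A_1=D_yH_1(\bar x)\,(D\psi(\bar y))^{-1}$, and collect the constant together with the $u$- and $a$-terms into
\[
b_t=B(u_t,y_t,a_t)\triangleq H_1(\bar x)-A_1\bar\psi+D_uH_1(u_t-\bar u)+D_aH_1(a_t-\bar a).
\]
This gives $\psi_{t+1}=A_1\psi_t+b_t$ to first order. By the contradiction step, $b_t$ is nonconstant along $v$, hence genuinely context-dependent.

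The main obstacle is making ``target-only linear predictor'' precise enough that the contradiction is airtight. One issue is whether an affine constant is allowed; it is harmless, since it does not vary with $a$. The other is transporting Jacobians between $y$ and $\psi(y)$ when $\psi$ is not invertible. I would handle the latter by noting that the contradiction only uses variation in $a$ at fixed $y$, so invertibility of $\psi$ is needed only to write $A_0$ and $A_1$ explicitly, not for the impossibility claim.
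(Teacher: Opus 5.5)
Your proposal is correct and follows essentially the same route as the paper: a first-order Taylor expansion in the passive mode, a contradiction obtained by varying $a=\bar a+hv$ at fixed $(\bar u,\bar y)$ to force $D_aH_1(\bar u,\bar y,\bar a)v=0$, and the same residual $b_t$ assembled from the constant, $u$-, and $a$-terms of the joint expansion. Your explicit choice $A_0=D_yH_0(\bar y)\,L$, with $L$ a left inverse of $D\psi(\bar y)$ (which needs $D\psi(\bar y)$ injective), and your allowance of an affine intercept in the contradiction hypothesis only make precise steps the paper leaves implicit; they do not change the argument.
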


\begin{proof}
We prove the two claims separately.

\textbf{Passive mode.}
In the passive mode,
\[
    \psi_{t+1}
    =
    H_0(y_t)
    =
    \psi(f_0(y_t)).
\]
Since $H_0$ depends only on $y_t$, its first-order local expansion in representation coordinates around $\bar y$ has the form
\[
    H_0(y)
    =
    H_0(\bar y)
    +
    A_0\bigl(\psi(y)-\psi(\bar y)\bigr)
    +
    o\!\left(\|\psi(y)-\psi(\bar y)\|\right).
\]
After centering the local coordinates, or equivalently absorbing the affine intercept into an augmented representation coordinate, this becomes
\[
    \psi_{t+1}
    =
    A_0\psi_t
    +
    o\!\left(\|\psi_t-\psi(\bar y)\|\right).
\]
Thus, a target-only linear predictor is sufficient to first order in the passive mode.

\textbf{Interaction mode.}
Assume for contradiction that a target-only linear predictor is first-order sufficient in the interaction mode.
Then there exists a matrix $A$ such that, for all sufficiently small $h$,
\[
    H_1(\bar u,\bar y,\bar a+hv)
    =
    A\psi(\bar y)
    +
    o(|h|).
\]
Evaluating the same expression at $h=0$ gives
\[
    H_1(\bar u,\bar y,\bar a)
    =
    A\psi(\bar y).
\]
Subtracting the two equations yields
\[
    H_1(\bar u,\bar y,\bar a+hv)
    -
    H_1(\bar u,\bar y,\bar a)
    =
    o(|h|).
\]
Dividing by $h$ and taking $h\rightarrow 0$ gives
\[
    D_aH_1(\bar u,\bar y,\bar a)v
    =
    0.
\]
This contradicts Assumption~\ref{ass:mode-dependent-controllability}, which states that
\[
    D_aH_1(\bar u,\bar y,\bar a)v \neq 0.
\]
Therefore, no target-only linear predictor can be first-order sufficient in the interaction mode.

It remains to show the affine form.
The first-order Taylor expansion of $H_1$ around $(\bar u,\bar y,\bar a)$ is
\[
    H_1(u,y,a)
    =
    H_1(\bar u,\bar y,\bar a)
    +
    A_1\bigl(\psi(y)-\psi(\bar y)\bigr)
    +
    D_uH_1(\bar u,\bar y,\bar a)(u-\bar u)
    +
    D_aH_1(\bar u,\bar y,\bar a)(a-\bar a)
    +
    o(\Delta),
\]
where
\[
    \Delta
    =
    \|\psi(y)-\psi(\bar y)\|
    +
    \|u-\bar u\|
    +
    \|a-\bar a\|.
\]
Rearranging the first-order terms gives
\[
    H_1(u,y,a)
    =
    A_1\psi(y)
    +
    b(u,y,a)
    +
    o(\Delta),
\]
where
\[
    b(u,y,a)
    =
    H_1(\bar u,\bar y,\bar a)
    -
    A_1\psi(\bar y)
    +
    D_uH_1(\bar u,\bar y,\bar a)(u-\bar u)
    +
    D_aH_1(\bar u,\bar y,\bar a)(a-\bar a).
\]
At time $t$, define
\[
    b_t \triangleq b(u_t,y_t,a_t).
\]
Therefore, the interaction-mode local predictor has the affine form
\[
    \psi_{t+1}
    =
    A_1\psi_t+b_t
    +
    o(\Delta).
\]
Dropping the higher-order local error gives the switched affine approximation used in the main analysis.
\end{proof}

% \begin{lemma}[Need for an interaction-dependent residual]
% \label{lem:need-affine-residual}
% Suppose Assumption~\ref{ass:mode-dependent-controllability} holds.
% In the passive mode, a local target predictor can be written as a target-only linear predictor.
% In the interaction mode, a target-only linear predictor is generally insufficient; an additional term depending on the interaction context is required.
% \end{lemma}

% \begin{proof}
% In the passive mode, $y_{t+1}=f_0(y_t)$.
% Therefore, after applying the target representation, the conditional next representation depends locally only on $\psi_t$.
% A local linear regression in representation space can therefore be written as
% \[
%     \psi_{t+1}\approx A_0\psi_t.
% \]

% In the interaction mode, $y_{t+1}=f_1(u_t,y_t,a_t)$.
% By Assumption~\ref{ass:mode-dependent-controllability}, changing $a_t$ while keeping $y_t$ fixed can change the next target state.
% Since $\psi$ is locally non-degenerate on goal-relevant target coordinates, this can also change $\psi_{t+1}$.
% Thus, two anchors with the same $\psi_t$ but different interaction contexts can require different predictions for $\psi_{t+1}$.
% No predictor of the form $A\psi_t$ can distinguish these two cases, because its input is identical.
% Hence, the interaction-mode predictor must include an additional interaction-dependent residual.
% \end{proof}

We now define the local affine approximation used in the proof.
The operator $A_0$ is the local linear predictor for passive target evolution.
The operator $A_1$ is the target-state coefficient in the interaction mode.
The residual $b_t$ is the part of the interaction update induced by the actuated factor and action after accounting for the target-state term.

\begin{assumption}[Mode-wise local affine representation dynamics]
\label{ass:mode-wise-linear}
Along the local trajectory segment analyzed in Appendix~\ref{app:one-reset-error}, the conditional mean dynamics of the target representation satisfy
\[
    \psi_{t+1}
    =
    A_0\psi_t,
    \qquad
    \omega_t=0,
\]
and
\[
    \psi_{t+1}
    =
    A_1\psi_t+b_t,
    \qquad
    \omega_t=1.
\]
Equivalently, the two cases can be written as the switched affine model
\[
    \psi_{t+1}
    =
    A_{\omega_t}\psi_t+\omega_t b_t.
\]
Here $b_t$ is zero outside interaction because it is multiplied by $\omega_t$.
\end{assumption}

This assumption is a local regression statement in representation space.
It does not require the original manipulation dynamics to be globally linear.
Instead, it says that near a trajectory segment, the passive target update is approximated by a target-only linear operator, while the interaction update requires a separate local operator plus an interaction-dependent residual.
This is the representation-level form of the dependency change in the factored dynamics.
The one-reset analysis below studies how an error in this residual is propagated to later endpoint energies.

\subsection{One-Reset Simplification}

We now formalize the one-reset setting used in Proposition~\ref{prop:endpoint-energy-error}.
Let $\tau$ denote the first future interaction offset after anchor time $t$.
The one-reset mode sequence is
\[
    \underbrace{0,\ldots,0}_{\tau\ \text{passive steps}},
    1,
    \underbrace{0,\ldots,0}_{k\ \text{passive steps}},
\]
where $k$ is the number of passive steps after the interaction.
Thus, the endpoint is at time $t+\tau+1+k$.

\begin{definition}[Interaction bridge shift]
\label{def:bridge-shift}
The interaction bridge shift is the difference between the actual interaction update and the passive
update that would have occurred without interaction:
\begin{equation}
    \delta_\tau
    \triangleq
    (A_1-A_0)\psi_{t+\tau}+b_{t+\tau}.
    \label{eq:bridge-shift-appendix}
\end{equation}
Equivalently,
\begin{equation}
    A_1\psi_{t+\tau}+b_{t+\tau}
    =
    A_0\psi_{t+\tau}
    +
    \delta_\tau .
    \label{eq:bridge-as-passive-plus-shift}
\end{equation}
\end{definition}

The bridge shift isolates the effect of the interaction itself.
Before the bridge, the target evolves passively.
At the bridge, the target receives the additional shift $\delta_\tau$.
After the bridge, this shift is carried forward by passive dynamics.

\begin{definition}[Bridge estimation error]
\label{def:bridge-error}
Let $\widehat{\delta}_\tau$ be the bridge shift represented by the learned energy model.
The local interaction-bridge estimation error is
\begin{equation}
    e
    \triangleq
    \widehat{\delta}_\tau-\delta_\tau .
    \label{eq:bridge-error-appendix}
\end{equation}
\end{definition}

\subsection{Future Mean Under One Interaction}

\begin{lemma}[One-reset future mean]
\label{lem:one-reset-future-mean}
Under Assumption~\ref{ass:mode-wise-linear} and the one-reset setting, the target representation $k$ passive steps after
the interaction has mean
\begin{equation}
    \mu_k
    =
    A_0^{\tau+1+k}\psi_t
    +
    A_0^k\delta_\tau .
    \label{eq:one-reset-future-mean}
\end{equation}
\end{lemma}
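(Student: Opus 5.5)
The proof is a direct unrolling of the switched affine recursion in Assumption~\ref{ass:mode-wise-linear} along the one-reset mode sequence. Because the assumption is stated for conditional means, I work entirely with the mean dynamics. Any zero-mean noise $\epsilon_t$ would drop out under expectation anyway, so I treat the recursion as deterministic in $\psi$. I split the trajectory into three segments and handle each in turn.

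\textbf{Pre-interaction segment.} During the $\tau$ passive steps we have $\omega_{t+j}=0$ for $j=0,\ldots,\tau-1$. A short induction on $j$ using $\psi_{t+j+1}=A_0\psi_{t+j}$ gives
\[
\psi_{t+\tau}=A_0^{\tau}\psi_t .
\]

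\textbf{Bridge step.} At time $t+\tau$ we have $\omega=1$, so $\psi_{t+\tau+1}=A_1\psi_{t+\tau}+b_{t+\tau}$. By Definition~\ref{def:bridge-shift}, specifically the identity \eqref{eq:bridge-as-passive-plus-shift}, this equals $A_0\psi_{t+\tau}+\delta_\tau$. Substituting the pre-interaction result gives
\[
\psi_{t+\tau+1}=A_0^{\tau+1}\psi_t+\delta_\tau .
\]
Here $\delta_\tau$ is treated as fixed, since it is determined by $\psi_{t+\tau}$ and $b_{t+\tau}$ at the bridge.

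\textbf{Post-interaction segment.} For the $k$ further passive steps, a second induction on the number of steps $m\le k$ shows that $\psi_{t+\tau+1+m}=A_0^m\psi_{t+\tau+1}$. This holds by linearity of $A_0$ applied repeatedly. Setting $m=k$ and distributing $A_0^k$ over the sum yields
\[
\mu_k=A_0^{\tau+1+k}\psi_t+A_0^k\delta_\tau ,
\]
which is \eqref{eq:one-reset-future-mean}.

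The only potential obstacle is interpretive rather than computational. One must make sure that ``mean'' is consistent with Assumption~\ref{ass:mode-wise-linear} being a statement about conditional means. If noise is present, the recursion on means remains linear, so expectations pass through $A_0$ and $A_1$. If $b_{t+\tau}$ is random, $\delta_\tau$ should be read as its conditional expectation given the anchor. I would state this convention explicitly and then note that every step above is linear, so the argument goes through unchanged.
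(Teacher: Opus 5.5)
Your proposal is correct and follows the same approach as the paper's proof. Both unroll $\tau$ passive steps to $A_0^\tau\psi_t$, rewrite the bridge step as $A_0\psi_{t+\tau}+\delta_\tau$ using the shift identity, apply $A_0^k$ for the post-interaction segment, and substitute. Your remark about reading the recursion as conditional-mean dynamics is a harmless clarification that the paper leaves implicit.
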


\begin{proof}
Before the interaction, the target follows passive dynamics for $\tau$ steps:
\begin{equation}
    \psi_{t+\tau}
    =
    A_0^\tau\psi_t .
    \label{eq:pre-interaction-passive}
\end{equation}
At the interaction step, Definition~\ref{def:bridge-shift} gives
\begin{equation}
    \psi_{t+\tau+1}
    =
    A_0\psi_{t+\tau}
    +
    \delta_\tau .
    \label{eq:interaction-update-with-shift}
\end{equation}
After the interaction, the target follows passive dynamics for $k$ more steps:
\begin{equation}
    \psi_{t+\tau+1+k}
    =
    A_0^k
    \left(
        A_0\psi_{t+\tau}
        +
        \delta_\tau
    \right).
    \label{eq:post-interaction-passive}
\end{equation}
Substituting Eq.~\eqref{eq:pre-interaction-passive} into Eq.~\eqref{eq:post-interaction-passive}
yields
\begin{align}
    \psi_{t+\tau+1+k}
    &=
    A_0^k
    \left(
        A_0A_0^\tau\psi_t
        +
        \delta_\tau
    \right) \\
    &=
    A_0^{\tau+1+k}\psi_t
    +
    A_0^k\delta_\tau .
\end{align}
This proves Eq.~\eqref{eq:one-reset-future-mean}.
\end{proof}

Lemma~\ref{lem:one-reset-future-mean} shows the key structure: the interaction bridge shift is
not observed at the endpoint as $\delta_\tau$, but as the propagated quantity $A_0^k\delta_\tau$.

\subsection{Propagation of Bridge Error}

\begin{lemma}[Propagated bridge error]
\label{lem:propagated-bridge-error}
If the learned bridge shift has error $e=\widehat{\delta}_\tau-\delta_\tau$, then the induced endpoint
mean error after $k$ passive steps is
\begin{equation}
    \widehat{\mu}_k-\mu_k
    =
    A_0^k e .
    \label{eq:future-mean-error}
\end{equation}
Consequently,
\begin{equation}
    \|\widehat{\mu}_k-\mu_k\|
    \le
    \|A_0^k\|\|e\|.
    \label{eq:future-mean-error-bound}
\end{equation}
\end{lemma}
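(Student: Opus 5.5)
The plan is to redo the derivation of Lemma~\ref{lem:one-reset-future-mean}, this time tracking the learned bridge shift $\widehat{\delta}_\tau$ instead of the true $\delta_\tau$. The key idea is that the learned model is assumed to share the true passive operator $A_0$ and the true pre-interaction trajectory. It may differ only in the bridge shift it represents, as in Definition~\ref{def:bridge-error}. Under this convention the learned endpoint mean is obtained from the same recursion: $\tau$ passive steps, then one bridge step $A_0\psi_{t+\tau}+\widehat{\delta}_\tau$, then $k$ passive steps.

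Concretely, I will first write out
\[
    \widehat{\mu}_k = A_0^{\tau+1+k}\psi_t + A_0^k\widehat{\delta}_\tau.
\]
This follows verbatim from the proof of Lemma~\ref{lem:one-reset-future-mean}, replacing Eq.~\eqref{eq:interaction-update-with-shift} with its learned counterpart. Subtracting Eq.~\eqref{eq:one-reset-future-mean}, the common term $A_0^{\tau+1+k}\psi_t$ cancels, and linearity of $A_0^k$ gives
\[
    \widehat{\mu}_k-\mu_k = A_0^k\bigl(\widehat{\delta}_\tau-\delta_\tau\bigr)=A_0^k e,
\]
which is Eq.~\eqref{eq:future-mean-error}. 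The bound Eq.~\eqref{eq:future-mean-error-bound} then follows from submultiplicativity of the induced operator norm, $\|A_0^k e\|\le\|A_0^k\|\|e\|$.

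The only step needing care is the modeling convention in the first step. I must state explicitly that errors in $A_0$ and in the pre-bridge passive evolution are excluded, so that the bridge error is the sole source of discrepancy. Without this convention, extra terms such as $(\widehat{A}_0^k-A_0^k)$ would appear. Once that is fixed, the rest is a one-line linear-algebra identity, and no genuine obstacle remains.
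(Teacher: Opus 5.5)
Your proposal is correct and follows the paper's proof: the paper also writes $\widehat{\mu}_k = A_0^{\tau+1+k}\psi_t + A_0^k\widehat{\delta}_\tau$ by reusing Lemma~\ref{lem:one-reset-future-mean} with the learned shift, subtracts so the common term cancels, and then applies submultiplicativity. Your remark that the learned model must share $A_0$ and the pre-bridge passive evolution states explicitly a convention the paper leaves implicit.
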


\begin{proof}
By Lemma~\ref{lem:one-reset-future-mean}, the true endpoint mean is
\begin{equation}
    \mu_k
    =
    A_0^{\tau+1+k}\psi_t
    +
    A_0^k\delta_\tau .
    \label{eq:true-endpoint-mean}
\end{equation}
Using the learned bridge shift $\widehat{\delta}_\tau$ gives
\begin{equation}
    \widehat{\mu}_k
    =
    A_0^{\tau+1+k}\psi_t
    +
    A_0^k\widehat{\delta}_\tau .
    \label{eq:learned-endpoint-mean}
\end{equation}
Subtracting Eq.~\eqref{eq:true-endpoint-mean} from Eq.~\eqref{eq:learned-endpoint-mean} gives
\begin{align}
    \widehat{\mu}_k-\mu_k
    &=
    A_0^k
    \left(
        \widehat{\delta}_\tau-\delta_\tau
    \right) \\
    &=
    A_0^k e .
\end{align}
Taking norms and using submultiplicativity gives Eq.~\eqref{eq:future-mean-error-bound}.
\end{proof}

\subsection{Energy Error Induced by Bridge Error}

We now translate the propagated endpoint mean error into an energy error.
For the appendix proof, we use the whitened local Gaussian energy
\begin{equation}
    E_k
    =
    -\frac{1}{2}
    \left\|
        \psi(y_f)-\mu_k
    \right\|^2
    +
    \mathrm{const}.
    \label{eq:gaussian-energy-appendix}
\end{equation}
This is the isotropic form of the local Gaussian score used in the main analysis.
A non-isotropic covariance only changes the norm and introduces constant conditioning factors, which
are suppressed in the main-paper proportionality statement.

\begin{assumption}[Normalized local endpoint region]
\label{ass:normalized-local-region}
The sampled endpoint lies in the local quadratic region of the correct mean:
\begin{equation}
    \left\|
        \psi(y_f)-\mu_k
    \right\|
    \le
    1 .
    \label{eq:normalized-local-region}
\end{equation}
\end{assumption}

This normalization is only for presentation.
If the local residual is bounded by a constant other than $1$, the first term in the final bound is
multiplied by that constant.

\begin{lemma}[Exact energy perturbation]
\label{lem:exact-energy-perturbation}
Suppose the propagated bridge-error relation in Lemma~\ref{lem:propagated-bridge-error} holds.
Then the energy perturbation caused by bridge error is
\begin{equation}
    \widehat{E}_k-E_k
    =
    \left(
        \psi(y_f)-\mu_k
    \right)^\top
    A_0^k e
    -
    \frac{1}{2}
    \left\|
        A_0^k e
    \right\|^2 .
    \label{eq:energy-perturbation-appendix}
\end{equation}
\end{lemma}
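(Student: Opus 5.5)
The plan is to substitute the learned endpoint mean $\widehat{\mu}_k$ into the whitened local Gaussian energy \eqref{eq:gaussian-energy-appendix} and expand the square directly. By definition, the learned energy is $\widehat{E}_k=-\tfrac12\|\psi(y_f)-\widehat{\mu}_k\|^2+\mathrm{const}$. I take the constant to be the same as for $E_k$: both are normalizing terms of an isotropic Gaussian with the same covariance, so they cancel in the difference. This is the one modeling point I would state explicitly, since a covariance mismatch would add an extra term. With that convention, the difference $\widehat{E}_k-E_k$ involves only the two quadratic terms.

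Next I invoke Lemma~\ref{lem:propagated-bridge-error}, which gives $\widehat{\mu}_k=\mu_k+A_0^k e$. Setting $r\triangleq\psi(y_f)-\mu_k$, I can write
\[
    \psi(y_f)-\widehat{\mu}_k = r - A_0^k e .
\]
Expanding the squared norm gives
\[
    \|r-A_0^k e\|^2=\|r\|^2-2\,r^\top A_0^k e+\|A_0^k e\|^2 .
\]
Multiplying by $-\tfrac12$ and subtracting $E_k=-\tfrac12\|r\|^2+\mathrm{const}$, the $\|r\|^2$ terms cancel. What remains is $r^\top A_0^k e-\tfrac12\|A_0^k e\|^2$, which is exactly \eqref{eq:energy-perturbation-appendix}.

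There is no real obstacle here; the identity is exact and needs no Taylor remainder. The only care needed is with sign conventions: the error is $e=\widehat{\delta}_\tau-\delta_\tau$, so the cross term carries a plus sign and the quadratic term a minus sign.

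For later use toward Proposition~\ref{prop:endpoint-energy-error}, I would note how this identity yields the bound. Apply Cauchy--Schwarz to the cross term, then use Assumption~\ref{ass:normalized-local-region} ($\|r\|\le 1$) and the norm bound \eqref{eq:future-mean-error-bound}. This gives $|\widehat{E}_k-E_k|\le\|A_0^k\|\|e\|+\tfrac12\|A_0^k\|^2\|e\|^2$, which matches \eqref{eq:energy-error-bound-main}.
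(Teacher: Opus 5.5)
Your proposal is correct and follows essentially the same route as the paper: you substitute $\widehat{\mu}_k=\mu_k+A_0^k e$ from Lemma~\ref{lem:propagated-bridge-error} into the whitened Gaussian energy and expand the square, so the $\|\psi(y_f)-\mu_k\|^2$ terms cancel. You also state explicitly that the additive constants in $\widehat{E}_k$ and $E_k$ must coincide, which the paper leaves implicit, and your sketch of the resulting bound matches Corollary~\ref{cor:endpoint-energy-error-bound}.
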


\begin{proof}
By Lemma~\ref{lem:propagated-bridge-error},
\begin{equation}
    \widehat{\mu}_k
    =
    \mu_k+A_0^k e .
    \label{eq:mean-with-propagated-error}
\end{equation}
Using Eq.~\eqref{eq:gaussian-energy-appendix}, the learned energy is
\begin{equation}
    \widehat{E}_k
    =
    -\frac{1}{2}
    \left\|
        \psi(y_f)-\widehat{\mu}_k
    \right\|^2
    +
    \mathrm{const}.
\end{equation}
Substituting Eq.~\eqref{eq:mean-with-propagated-error} gives
\begin{equation}
    \widehat{E}_k
    =
    -\frac{1}{2}
    \left\|
        \psi(y_f)-\mu_k-A_0^k e
    \right\|^2
    +
    \mathrm{const}.
\end{equation}
Therefore,
\begin{align}
    \widehat{E}_k-E_k
    &=
    -\frac{1}{2}
    \left[
        \left\|
            \psi(y_f)-\mu_k-A_0^k e
        \right\|^2
        -
        \left\|
            \psi(y_f)-\mu_k
        \right\|^2
    \right] \\
    &=
    \left(
        \psi(y_f)-\mu_k
    \right)^\top
    A_0^k e
    -
    \frac{1}{2}
    \left\|
        A_0^k e
    \right\|^2 .
\end{align}
This proves Eq.~\eqref{eq:energy-perturbation-appendix}.
\end{proof}

\begin{corollary}[Endpoint energy-error bound]
\label{cor:endpoint-energy-error-bound}
Suppose Lemma~\ref{lem:exact-energy-perturbation} holds and Assumption~\ref{ass:normalized-local-region} holds.
Then
\begin{equation}
    |\widehat{E}_k-E_k|
    \le
    \|A_0^k\|\|e\|
    +
    \frac{1}{2}
    \|A_0^k\|^2\|e\|^2 .
    \label{eq:energy-error-bound-appendix}
\end{equation}
\end{corollary}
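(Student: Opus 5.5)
The corollary follows from the exact identity in Lemma~\ref{lem:exact-energy-perturbation} by bounding each of its two terms separately and then using Assumption~\ref{ass:normalized-local-region}. I will write
\[
r \triangleq \psi(y_f)-\mu_k, \qquad v \triangleq A_0^k e .
\]
With this notation, Lemma~\ref{lem:exact-energy-perturbation} gives $\widehat{E}_k-E_k = r^\top v - \tfrac12\|v\|^2$. By the triangle inequality,
\[
|\widehat{E}_k-E_k| \le |r^\top v| + \tfrac12\|v\|^2 .
\]

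\textbf{Linear term.} Cauchy--Schwarz gives $|r^\top v|\le \|r\|\,\|v\|$. Assumption~\ref{ass:normalized-local-region} gives $\|r\|\le 1$, so this term is at most $\|v\|$. Submultiplicativity of the operator norm, as already used in Eq.~\eqref{eq:future-mean-error-bound} of Lemma~\ref{lem:propagated-bridge-error}, then yields $\|v\|\le\|A_0^k\|\,\|e\|$.

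\textbf{Quadratic term.} Squaring the same inequality gives $\tfrac12\|v\|^2\le \tfrac12\|A_0^k\|^2\|e\|^2$.

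Adding the two bounds gives Eq.~\eqref{eq:energy-error-bound-appendix}.

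There is no real obstacle here. The only point needing care is consistency of norms. The matrix norm $\|A_0^k\|$ must be the operator norm induced by the Euclidean norm used in the whitened energy of Eq.~\eqref{eq:gaussian-energy-appendix}; if it is not, the bound picks up constant factors, which are absorbed into the proportionality in Proposition~\ref{prop:endpoint-energy-error}.

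To pass from this corollary to the supremum in Proposition~\ref{prop:endpoint-energy-error}, note that the right-hand side does not depend on $y_f$. Taking the supremum over endpoints in the local region therefore preserves the bound, which gives Eq.~\eqref{eq:energy-error-bound-main}. If the residual is bounded by a constant $R$ instead of $1$, the linear term is simply multiplied by $R$.
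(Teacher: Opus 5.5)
Your proof is correct and follows the paper's argument essentially step for step. Like the paper, you apply the triangle inequality to the identity from Lemma~\ref{lem:exact-energy-perturbation}, use Cauchy--Schwarz together with Assumption~\ref{ass:normalized-local-region} on the linear term, and use submultiplicativity $\|A_0^k e\|\le\|A_0^k\|\|e\|$ for both terms; your remarks on norm consistency and on taking the supremum over $y_f$ also match the paper's comments on suppressed constants and the passage to Proposition~\ref{prop:endpoint-energy-error}.
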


\begin{proof}
Starting from Lemma~\ref{lem:exact-energy-perturbation}, apply the triangle inequality:
\begin{align}
    |\widehat{E}_k-E_k|
    &\le
    \left|
        \left(
            \psi(y_f)-\mu_k
        \right)^\top
        A_0^k e
    \right|
    +
    \frac{1}{2}
    \left\|
        A_0^k e
    \right\|^2 .
    \label{eq:triangle-bound}
\end{align}
By Cauchy--Schwarz,
\begin{equation}
    \left|
        \left(
            \psi(y_f)-\mu_k
        \right)^\top
        A_0^k e
    \right|
    \le
    \left\|
        \psi(y_f)-\mu_k
    \right\|
    \left\|
        A_0^k e
    \right\|.
\end{equation}
Using Assumption~\ref{ass:normalized-local-region},
\begin{equation}
    \left|
        \left(
            \psi(y_f)-\mu_k
        \right)^\top
        A_0^k e
    \right|
    \le
    \left\|
        A_0^k e
    \right\|.
\end{equation}
Finally, by submultiplicativity,
\begin{equation}
    \left\|
        A_0^k e
    \right\|
    \le
    \|A_0^k\|\|e\|.
\end{equation}
Substituting these bounds into Eq.~\eqref{eq:triangle-bound} gives
\begin{equation}
    |\widehat{E}_k-E_k|
    \le
    \|A_0^k\|\|e\|
    +
    \frac{1}{2}
    \|A_0^k\|^2\|e\|^2 .
\end{equation}
\end{proof}

Corollary~\ref{cor:endpoint-energy-error-bound} is the formal version of Proposition~\ref{prop:endpoint-energy-error}.
Removing constants associated with local normalization and whitening gives the main-paper scaling:
\[
    \sup |\widehat{E}_k-E_k|
    \propto
    \|A_0^k\|\|e\|
    +
    \frac{1}{2}
    \|A_0^k\|^2\|e\|^2 .
\]

\subsection{Interpretation}

The proof separates the endpoint error into two factors.
The first factor, $\|e\|$, is the local error in estimating the interaction bridge shift.
This error becomes large when few training samples directly cover the interaction boundary.
The second factor, $\|A_0^k\|$, is the passive post-interaction propagation factor.
Even if the bridge error is small, passive dynamics can rotate, attenuate, or amplify the error before
the endpoint is used in the contrastive objective.

This explains why endpoint-only CRL can learn fragile energies near contact.
If the positive future is sampled long after the interaction, CRL supervises the interaction action only
through the propagated term $A_0^k\delta_\tau$.
Interaction-weighted resampling targets the two terms in the bound: it increases bridge-centered
samples, reducing $\|e\|$, and it favors endpoints closer to the interaction, reducing the effective
post-interaction propagation length $k$.

\begin{figure}[h]
    \vspace{-8pt}   
    \centering
    \small
\includegraphics[width=1.0\linewidth]{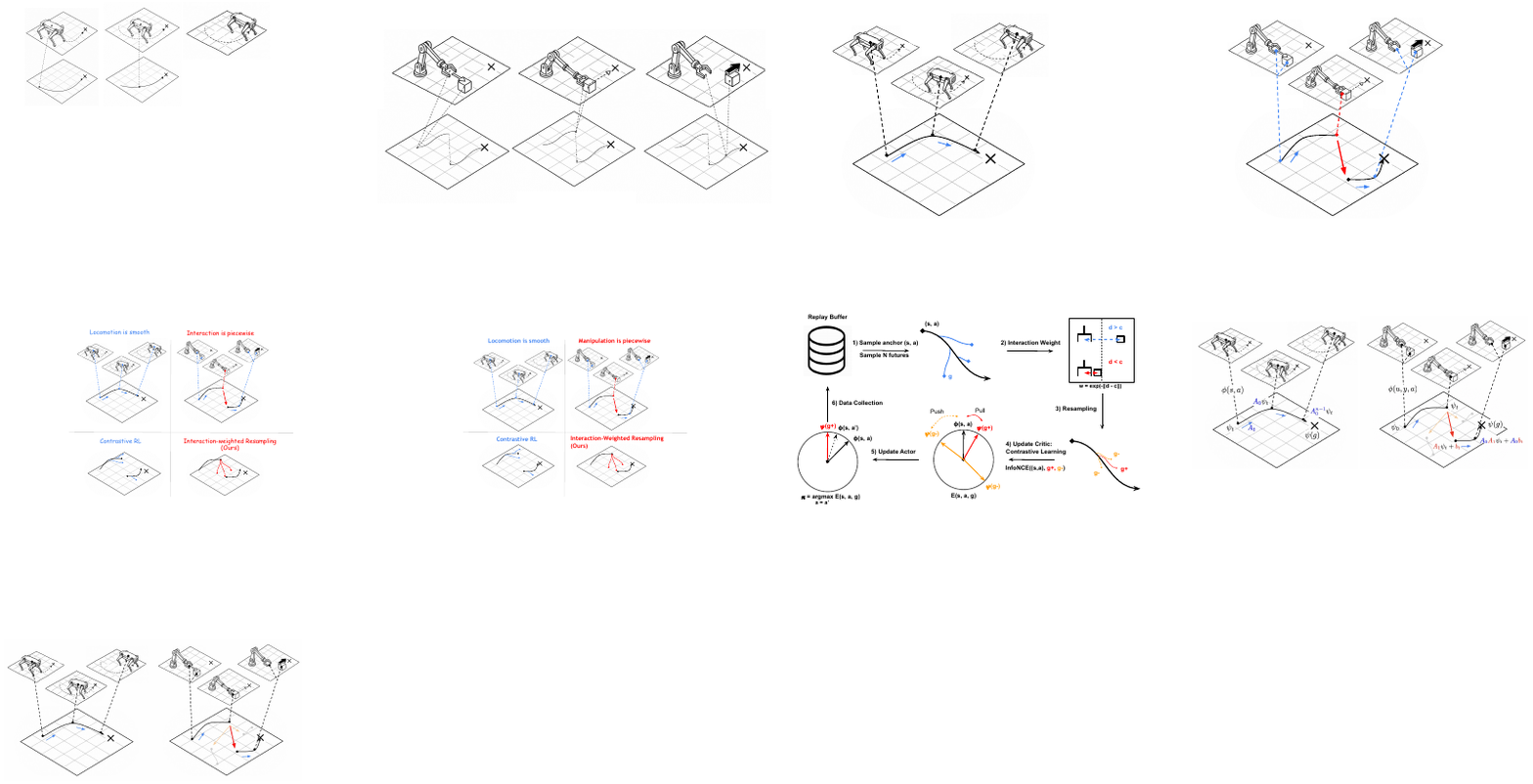}
    \caption{\textbf{Interaction-induced mode changes break the single-operator CRL geometry}. In smooth locomotion domains, future representations can often be organized by repeatedly applying a single temporal operator, producing a coherent trajectory toward the goal. In manipulation, interactions such as contact or object transfer switch the active dynamics and introduce residual displacements that depend on the actuated factor and action. Hence, the future distribution becomes piecewise and branching, making standard CRL smoothing over critical interaction transitions.}
    \label{fig:theory_illustration}
    \vspace{-.5cm}
\end{figure}

%% file: appendix/related_work.tex
\section{Extended Related Work} \label{app: rl}
This work both formalizes the challenges when applying contrastive reinforcement learning to manipulation and proposes a sampling-based strategy for leveraging interactions to mitigate those challenges. This places it in the intersection of work investigating CRL and robot learning for manipulation using interactions. 

\textbf{Contrastive Reinforcement Learning}: CRL is the application of contrastive learning~\citep{oord2018representation} to goal-conditioned reinforcement learning (GCRL), both as a classification~\citep{eysenbach2021clearning} and regression objective ~\citep{eysenbach2022contrastive}.  Recent work has expanded this towards metric learning~\citep{zheng2025multistep,myers2026offline}, planning as time series modeling~\citep{eysenbach2024inference}, language alignment~\citep{myers2023goal,myers2026temporal} and combinatorial reasoning and search~\citep{wang2026temporal,ziarko2026contrastive}. \methodname{} bears the closest similarity to~\citet{ziarko2026contrastive}, which also employs a resampling procedure, but \methodshort{} leverages interactions to target the resampling towards difficult-to-model dynamics, while~\citet{ziarko2026contrastive} utilizes different contexts for in-trajectory sampling. We also build on the successes of CRL when applied in simulated manipulation~\citep{wang20261000}, especially related to emergent exploration~\citep{liu2025single,bastankhah2026demystifying}, leveraging these properties which allow for limited manipulation performance in a sample-efficient manner relative to prior GCRL work~\citep{puterman1990markov,kaelbling1993learning,liu2022goal,andrychowicz2017hindsight,bai2021addressing}. Besides contrastive learning as applied to GCRL, \methodshort{} draws on ideas from general contrastive learning, especially related to leveraging privileged information such as class labels (wherein our case we leverage interactions and in-trajectory sampling)~\citep{feng2022adaptive,denize2023similarity,khosla2020supervised,hoffmann2022ranking}, and debiasing strategies which carefully select negative examples, which we adapt for interaction-based sampling~\citep{chuang2020debiased,huynh2022boosting,dwibedi2021little}. Our observations about the relationship between non-smooth dynamics and smooth representations learned by contrastive methods are in part inspired by~\citet{betser2026infonce}, which observes that the distribution of representations tends towards a thin-shell Gaussian. Finally, recent work has shown close connections between contrastive RL methods and the general class of methods used in unsupervised Reinforcement Learning~\citep{agarwal2025unified}, including learning to represent all rewards through successor representations~\citep{sikchi2026rlzero,zheng2026can,touati2021learning,agarwal2024proto,zheng2024contrastive}, mutual information skill learning~\citep{modirshanechi2026unifying,levy2025unsupervised}, representation learning~\citep{ferns2011bisimulation,zhang2020learning,rudolph2024learning} and world models~\citep{farebrother2026compositional}. However, the application of factorization and especially entity interactions to the methods based on contrastive learning, which we use in \methodshort{}, remains limited. 

\textbf{Manipulation Learning with Interactions}: Robot manipulation encompasses a wide range of tasks, from grasping and moving objects to striking, inserting or throwing~\citep{han2023survey}. Interactions~\citep{chuck2024automated} have been incorporated into learning for manipulation in a variety of ways, including controllability~\citep{seitzer2021causal}, data augmentation~\citep{pitis2020counterfactual,pitis2022mocoda}, hierarchical reinforcement learning~\citep {chuck2020hypothesis,chuck2023granger,wang2024skild, feng2026learning}, skill learning~\citep{hu2024disentangled,rodriguez2025pixels,hosseini2026susd}, causal modeling~\citep{lee2023scale,biswas2024gaze, feng2023learning} and exploration~\citep{wang2023elden}. This work introduces a novel formalization for the challenges of robot learning for manipulation as mode-changing dynamics, and \methodshort{} is a novel algorithm for incorporating interaction information into the training of contrastive RL. \methodshort{} shows significantly improved sample efficiency, and the first application to robot air hockey~\citep{chuck2024air}. Other work in manipulation that is more loosely leveraging interactions includes work centered around modeling dynamics, such as through graph networks~\citep{sieb2020graph,huang2023planning,lin2022efficient,zhang2025particle,huangrobocraft,kedia2024interact}, (which incorporate interactions through the message passing framework), human conditional intent~\citep{zhang2024adaptigraph} or human video~\citep{singh2025hand,zhu2026vision}, and affordances~\citep{khazatsky2021can,girgin2024multiobject}. Most closely related is the work investigating dynamics modeling~\citep{haramati2026hierarchical,wang2026factored} which learn sturctures that incorporate factors for policy learning, often also identifying the factors themselves from observations~\citep{daniel2026latent, qi2025ec}. \methodshort{} is applied to CRL, which learns structure for GCRL policies, but these model-learning algorithms are likely to be effective in concert with CRL methods, where the dynamics model and policies can work together.

%% file: appendix/iwr_details.tex
\section{\methodname{} Details}
\label{app:iwr_details}
In this section, we provide some description to give a better understanding of how IWR addresses the challenge of mode switching.

IWR does not change the CRL loss or its energy landscape directly. Instead, it changes the effective sampling distribution of contrastive updates. This provides an inductive bias toward mode-switching regions, giving more critic updates to interaction-relevant transitions rather than to easy passive motion or object-reaching behavior. In terms of our analysis (Section~\ref{sec: analysis}), IWR increases the probability that sampled positive futures cross the interaction bridge near the latent interaction time \(\tau_t\). Such samples expose the interaction shift directly, instead of observing it only after long passive propagation through powers of \(A_0\). Intuitively, based on the analysis of intrinsic reward \cite{bastankhah2026demystifying}, our paradigm shifts intrinsic reward-driven exploration from ``moving the object to the future'' to ``learning how to control the object to move to the future.'' This also connects to Proposition~\ref{prop:endpoint-energy-error} as increasing the coverage of interaction-centered samples reduces the error contribution from the interaction residual. Exact bridge samples provide direct supervision for the piecewise jump term, while the soft weighting in Eq.~\ref{eq:iwr} also covers nearby positives and reduces error accumulation from repeatedly propagating the interaction signal through \(A_0^k\).
Consequently, IWR reduces the passive bias of ordinary future sampling and improves the conditioning of interaction-relevant representation learning.

% We preserve the CRL loss landscape while adding an inductive bias that yields more updates on the interaction part, rather than on easy-to-learn object-reaching behavior or uncontrollable passive target-object motion.
% Intuitively, based on the analysis of intrinsic reward \cite{bastankhah2026demystifying}, our paradigm shifts intrinsic reward-driven exploration from ``moving the object to the future'' to ``learning how to control the object to move to the future.''
% Based on Proposition~\ref{prop:endpoint-energy-error}, a higher coverage of the interaction term would control the error bound $|\widehat{E}_k - E_k|$
% . On the one hand, exact interaction positives would help estimate the piecewise jump transfer by controlling the error bound of $\|e\|$.
% On the other hand, soft weighting label would sample positives close to the interaction point. Then we could reduce the accumulation of errors from the exponential iteration of $\|A_0^k\|$.
% As a result, interaction control would be more consistent and robust.

%% file: appendix/training_setup.tex
\section{Training Setup}
\label{app:training_setup}

% Appendix snippet. Expected packages in the main paper:
% \usepackage{booktabs}
% \usepackage{array}
% \usepackage{amsmath,amssymb}

\subsection{Tasks and Algorithm Grid}

Paper experiments use five seeds $\{7,8,9,10,11\}$.
Experiment grid contains three simulation backends and seven algorithms.

\begin{table}[h]
\centering
\small
\setlength{\tabcolsep}{5pt}
\caption{Paper-core benchmark grid.}
\label{tab:appendix-paper-core-grid}
\begin{tabular}{p{0.18\linewidth}p{0.64\linewidth}c}
\toprule
Backend & Tasks & Count \\
\midrule
Air Hockey & puck-goal position, $r=0.06$, observation mode=velocity / real-transfer-history & 2 \\
Box2D & center, goal, hard, hard-velocity, maze & 5 \\
MetaWorld & push, pick-place, peg-insert-side, sweep-into & 4 \\
\midrule
Algorithms & PPO, SAC, SAC+HER, SAC+HINT, CRL, CRTR, IWR & 7 \\
\bottomrule
\end{tabular}
\end{table}

% \begin{table}[h]
% \centering
% \small
% \setlength{\tabcolsep}{5pt}
% \caption{Algorithm identities used in the paper-core launcher.}
% \label{tab:appendix-algorithm-identities}
% \begin{tabular}{p{0.16\linewidth}p{0.22\linewidth}p{0.50\linewidth}}
% \toprule
% Paper name & Code id & Main training difference \\
% \midrule
% PPO & \texttt{ppo} & Sparse-success PPO baseline. \\
% SAC & \texttt{sac} & Sparse-success SAC baseline. \\
% SAC+HER & \texttt{sac\_her} & SAC with uniform future-goal HER. \\
% SAC+HINT & \texttt{sac\_hint} & SAC+HER with interaction-weighted HER anchors. \\
% CRL & \texttt{sgcrl} & Original SGCRL objective and replay sampling. \\
% CRTR & \texttt{crtr} & CRTR-style repeated episode contexts, $R=8$. \\
% IWR (ours) & \texttt{iwr} & Interaction-weighted replay, $R=8$. \\
% \bottomrule
% \end{tabular}
% \end{table}

\subsection{Shared Training Defaults}

% All paper-core runs disable inline evaluation and rollout videos during
% training.  Evaluation curves and post-hoc evaluations are computed from saved
% checkpoints.  The configured eval environment counts remain in the launch
% environment for compatibility, but \texttt{--disable-eval} prevents evaluator
% construction.

Shared training details can be found in Table~\ref{tab:appendix-shared-training}

\begin{table}[h]
\centering
\small
\setlength{\tabcolsep}{5pt}
\caption{Shared Training Defaults}
\label{tab:appendix-shared-training}
\begin{tabular}{ll}
\toprule
Setting & Value \\
\midrule
Total environment steps & $10{,}000{,}000$ \\
Train environments & $256$ \\
Batch size & $64$ \\
Replay capacity & $200{,}000$ \\
Minimum replay size & $1{,}024$ \\
Random exploration warmup & $200{,}000$ \\
Updates per environment iteration & $16$ \\
% Checkpoint interval & $200{,}000$ environment steps \\
Network architecture & MLP with hidden sizes $(512,512)$ \\
Representation dimension & $256$ for CRL-family critics \\
Future discount for future sampling & $\gamma_f=0.99$ \\
Random actor-goal fraction & $0.5$ \\
Critic type & CPC \\
Log-sum-exp regularizer coefficient & $0.01$ \\
Actor learning rate & $3\times 10^{-4}$ \\
Critic learning rate & $3\times 10^{-4}$ \\
Optimizer & Adam \\
% Inline evaluation & Disabled \\
% Training videos & Disabled \\
\bottomrule
\end{tabular}
\end{table}

\begin{table}[h]
\centering
\small
\setlength{\tabcolsep}{5pt}
\caption{Backend-specific training settings.}
\label{tab:appendix-backend-training}
\begin{tabular}{p{0.18\linewidth}p{0.18\linewidth}p{0.15\linewidth}p{0.34\linewidth}}
\toprule
Backend & Eval Envs & Termination \\
\midrule
Air Hockey & $128$ & First Success or 200 ticks \\
Box2D & $128$ & 200 ticks \\
MetaWorld & $32$ & First Success or 200 ticks \\
\bottomrule
\end{tabular}
\end{table}

\subsection{CRL-Family Objective}

For CRL, CRTR, and IWR, the actor and critic are identical except for replay
sampling.  Given anchor state-action pair $(s_i,a_i)$ and sampled future goal
$g_i^+$, the critic score is
\[
    E_\theta(s_i,a_i,g_j)
    =
    \phi_\theta(s_i,a_i)^\top \psi_\theta(g_j),
\]
with cosine-normalized score on Air Hockey and dot-product score on Box2D and
MetaWorld.  The in-batch CPC objective is
\[
    \begin{aligned}
    \mathcal L_{\mathrm{CRL}}
    &=
    -\frac{1}{B}
    \sum_{i=1}^{B}
    \log
    \frac{\exp(E_\theta(s_i,a_i,g_i^+))}
    {\sum_{j=1}^{B}\exp(E_\theta(s_i,a_i,g_j^+))}
    \\
    &\quad+
    \lambda_{\mathrm{lse}}
    \frac{1}{B}
    \sum_i
    \left(
        \log\sum_j \exp(E_\theta(s_i,a_i,g_j^+))
    \right)^2 ,
    \end{aligned}
\]
where $B=64$ and $\lambda_{\mathrm{lse}}=0.01$.

Future goals are sampled from the same episode.  For anchor time $t$, offsets
$\Delta\geq 1$ are sampled with weights proportional to
\[
    \gamma_f^{\Delta-1},
    \qquad
    \gamma_f = 0.99,
\]
subject to the remaining episode length.

% \subsection{CRTR Replay Sampling}

% CRTR keeps the same CRL loss and critic.  The only change is the batch
% construction.  Episodes are sampled with probability proportional to stored
% episode length.  A smaller set of base episodes is selected, and each base
% episode is repeated $R$ times inside the same batch:
% \[
%     R = 8.
% \]
% With batch size $64$, this gives $8$ base episodes per update and $8$ rows per
% episode context.  Anchor steps and future goals are sampled independently
% within each repeated episode context using the same discounted future sampler
% as CRL.  Unless explicitly marked as an ablation, CRTR in paper-core means
% \texttt{crtr-context-mode=episode} and
% \texttt{crtr-context-repetition-factor=8}.

\subsection{IWR Replay Sampling}

We calculate the sample weight with $w = \epsilon + \exp(-\frac{1}{2\sigma^2}\|d-c\|_2)$ with parameters in Table ~\ref{tab:appendix-iwr-params}. Note that the threshold $c$ we select is at least 1.5 times larger than the true distance of factor and target object. We only need a loose heuristic about where the exact interaction will happen.

\begin{table}[h]
\label{tab:iwr}
\centering
\small
\setlength{\tabcolsep}{5pt}
\caption{IWR geometry parameters.}
\label{tab:appendix-iwr-params}
\begin{tabular}{llll}
\toprule
Task group & $c$ & $2\sigma^2$ \\
\midrule
Air Hockey & $0.12$ & $30$ \\
MetaWorld & $0.09$ & $20$ \\
Box2D center / goal & $3.3$ & $100$ \\
Box2D hard / hard-velocity / maze & $2.0$ & $80$ \\
\bottomrule
\end{tabular}
\end{table}

\subsection{Sparse-Reward PPO and SAC Baselines}

PPO, SAC, SAC+HER, and SAC+HINT use the same goal-conditioned state and action
interfaces as the CRL-family methods.  They do not use dense simulator rewards.
The reward is the sparse success indicator $r_t=\mathbf{1}\{\text{success at }t\}$

\begin{table}[h]
\centering
\small
\setlength{\tabcolsep}{5pt}
\caption{PPO and SAC-family baseline settings.}
\label{tab:appendix-baseline-settings}
\begin{tabular}{p{0.16\linewidth}p{0.34\linewidth}p{0.38\linewidth}}
\toprule
Algorithm & Setting & Value \\
\midrule
PPO & rollout steps & $16$ \\
PPO & epochs per rollout & $4$ \\
PPO & clipping parameter & $0.2$ \\
PPO & GAE $\lambda$ & $0.95$ \\
PPO & value loss coefficient & $0.5$ \\
PPO & max gradient norm & $0.5$ \\
\midrule
SAC & target update rate $\tau$ & $0.005$ \\
SAC & entropy coefficient $\alpha$ & $0.2$ \\
SAC & discount & $0.99$ \\
SAC & critics & two scalar Q networks, MLP $(512,512)$ \\
SAC+HER & HER ratio & $0.8$ \\
SAC+HER & HER goal source & discounted future achieved goals \\
SAC+HINT & HER mode & interaction-weighted anchor sampling \\
\bottomrule
\end{tabular}
\end{table}

For SAC+HER, each sampled transition uses the original desired goal with
probability $0.2$ and a future achieved goal with probability $0.8$.  The
reward is recomputed against the relabeled goal.  SAC+HINT keeps the same HER
ratio but samples anchors using interaction weights.  For HINT, the anchor
weight is
\[
    w_t = \exp(-d_t/\sigma) + 10^{-3},
\]
where $d_t$ is the source-target distance.

\begin{table}[h]
\centering
\small
\setlength{\tabcolsep}{5pt}
\caption{SAC+HINT interaction parameters.}
\label{tab:appendix-sac-hint-params}
\begin{tabular}{llll}
\toprule
Backend & Contact distance & $\sigma$ \\
\midrule
Air Hockey & $0.12$ & $30$ \\
Box2D & $2.0$ & $80$ \\
MetaWorld & $0.09$ & $20$ \\
\bottomrule
\end{tabular}
\end{table}

\subsection{Evaluation Metrics}

Let one logging batch contain $N=256$ completed episodes,
matching the number of train environments.  For episode $e$, define
\[
    \begin{aligned}
    o_e &= \mathbf{1}\{\text{episode }e\text{ reached success at least once}\},\\
    z_e &= \sum_t \mathbf{1}\{\text{success at step }t\}.
    \end{aligned}
\]

For Box2d we report on $z_e$ and for Metaworld we report on $o_e$. On Table ~\ref{tab:main1}, Box2d result is normalized by 100 ticks. Since we repeat the experiment for 5 times, we first record the mean result every 200000 steps as checkpoints and report the best score for comparison.

\section{Simulation Environment Setup}
\label{app:simulation_setup}

% Appendix snippet. Expected packages in the main paper:
% \usepackage{booktabs}
% \usepackage{array}
% \usepackage{amsmath,amssymb}

\subsection{Simulation Setting}

\begin{figure}[t]
    \centering
    \begin{minipage}[t]{0.75\textwidth}
        \centering
        \vspace{0pt}
        \includegraphics[width=\linewidth]{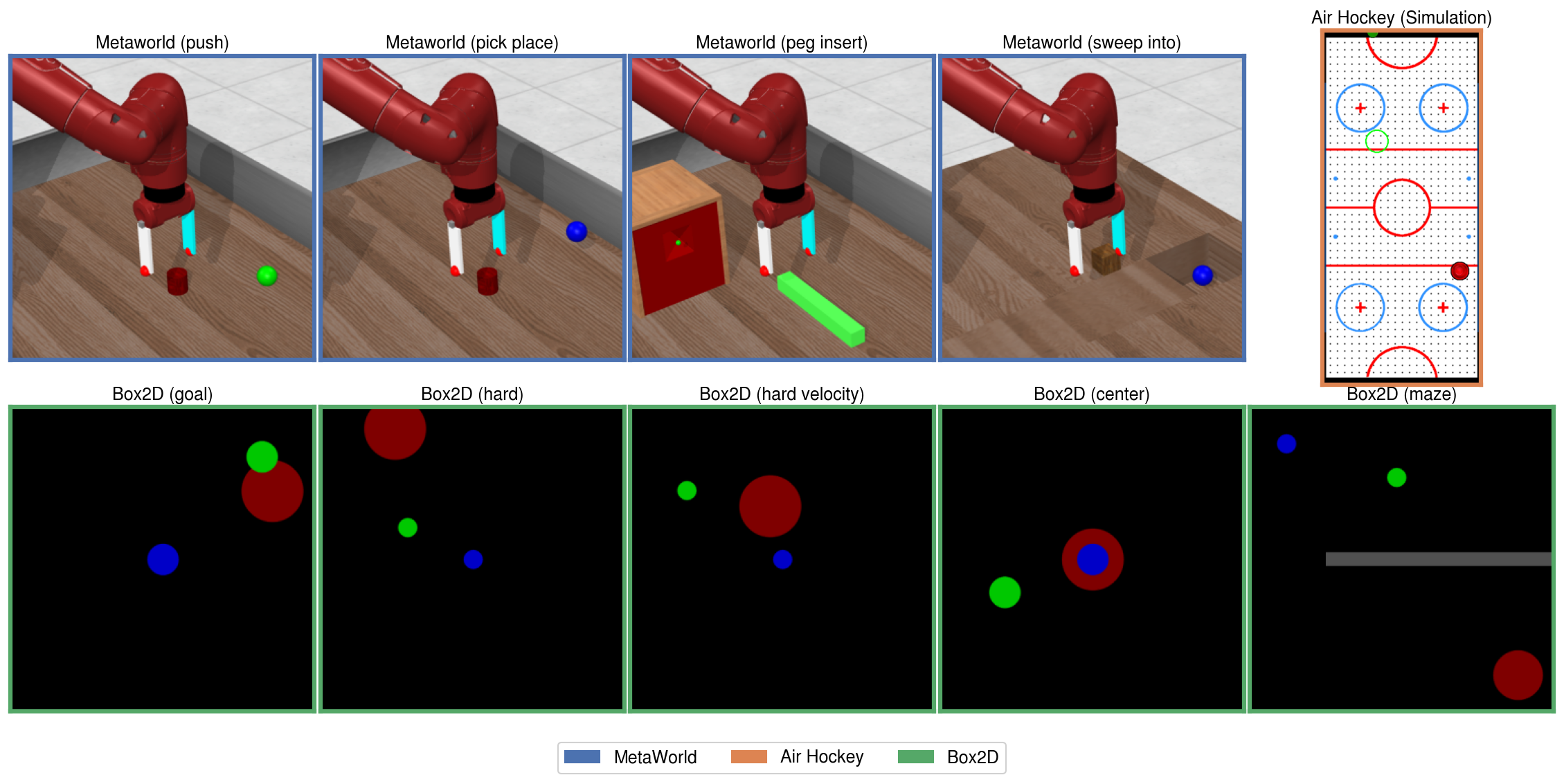}
        \caption{Task sets used in the simulation experiments.}
        \label{fig:task_set}
    \end{minipage}%
    \begin{minipage}[t]{0.25\textwidth}
        \centering
        \vspace{0pt}
        \includegraphics[width=\linewidth]{figure/air_hockey_setup.png}
        \caption{Real-world robotic air hockey testbed.}
        \label{fig:real_world_air_hockey_setup}
    \end{minipage}
    \vspace{-1em}
\end{figure}
All environments are wrapped as goal-conditioned vector environments.  Each
wrapper exposes a state vector $s_t$, achieved goal $g_t^{\mathrm{ach}}$,
desired goal $g_t^{\mathrm{des}}$, action $a_t$, and boolean success flag.  CRL
methods use future achieved goals for contrastive relabeling; sparse-reward
baselines use the same success flag as reward.  Episodes are automatically
reset in the vectorized training environment.

% \begin{table}[h]
% \centering
% \small
% \setlength{\tabcolsep}{5pt}
% \caption{Backend-level simulation settings.}
% \label{tab:appendix-backend-simulation}
% \begin{tabular}{p{0.24\linewidth}p{0.18\linewidth}p{0.22\linewidth}p{0.24\linewidth}}
% \toprule
% Backend & Train horizon & Termination on success & Vectorization \\
% \midrule
% Air Hockey & $300$ steps & yes & $256$ train envs \\
% Box2D & $400$ steps in paper launch & no & $256$ train envs \\
% MetaWorld & $150$ steps & yes & async, $256$ train envs \\
% \bottomrule
% \end{tabular}
% \end{table}

\subsection{Air Hockey}

The success radius from the goal state is $r_{\mathrm{goal}} = 0.06$. 
The paddle, puck, and goal are randomized at reset.  The wrapper terminates an
episode on success during training.  For the default physics tasks, episodes
also terminate if the puck hits the bottom or passes the paddle; out-of-bounds
termination is disabled.  For real-transfer, out-of-bounds termination is
enabled.

\begin{table}[h]
\centering
\small
\setlength{\tabcolsep}{5pt}
\caption{Default Air Hockey simulator parameters.}
\label{tab:appendix-air-hockey-default-physics}
\begin{tabular}{ll}
\toprule
Parameter & Value \\
\midrule
Simulator & Box2D Air Hockey simulator \\
Table length / width & $1.9304$ / $0.8636$ \\
Render size & $360$ \\
Paddle radius & $0.0508$ \\
Puck radius & $0.03175$ \\
Paddle density / damping & $2500$ / $3$ \\
Puck density / damping & $250$ / $0.5$ \\
Gravity & $-0.5$ \\
Force scaling & $1000$ \\
Max force timestep & $100$ \\
Wall bounce scale & $0.02$ \\
Blocks / obstacles / targets & $0/0/0$ \\
Paddles / pucks & $1/1$ \\
Randomized paddle spawn & true \\
Goal radius & $0.06$ \\
\bottomrule
\end{tabular}
\end{table}

\begin{table}[h]
\centering
\small
\setlength{\tabcolsep}{5pt}
\caption{Air Hockey real-transfer physics and corruption settings.}
\label{tab:appendix-air-hockey-real-transfer}
\begin{tabular}{ll}
\toprule
Parameter & Value \\
\midrule
Observation type & history \\
Domain randomization & enabled \\
Randomized variables & paddle density, puck damping, gravity \\
Paddle density range & $[2250,3750]$ \\
Puck damping range & $[0.1335,0.2225]$ \\
Gravity range & $[-0.826,-0.496]$ \\
SysID gravity & $-0.661$ \\
SysID puck damping & $0.178$ \\
Paddle damping & $17$ \\
Force scaling & $0.99$ \\
Max force timestep & $10000$ \\
Puck density & $3000$ \\
Paddle / puck radius & $0.0508$ / $0.03175$ \\
Paddle restitution & $1.0$ \\
Side / end wall restitution & $0.99$ / $0.7$ \\
PID control & enabled \\
PID gains & $k_p=9000$, $k_d=50$, $k_i=0$ \\
History length & $2$ \\
Puck observation noise & enabled, std. $0.01$ \\
Random occlusion & enabled, start rate $0.05$ per step \\
Occlusion length weights & $[75,39,18,9,4,2,1]$ \\
Observation delay & enabled, $0.025$s \\
Action delay & disabled \\
Near-paddle puck spawn probability & $0.15$ \\
Near-paddle spawn offset & $[0.025,0.05]$ m \\
Near-paddle spawn speed & $[0.0,0.2]$ m/s \\
Linear top spawn speed & $[0.0,0.5]$ m/s \\
\bottomrule
\end{tabular}
\end{table}

\subsection{Box2D}

% TODO

% The hard-velocity task is a local compatibility variant implemented in the
% wrapper by taking \texttt{goal-ctrl-target-hard} and setting the target initial
% velocity to $5.0$.
Box2D tasks are continuous-control 2D tasks with circular bodies.  It has 400 steps per episode and success does not terminate the episode in the Box2D launch. Table~\ref{app:box2d_setting} shows shared parameters. As visualized in Figure~\ref{fig:task_set}, Box2d aims to evaluate how well the agent could control the blue ball to punch the green ball into the goal position, and control it inside the goal position as long as possible. Initialization details could be seen in Table~\ref{app:box2d_tasks}.

\begin{table}[h]
\centering
\small
\setlength{\tabcolsep}{5pt}
\caption{Default Box2D simulator parameters.}
\label{tab:appendix-box2d-default}
\begin{tabular}{ll}
\toprule
Parameter & Value \\
\midrule
% Simulator & \texttt{ac\_infer} Box2D simulator \\
World length / width & $10.0$ / $10.0$ \\
Action space & continuous \\
Force scaling & $200$ \\
Wall bounce factor & $0.5$ \\
Control damping / density & $2.0$ / $0.75$ \\
Target damping / density & $1.0$ / $0.1$ \\
Ordinary control / target radius & $0.5$ / $0.5$ \\
Hard control / target radius & $0.3$ / $0.3$ \\
Goal success radius & $1.0$; maze uses $0.8$ \\
Render size & $80$; maze uses $128$ \\
% Target spawn on goal & rejected \\
% Default target initial velocity & $0.0$ \\
\bottomrule
\end{tabular}
\label{app:box2d_setting}
\end{table}

\begin{table}[h]
\centering
\small
\setlength{\tabcolsep}{5pt}
\caption{Box2D tasks initialization description}
\label{tab:appendix-box2d-tasks}
\begin{tabular}{p{0.20\linewidth}p{0.28\linewidth}p{0.14\linewidth}p{0.28\linewidth}}
\toprule
Paper task name & Reset / goal initialization \\
\midrule
Box2D center & random bodies, goal fixed in the center \\
Box2D goal & random control, target, and goal \\
Box2D hard & random control, target, and goal \\
Box2D hard velocity & hard task with target initial velocity $5.0$ \\
Box2D maze & control fixed at $(-3.8,-3.8)$, goal fixed at $(3.8,3.8)$, wall fixed at $(x,y,w,h)=(0,2.5,0.22,5.0)$ \\
\bottomrule
\end{tabular}
\label{app:box2d_tasks}
\end{table}

\subsection{MetaWorld}

MetaWorld tasks use compact point observations.  The state dimension is $7$:
\[
    s =
    [p_{\mathrm{hand}}\in\mathbb{R}^3,\;
      q_{\mathrm{gripper}}\in\mathbb{R},\;
      p_{\mathrm{object}}\in\mathbb{R}^3].
\]
Default goal gripper opening is $0.4$ and default hand offset is
$(0,0,0.03)$.  The default minimum distance
between the initial object and sampled goal is $0.15$.  Episodes terminate on
success during training.

\begin{table}[h]
\centering
\small
\setlength{\tabcolsep}{5pt}
\caption{MetaWorld tasks and success definitions.}
\label{tab:appendix-metaworld-tasks}
\begin{tabular}{p{0.23\linewidth}p{0.23\linewidth}p{0.22\linewidth}p{0.26\linewidth}}
\toprule
Paper task name & Env id & Achieved object & Success condition \\
\midrule
MetaWorld push & \texttt{push-v2} & object position & distance $<0.05$ \\
MetaWorld pick place & \texttt{pick-place-v2} & object position & distance $<0.07$ \\
MetaWorld peg insert & \texttt{peg-insert-side-v2} & peg head position & scaled distance $<0.07$ \\
MetaWorld sweep into & \texttt{sweep-into-v2} & object position & distance $<0.05$ and $z$ error $<0.02$ \\
\bottomrule
\end{tabular}
\end{table}

The \texttt{peg-insert-side-v2} wrapper maps to the canonical
\texttt{peg-insert-side-v3} task, uses success scale $(1,2,2)$, and uses a
goal hand offset $(0.13,0,0.03)$.  The \texttt{sweep-into-v2} success
calculation ignores the target $z$ axis for the main distance while enforcing a
separate $z$-axis threshold of $0.02$.

\subsection{Post-Hoc Air Hockey Consecutive Evaluation}

For the one-minute Air Hockey challenge, evaluation is separate from training.
The evaluator disables terminate-on-success so the puck can be re-armed to a
new goal after each success.  The standard setting is
\[
    \text{horizon}=1200\text{ steps},\qquad
    \text{seeds}=1,\ldots,16.
\]
The recorded metrics are complete rate, mean number of consecutive goals, and
maximum number of consecutive goals.  Rendering is disabled by default for
batch evaluation, but the same evaluator can render rollout videos for selected
checkpoints.

%% file: appendix/real_setup.tex
\section{Real Setup}
\label{app:real_setup}
The physical setup utilizes a UR5e Universal Robotics robotic arm with a Robotiq two-fingered gripper holding an air hockey paddle. The robot is placed at one end of an inclined air hockey table, such that the puck will always fall back to the robot. By utilizing two 3D-printed curved side pieces, the robot can self-reset by pushing the puck up the side, then striking it. We use two human-demonstrated open-loop policies to perform the puck reset. The puck positions are detected using a 120Hz overhead camera with blob detection and converted with a homography into x,y coordinates. The full control loop runs at 20Hz, using the last 5 puck positions, the paddle's position and velocity, and the goal position, which is the same as in the simulation.

% \begin{wrapfigure}{r}{0.40\textwidth}
%     \centering
%     \vspace{-1em}
%     \includegraphics[width=0.38\textwidth]{figure/air_hockey_setup.png}
%     \caption{
%     Real-world robotic air hockey testbed.
%     }
%     \label{fig:real_world_air_hockey_setup}
%     \vspace{-1em}
% \end{wrapfigure}

Trajectories are as follows: after the puck successfully crosses the midline of the table, the learned policy begins until the puck falls to a position lower (closer to the base) than the robot. This pattern is akin to juggling, except that the task requires putting the radius of the puck inside the radius of the goal position (success), before the puck falls past the paddle (failure). Evaluation utilizes a fixed set of 20 goals in a $4\times5$ pattern, with 5 goals per row and 4 rows getting progressively lower. Additional details, including the dimensions of the table, paddle, puck, etc., can be found in \citet{chuck2024robot}.

To transfer trained policies from the simulator to the real robot, we first identified system parameters for the paddle using a small number (2 min) of human-generated paddle data. Puck parameters are then estimated based on human juggling data and the physics of the puck, as this was superior to black box optimization for estimating these parameters. Consequently, a meaningful sim-to-real gap exists in the puck movement between the real and simulated environments. To train policies robust to this gap, we leverage domain randomization in the style of \citet{tobin2017domain,peng2018sim,kumar2021rma}. For system parameters, we randomize the paddle density (which affects collisions), puck damping, and gravity as follows: 
\begin{center}
\small
\begin{tabular}{lccc}
\hline
\textbf{Parameter} & \textbf{Range} & \textbf{SysID} & \textbf{Variation} \\
\hline
Paddle density & $[2250.0,\ 3750.0]$ & $3000$ & $\pm 25\%$ \\
Puck damping & $[0.1335,\ 0.2225]$ & $0.178$ & $\pm 25\%$ \\
Gravity & $[-0.826,\ -0.496]$ & $-0.661$ & $\pm 25\%$ \\
\hline
\end{tabular}
\end{center}

We also perform substantial random resets for the puck reset, resetting the puck at positions across the top and middle with randomly selected velocities, as well as random positions near the paddle, to simulate the variance and challenging positions from where a trajectory might start. The puck is also spawned in random positions across its workspace. Random Noise and occlusions (missing values) are added to the observation of the puck position to force the policy to be robust to perceptual noise, as well as adding perceptual delay, where the puck position is lagged when given to the agent. Finally, 4-step smoothing is applied to the real robot, and the movement of the paddle in simulation is tuned with a PID controller to match this kind of smooth motion. 

Trajectory visualization could be found in: IWR: Figure~\ref{fig:iwr-render-paired}, PPO: Figure~\ref{fig:ppo-paired}, SAC: Figure~\ref{fig:sac-gcrl-paired}, SAC+HER: Figure~\ref{fig:sac-her-paired}, SAC+HINT: Figure~\ref{fig:sac-hint-paired}, SGCRL: Figure~\ref{fig:sgcrl-render-paired}, CRTR: Figure~\ref{fig:crtr-paired}.

% -----------------------------
% IWR Render
% -----------------------------
\begin{figure}[t]
    \centering
    \begin{subfigure}[t]{0.42\textwidth}
        \centering
        \includegraphics[width=\linewidth]{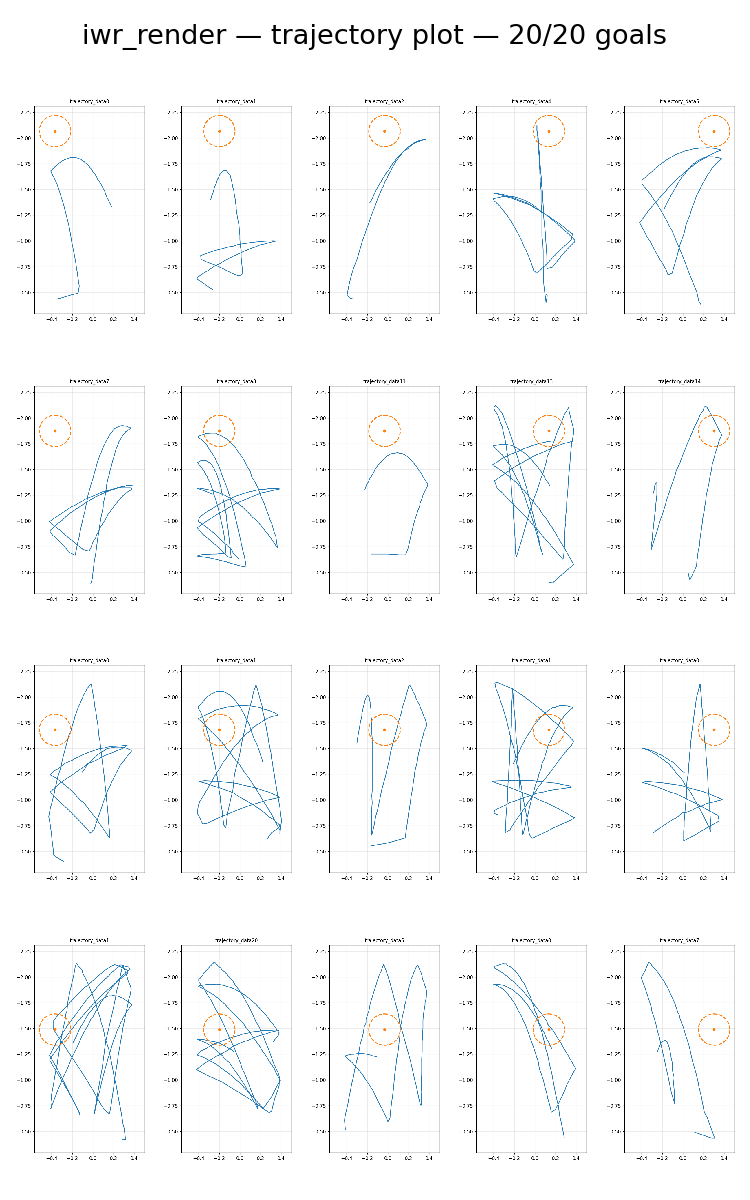}
        \caption{Trajectory-space visualization.}
        \label{fig:iwr-render-trajectory}
    \end{subfigure}
    \hfill
    \begin{subfigure}[t]{0.42\textwidth}
        \centering
        \includegraphics[width=\linewidth]{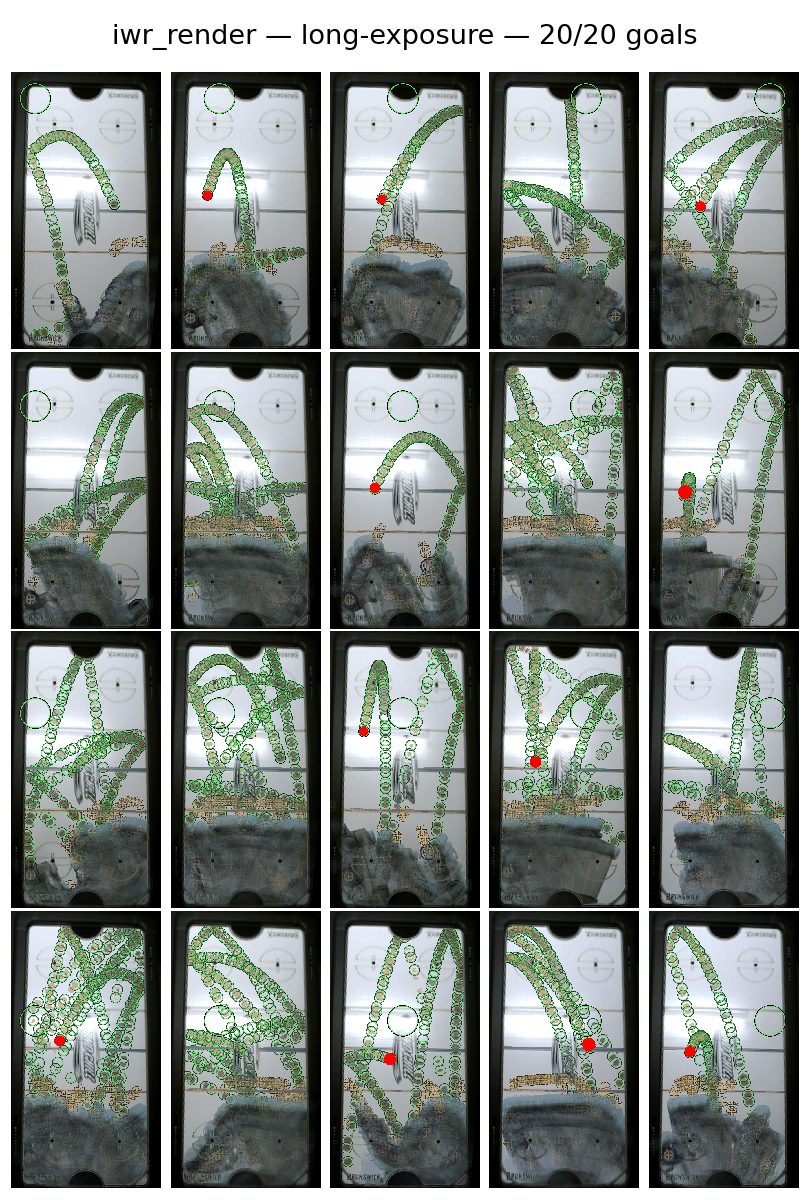}
        \caption{Long-exposure real-world rollout.}
        \label{fig:iwr-render-long-exposure}
    \end{subfigure}
    \caption{IWR evaluation rollouts across 20/20 achieved goals.}
    \label{fig:iwr-render-paired}
\end{figure}

% -----------------------------
% PPO
% -----------------------------
\begin{figure}[t]
    \centering
    \begin{subfigure}[t]{0.42\textwidth}
        \centering
        \includegraphics[width=\linewidth]{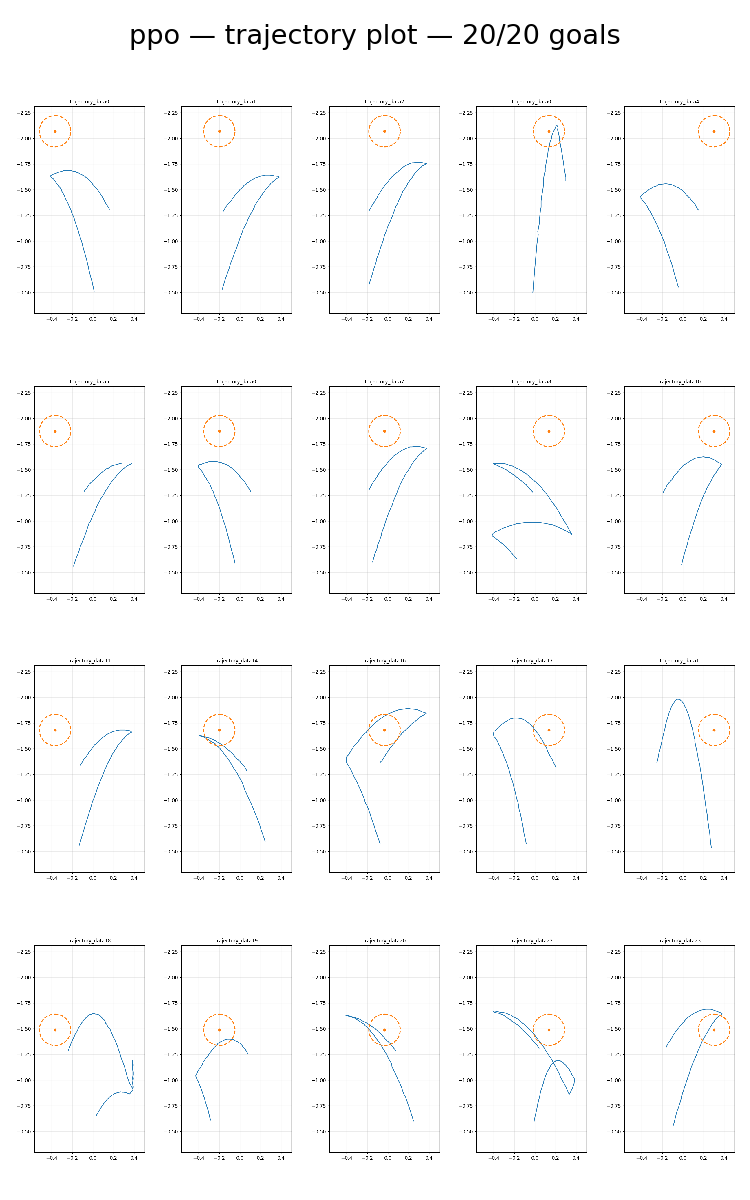}
        \caption{Trajectory-space visualization.}
        \label{fig:ppo-trajectory}
    \end{subfigure}
    \hfill
    \begin{subfigure}[t]{0.42\textwidth}
        \centering
        \includegraphics[width=\linewidth]{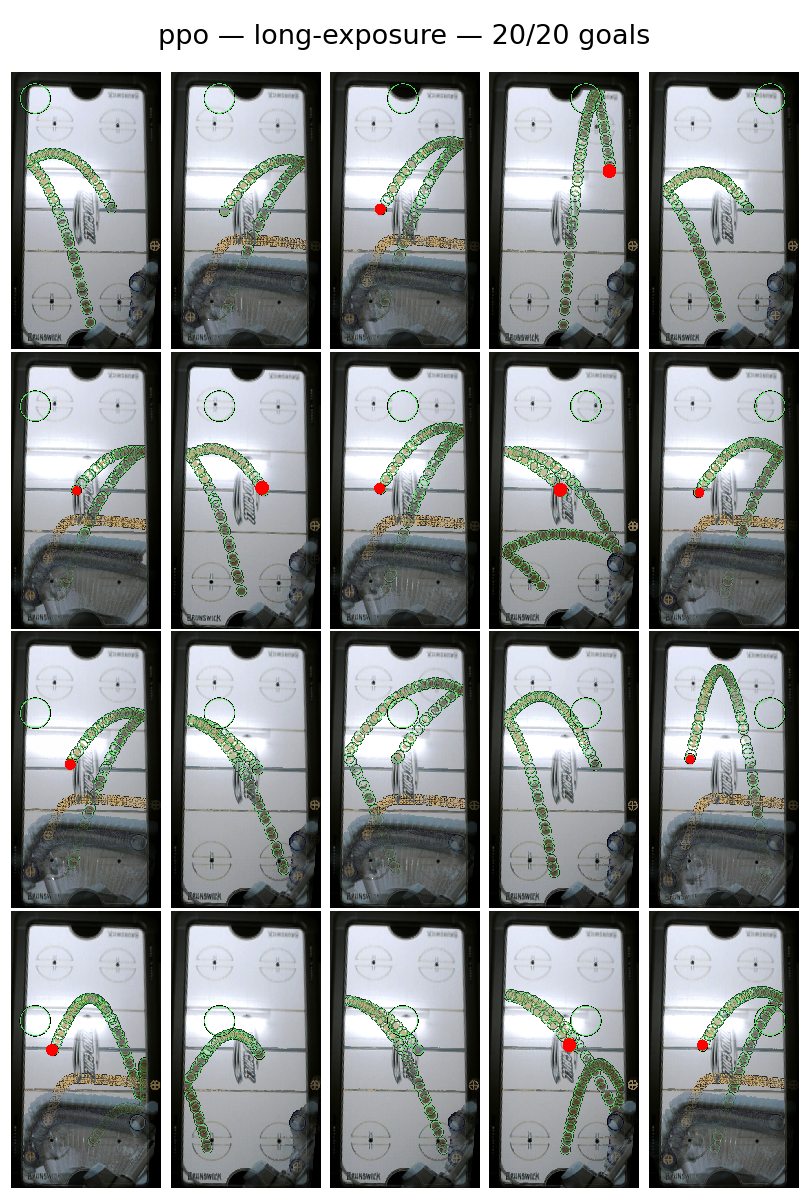}
        \caption{Long-exposure real-world rollout.}
        \label{fig:ppo-long-exposure}
    \end{subfigure}
    \caption{PPO evaluation rollouts across 20/20 achieved goals.}
    \label{fig:ppo-paired}
\end{figure}

% -----------------------------
% SAC-GCRL
% -----------------------------
\begin{figure}[t]
    \centering
    \begin{subfigure}[t]{0.42\textwidth}
        \centering
        \includegraphics[width=\linewidth]{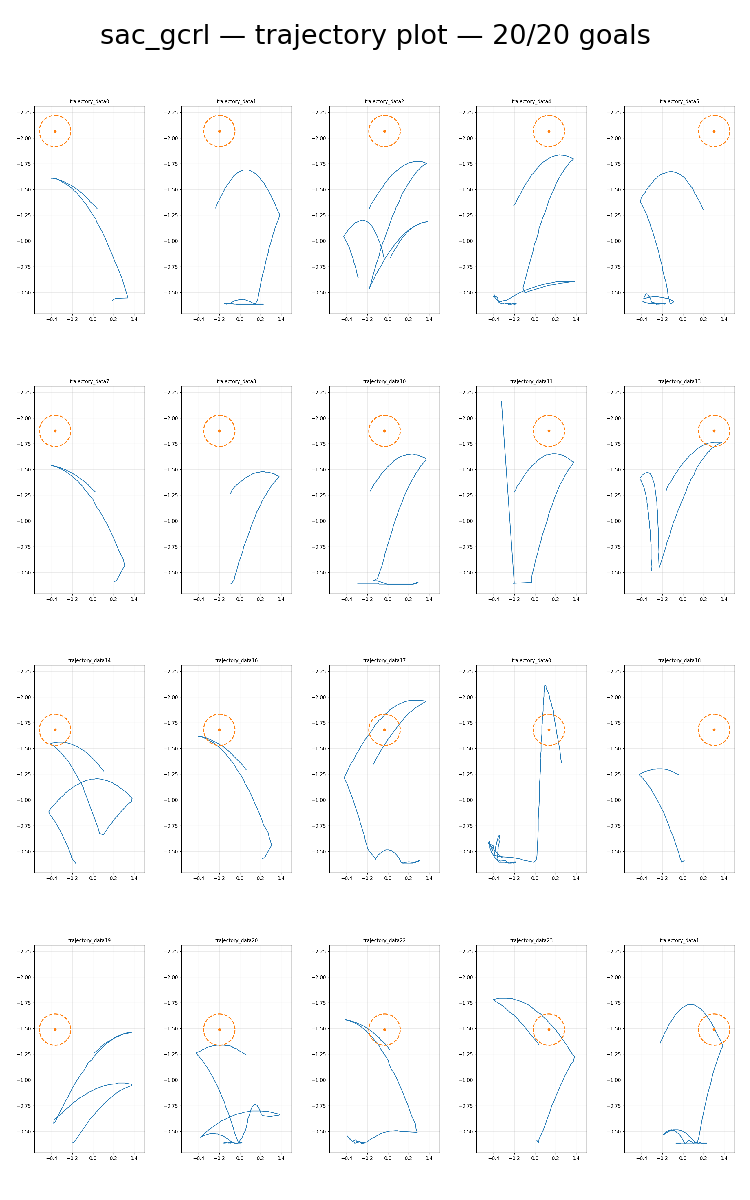}
        \caption{Trajectory-space visualization.}
        \label{fig:sac-gcrl-trajectory}
    \end{subfigure}
    \hfill
    \begin{subfigure}[t]{0.42\textwidth}
        \centering
        \includegraphics[width=\linewidth]{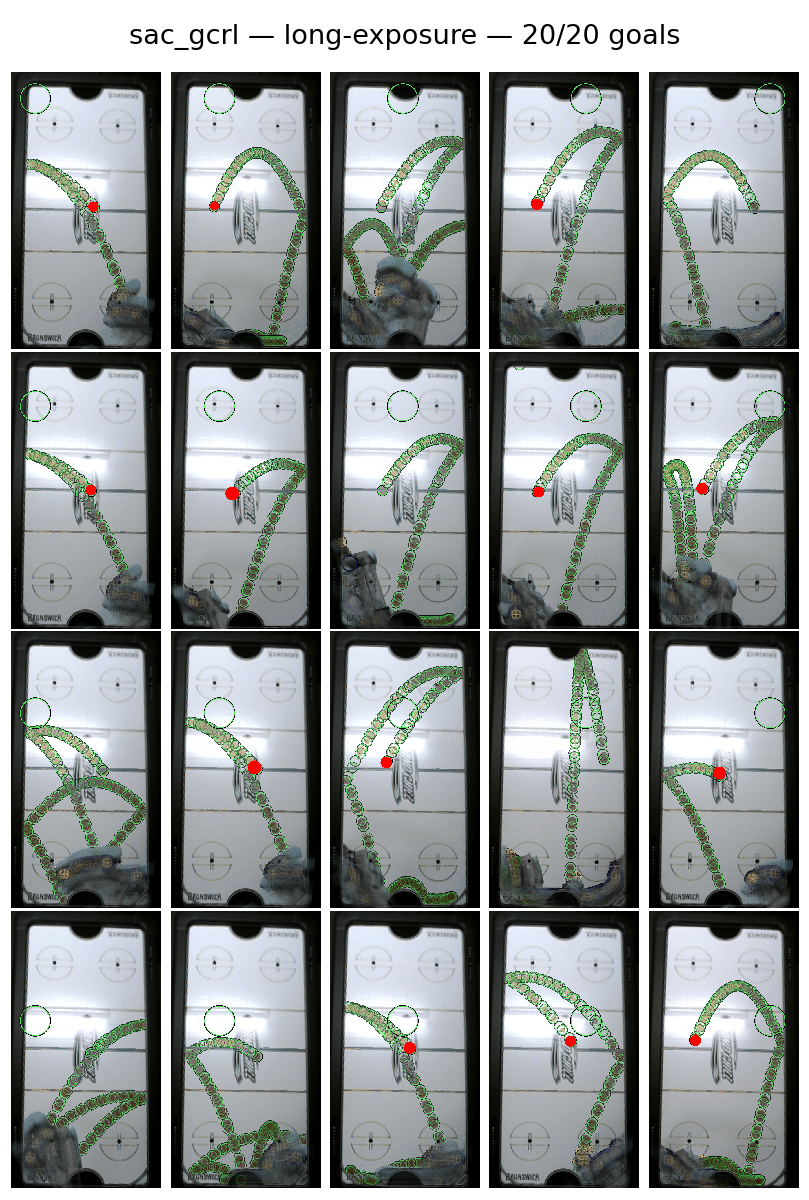}
        \caption{Long-exposure real-world rollout.}
        \label{fig:sac-gcrl-long-exposure}
    \end{subfigure}
    \caption{SAC-GCRL evaluation rollouts across 20/20 achieved goals.}
    \label{fig:sac-gcrl-paired}
\end{figure}

% -----------------------------
% SAC-HER
% -----------------------------
\begin{figure}[t]
    \centering
    \begin{subfigure}[t]{0.42\textwidth}
        \centering
        \includegraphics[width=\linewidth]{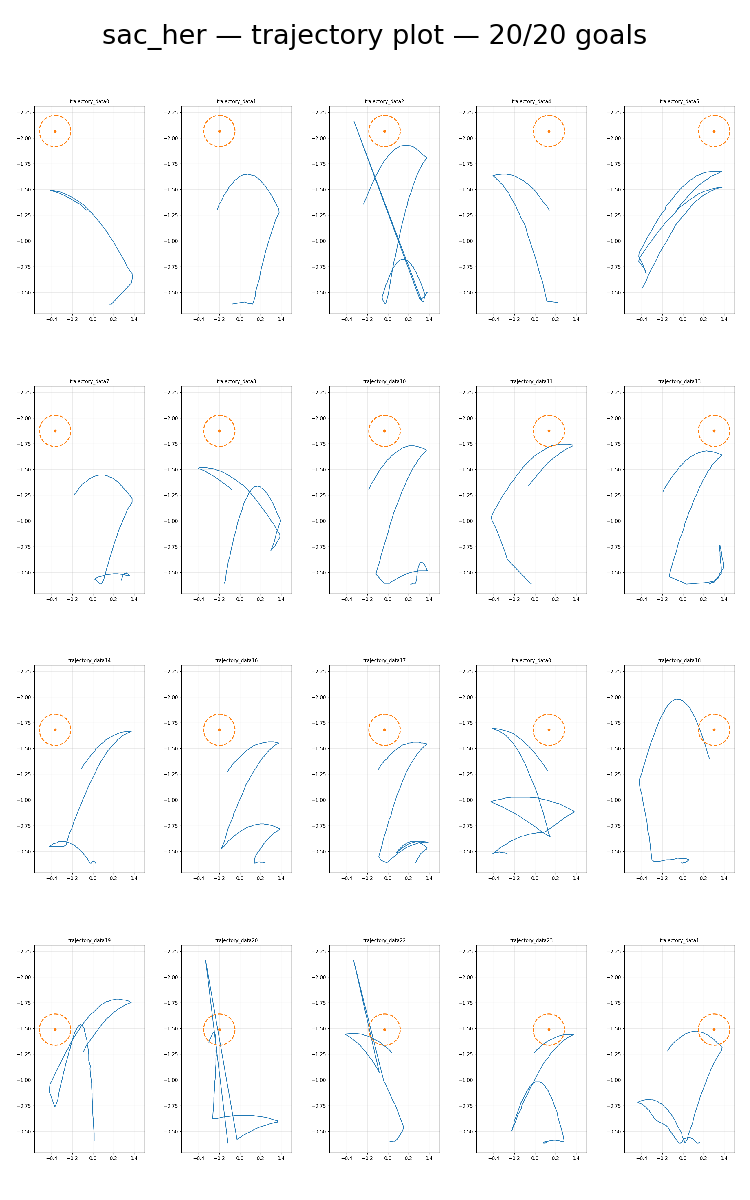}
        \caption{Trajectory-space visualization.}
        \label{fig:sac-her-trajectory}
    \end{subfigure}
    \hfill
    \begin{subfigure}[t]{0.42\textwidth}
        \centering
        \includegraphics[width=\linewidth]{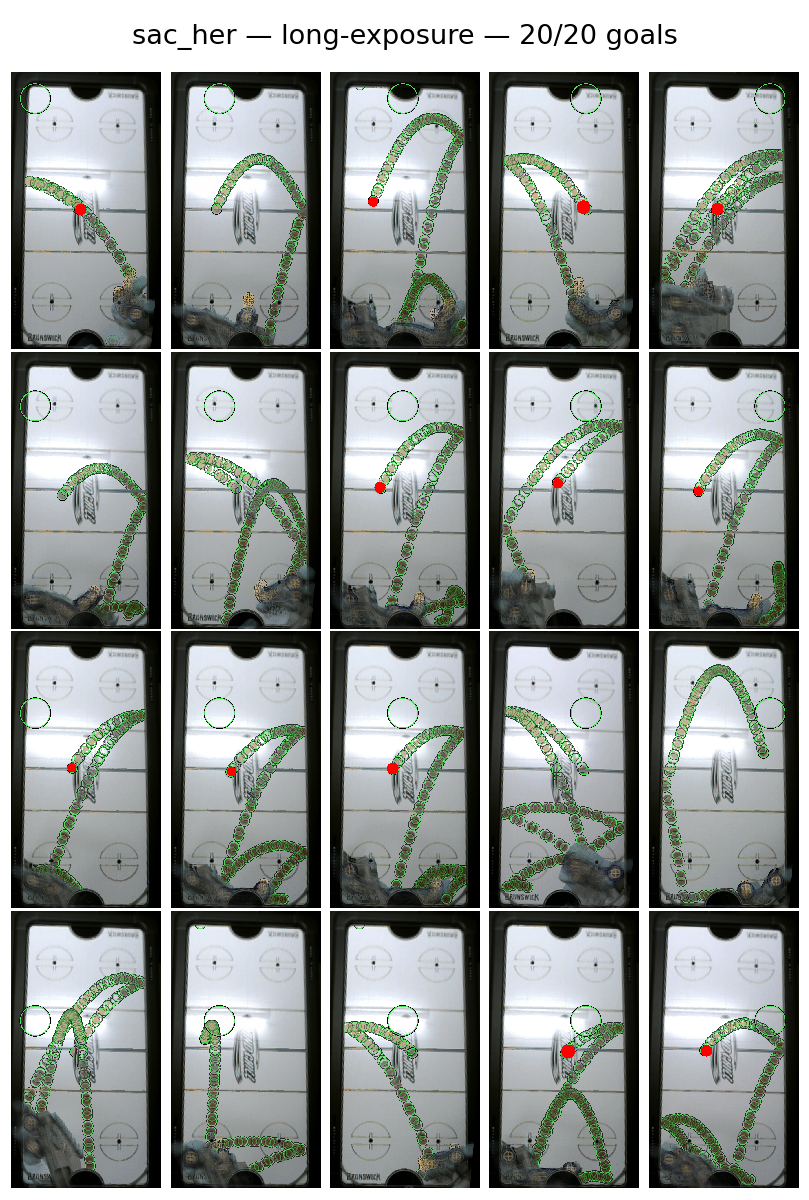}
        \caption{Long-exposure real-world rollout.}
        \label{fig:sac-her-long-exposure}
    \end{subfigure}
    \caption{SAC-HER evaluation rollouts across 20/20 achieved goals.}
    \label{fig:sac-her-paired}
\end{figure}

% -----------------------------
% SAC-Hint
% -----------------------------
\begin{figure}[t]
    \centering
    \begin{subfigure}[t]{0.42\textwidth}
        \centering
        \includegraphics[width=\linewidth]{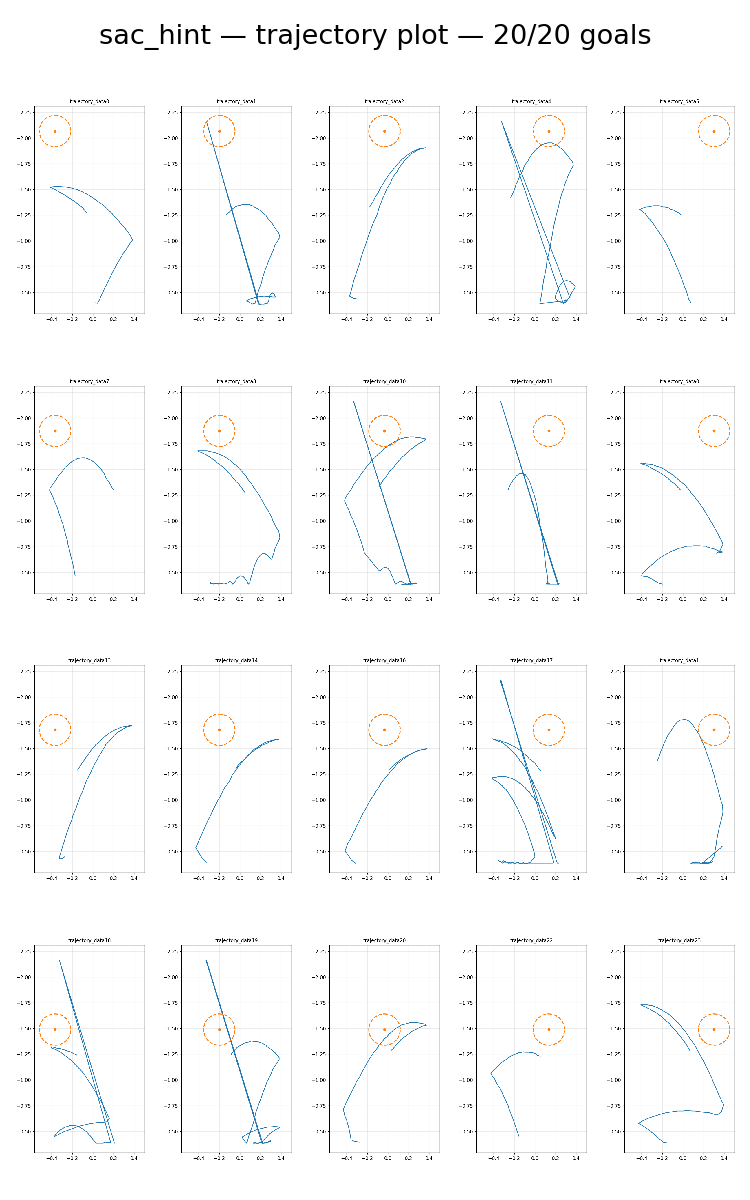}
        \caption{Trajectory-space visualization.}
        \label{fig:sac-hint-trajectory}
    \end{subfigure}
    \hfill
    \begin{subfigure}[t]{0.42\textwidth}
        \centering
        \includegraphics[width=\linewidth]{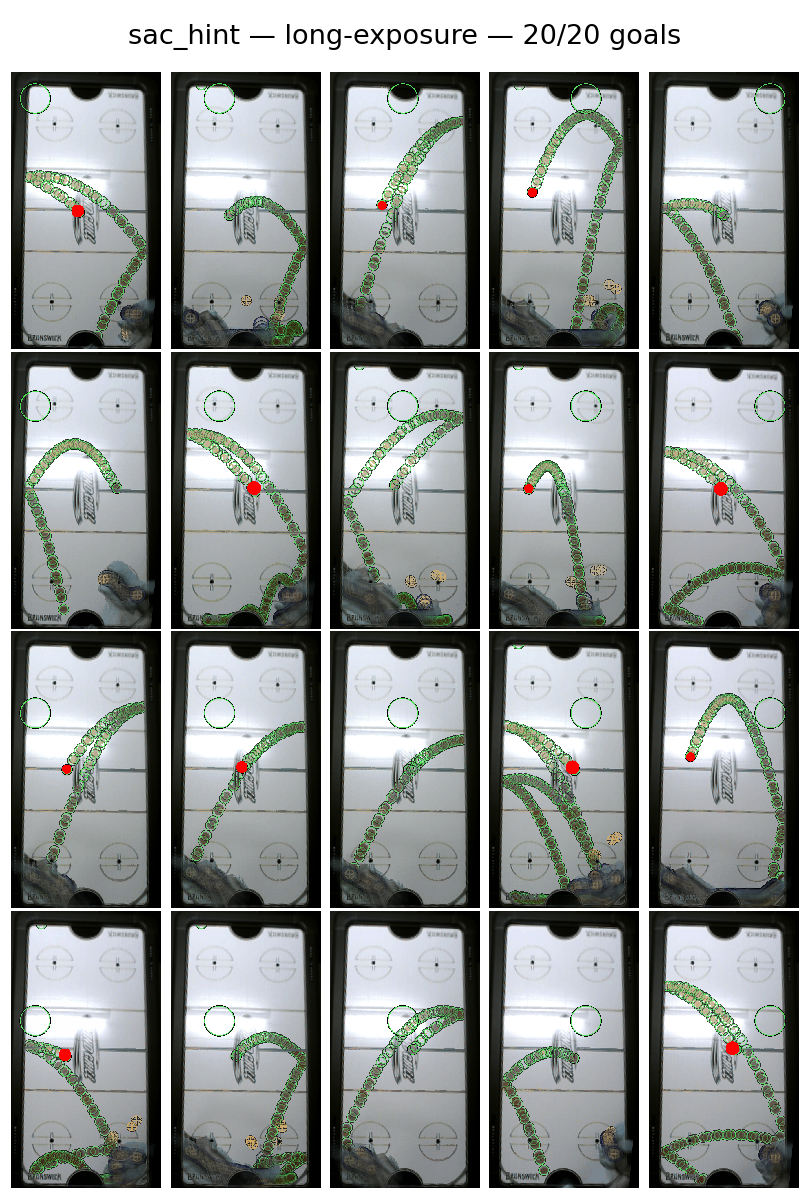}
        \caption{Long-exposure real-world rollout.}
        \label{fig:sac-hint-long-exposure}
    \end{subfigure}
    \caption{SAC-Hint evaluation rollouts across 20/20 achieved goals.}
    \label{fig:sac-hint-paired}
\end{figure}

% -----------------------------
% SGCRL Render
% -----------------------------
\begin{figure}[t]
    \centering
    \begin{subfigure}[t]{0.42\textwidth}
        \centering
        \includegraphics[width=\linewidth]{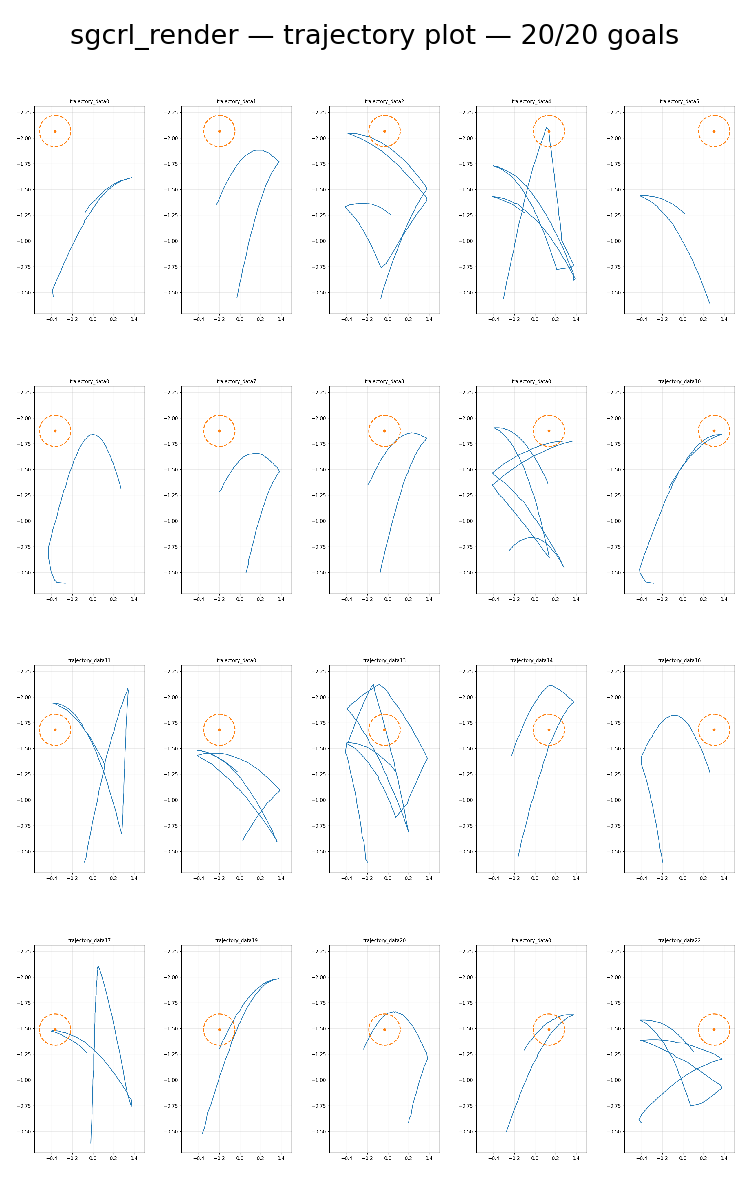}
        \caption{Trajectory-space visualization.}
        \label{fig:sgcrl-render-trajectory}
    \end{subfigure}
    \hfill
    \begin{subfigure}[t]{0.42\textwidth}
        \centering
        \includegraphics[width=\linewidth]{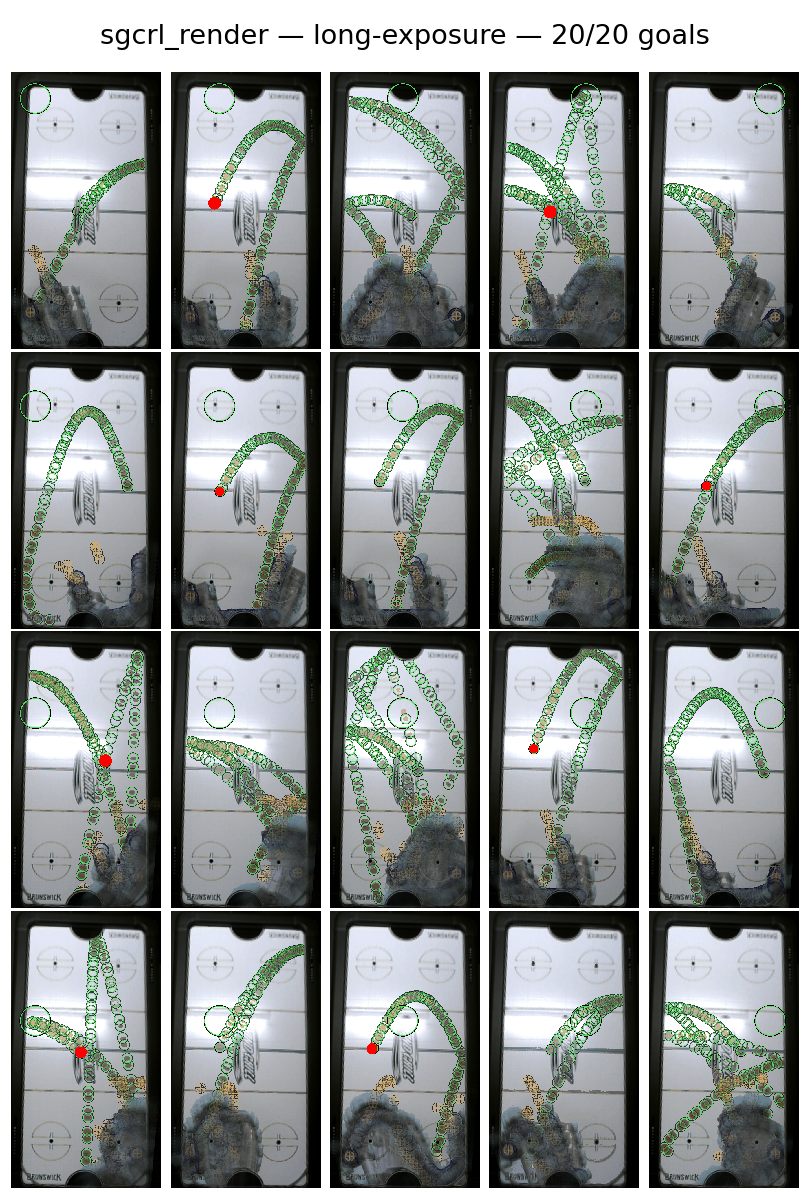}
        \caption{Long-exposure real-world rollout.}
        \label{fig:sgcrl-render-long-exposure}
    \end{subfigure}
    \caption{SGCRL evaluation rollouts across 20/20 achieved goals.}
    \label{fig:sgcrl-render-paired}
\end{figure}

\begin{figure}[t]
    \centering
    \begin{subfigure}[t]{0.42\textwidth}
        \centering
        \includegraphics[width=\linewidth]{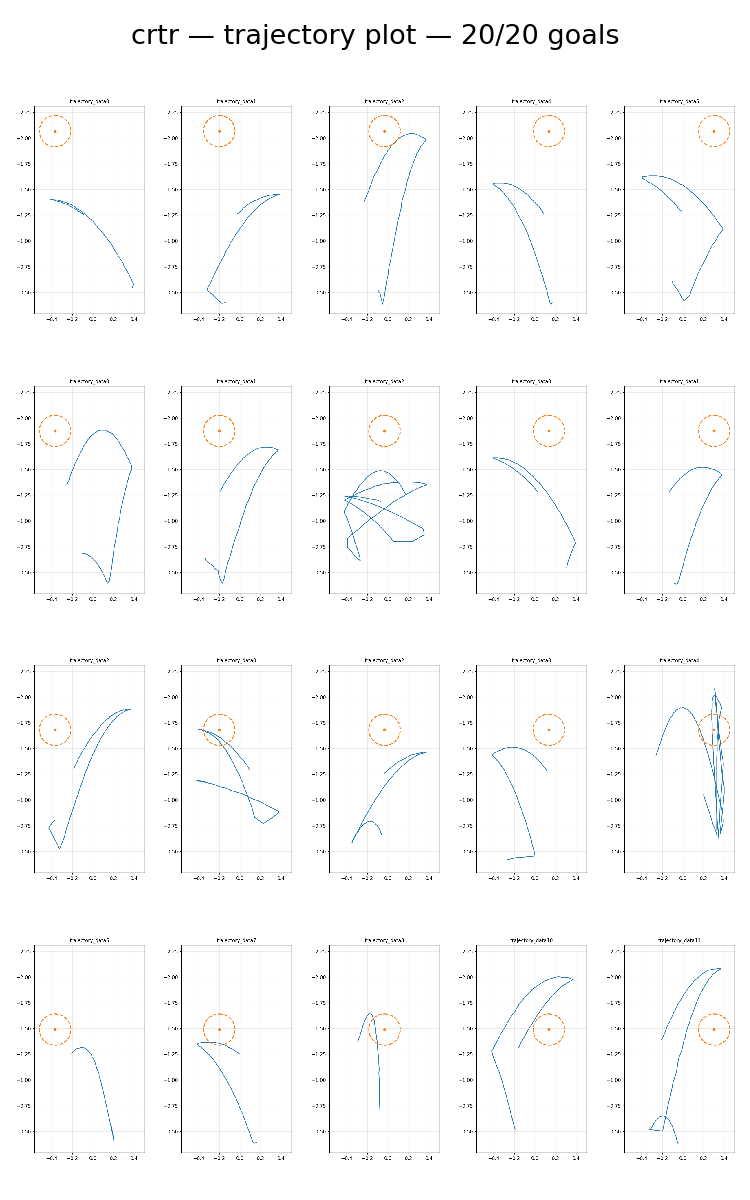}
        \caption{Trajectory-space visualization.}
        \label{fig:crtr-trajectory}
    \end{subfigure}
    \hfill
    \begin{subfigure}[t]{0.42\textwidth}
        \centering
        \includegraphics[width=\linewidth]{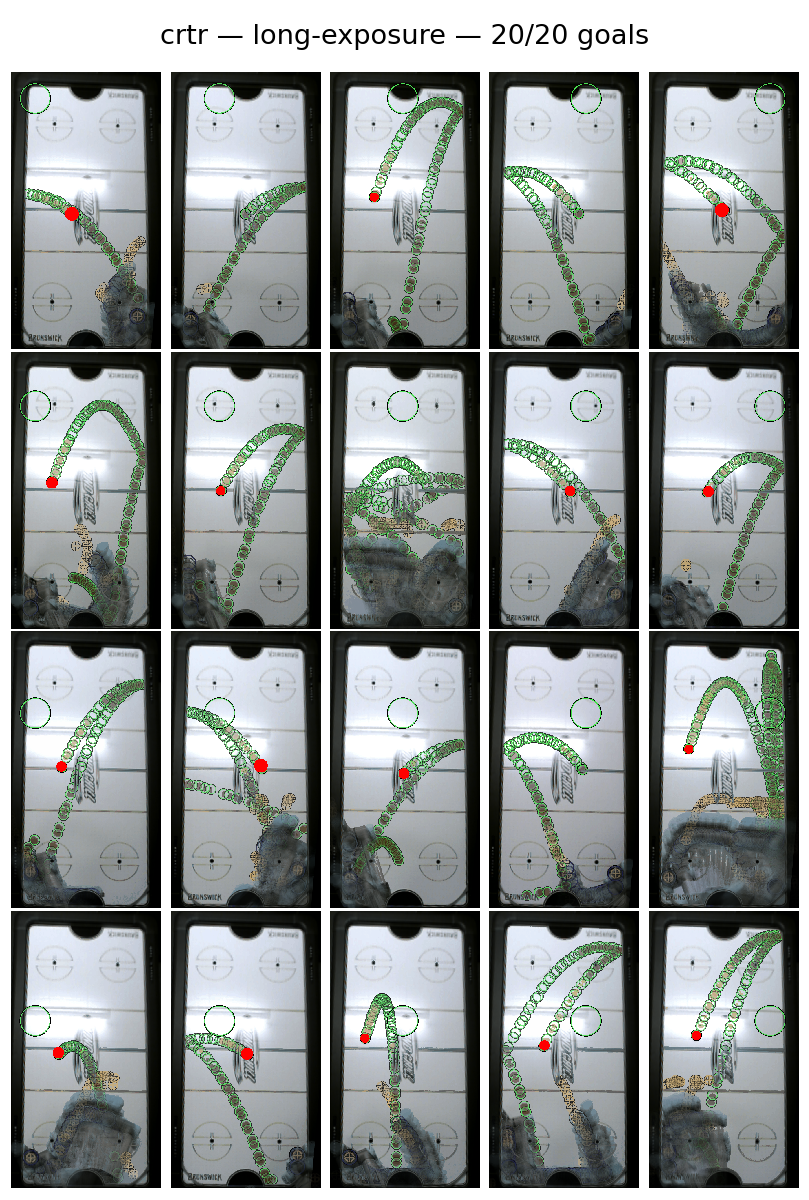}
        \caption{Long-exposure real-world rollout.}
        \label{fig:crtr-long-exposure}
    \end{subfigure}
    \caption{CRTR evaluation rollouts across 20/20 achieved goals.}
    \label{fig:crtr-paired}
\end{figure}

%% file: appendix/energy_visualization.tex
\section{Energy Visualization}
\label{app:energy_visualization}

Figure~\ref{fig:energy_landscape} shows the t-SNE visualization for learned $\phi(s,a)$. On locomotion tasks, e.g., Metaworld(Reach), the trajectory of latent space is smooth, while in manipulation tasks Air Hockey, the trajectory jumps whenever the interaction happens, which is indicated with a red circle. This result supports our analysis of mode-switching challenge of CRL.

\begin{figure}
    \centering
    \vspace{-1em}
    \includegraphics[width=0.98\textwidth]{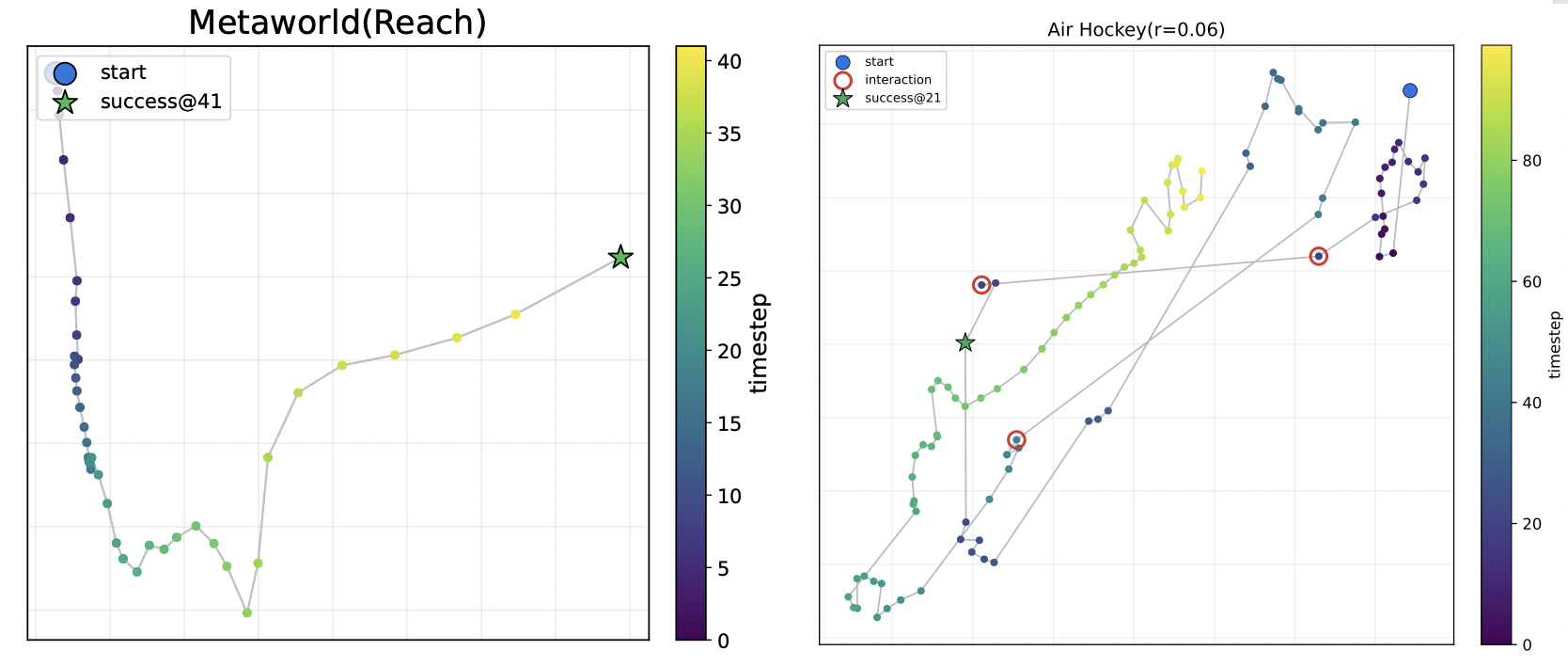}
    \caption{t-SNE visualization of $\phi(s,a)$ of SGCRL. Left: Metaworld(reach) Right: Air Hockey Simulator. Red circle indicates interactions. Star indicates success.
    }
    \label{fig:energy_landscape}
    \vspace{-1em}
\end{figure}

%% file: appendix/ablation.tex
\section{Detailed Experiment Result}
\label{app:ablation}

\subsection{Training curve and detailed table}

Figure~\ref{fig:app_training_curve} shows the training curve across 11 training tasks, and Table~\ref{app:tab_main_var} shows the standard deviation. Despite marginal improvement in tasks that has a high success rate to reach with Monte Carlo simulation, tasks like Air Hockey with a property of interaction dependency has a better performance. 

\begin{figure}[h]
    \vspace{-8pt}   
    \centering
    \small
\includegraphics[width=1.0\linewidth]{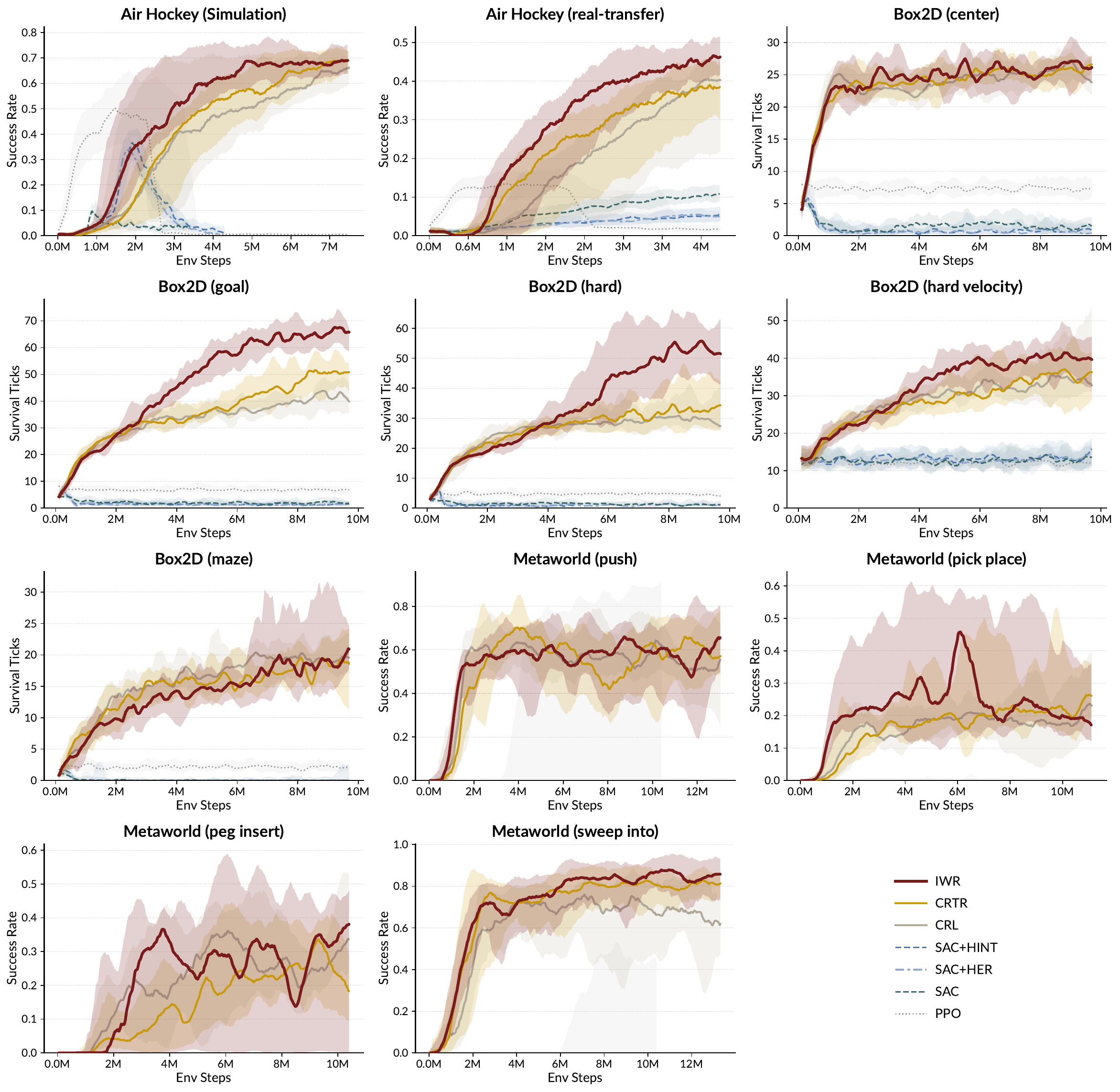}
    \caption{\textbf{Training curve for IWR compared to the baseline methods}}
    \label{fig:app_training_curve}
    \vspace{-.5cm}
\end{figure}

\input{table/table_main_var}

\subsection{Ablation}

We do ablation with a grid search on 6 tasks, with a multiplier on the hyperparameter, separately with 0.5 and 2.0 times of interaction threshold $c$ in Table~\ref{app:tab_ablation_c} and interaction weight $2\sigma^2$ in Table~\ref{app:tab_ablation_w}. Notice that the CRTR could be viewed as an ablation with $c \rightarrow \infty$, which is equivalently a uniform sampling. Notably all the methods has a better performance than the baseline, and the grid search result is even better than the selected result in the main paper. IWR is robust across a broad range of interaction-weight parameters, and further tuning can improve performance. This result also motivates that further works should focus on the line to improve interaction-related future sampling to better solve the problem of piecewise nonlinearity in the CRL latent space.

\input{table/table_ablation_c}
\input{table/table_ablation_weight}

\subsection{Real Robot Trajectory Visualization}

%% file: table/table_main_var.tex
% \documentclass{article}
% \usepackage[margin=0.55in]{geometry}
% \usepackage{booktabs}
% \usepackage{graphicx}
% \begin{document}
\begin{table*}[t]
\centering
\scriptsize
\setlength{\tabcolsep}{3pt}
\caption{Success rates over Air Hockey, Box2D, and MetaWorld tasks. Values report the central result from the main table with standard deviation across seeds.}
\label{tab:main1_std}
\resizebox{\textwidth}{!}{%
\begin{tabular}{lrrrrrrr}
\toprule
Task & PPO & SAC & SAC+HER & SAC+HINT & CRL & CRTR & IWR (Ours) \\
\midrule
Air Hockey (Simulation) & 0.617 $\pm$ 0.108 & 0.145 $\pm$ 0.029 & 0.398 $\pm$ 0.029 & 0.422 $\pm$ 0.032 & 0.695 $\pm$ 0.039 & 0.727 $\pm$ 0.017 & \textbf{0.742 $\pm$ 0.041 (+2.1\%)} \\
Air Hockey (real-transfer) & 0.160 $\pm$ 0.009 & 0.215 $\pm$ 0.018 & 0.129 $\pm$ 0.005 & 0.125 $\pm$ 0.006 & 0.477 $\pm$ 0.044 & 0.465 $\pm$ 0.014 & \textbf{0.500 $\pm$ 0.026 (+4.8\%)} \\
Air Hockey (Real Robot) & 0/20 & 0/20 & 0/20 & 0/20 & 5/20 & 2/20 & \textbf{12/20} \\
\midrule
Box2D (center) & 0.086 $\pm$ 0.006 & 0.058 $\pm$ 0.007 & 0.088 $\pm$ 0.015 & 0.088 $\pm$ 0.012 & 0.278 $\pm$ 0.013 & 0.274 $\pm$ 0.016 & \textbf{0.288 $\pm$ 0.011 (+3.6\%)} \\
Box2D (goal) & 0.089 $\pm$ 0.005 & 0.046 $\pm$ 0.004 & 0.086 $\pm$ 0.005 & 0.064 $\pm$ 0.011 & 0.450 $\pm$ 0.053 & 0.558 $\pm$ 0.051 & \textbf{0.709 $\pm$ 0.034 (+27.1\%)} \\
Box2D (hard) & 0.060 $\pm$ 0.006 & 0.042 $\pm$ 0.005 & 0.064 $\pm$ 0.009 & 0.076 $\pm$ 0.013 & 0.317 $\pm$ 0.050 & 0.365 $\pm$ 0.069 & \textbf{0.565 $\pm$ 0.084 (+54.8\%)} \\
Box2D (hard velocity) & 0.148 $\pm$ 0.010 & 0.149 $\pm$ 0.016 & 0.152 $\pm$ 0.007 & 0.139 $\pm$ 0.013 & 0.387 $\pm$ 0.075 & 0.377 $\pm$ 0.062 & \textbf{0.436 $\pm$ 0.036 (+12.7\%)} \\
Box2D (maze) & 0.033 $\pm$ 0.003 & 0.012 $\pm$ 0.003 & 0.031 $\pm$ 0.008 & 0.035 $\pm$ 0.018 & 0.217 $\pm$ 0.021 & 0.206 $\pm$ 0.024 & \textbf{0.223 $\pm$ 0.084 (+2.8\%)} \\
\midrule
MetaWorld (peg insert) & 0.000 $\pm$ 0.000 & 0.000 $\pm$ 0.000 & 0.000 $\pm$ 0.000 & 0.000 $\pm$ 0.000 & 0.430 $\pm$ 0.101 & 0.367 $\pm$ 0.103 & \textbf{0.438 $\pm$ 0.268 (+1.9\%)} \\
MetaWorld (pick place) & 0.000 $\pm$ 0.000 & 0.000 $\pm$ 0.000 & 0.004 $\pm$ 0.002 & 0.000 $\pm$ 0.000 & 0.266 $\pm$ 0.073 & 0.305 $\pm$ 0.163 & \textbf{0.570 $\pm$ 0.196 (+86.9\%)} \\
MetaWorld (push) & 0.000 $\pm$ 0.000 & 0.000 $\pm$ 0.000 & 0.004 $\pm$ 0.002 & 0.000 $\pm$ 0.000 & 0.699 $\pm$ 0.059 & \textbf{0.750 $\pm$ 0.064} & 0.730 $\pm$ 0.080 \\
MetaWorld (sweep into) & 0.000 $\pm$ 0.000 & 0.004 $\pm$ 0.002 & 0.020 $\pm$ 0.023 & 0.004 $\pm$ 0.005 & 0.805 $\pm$ 0.059 & 0.910 $\pm$ 0.017 & \textbf{0.926 $\pm$ 0.019 (+1.8\%)} \\
Average IWR improvement &  &  &  &  &  &  & \textbf{+19.8\%} \\
\bottomrule
\end{tabular}%
}
\label{app:tab_main_var}
\end{table*}
% \end{document}

%% file: table/table_ablation_c.tex
% \documentclass{article}
% \usepackage[margin=0.55in]{geometry}
% \usepackage{booktabs}
% \usepackage{graphicx}
% \begin{document}
\begin{table*}[t]
\centering
\small
\setlength{\tabcolsep}{4pt}
\caption{IWR threshold ablation with fixed $\sigma$ scale. Percentages compare against the better of CRL and CRTR.}
\label{tab:iwr_c_search}
\resizebox{\textwidth}{!}{%
\begin{tabular}{lrrrrr}
\toprule
Task & CRL & CRTR($c \rightarrow \infty$) & $c=0.5\times$ & $c=1.0\times$ & $c=1.5\times$ \\
\midrule
Air Hockey r0.06 & 0.695 $\pm$ 0.039 & 0.727 $\pm$ 0.017 & 0.715 $\pm$ 0.039 (-1.7\%) & 0.742 $\pm$ 0.041 (+2.1\%) & \textbf{0.770 $\pm$ 0.046 (+5.9\%)} \\
Air Hockey real-transfer & 0.477 $\pm$ 0.044 & 0.465 $\pm$ 0.014 & 0.496 $\pm$ 0.213 (+4.0\%) & 0.500 $\pm$ 0.026 (+4.8\%) & \textbf{0.543 $\pm$ 0.028 (+13.8\%)} \\
Box2D hard & 0.317 $\pm$ 0.050 & 0.365 $\pm$ 0.069 & \textbf{0.626 $\pm$ 0.020 (+71.4\%)} & 0.565 $\pm$ 0.084 (+54.8\%) & 0.532 $\pm$ 0.114 (+45.9\%) \\
Box2D hard velocity & 0.387 $\pm$ 0.075 & 0.377 $\pm$ 0.062 & \textbf{0.501 $\pm$ 0.031 (+29.5\%)} & 0.436 $\pm$ 0.036 (+12.7\%) & 0.462 $\pm$ 0.014 (+19.5\%) \\
MetaWorld pick-place & 0.266 $\pm$ 0.073 & 0.305 $\pm$ 0.163 & \textbf{0.801 $\pm$ 0.235 (+162.6\%)} & 0.570 $\pm$ 0.196 (+86.9\%) & 0.734 $\pm$ 0.304 (+140.8\%) \\
MetaWorld push & 0.699 $\pm$ 0.059 & 0.750 $\pm$ 0.064 & 0.867 $\pm$ 0.045 (+15.6\%) & 0.730 $\pm$ 0.080 (-2.7\%) & \textbf{0.887 $\pm$ 0.067 (+18.2\%)} \\
\midrule
Average improvement &  &  & \textbf{+46.9\%} & +26.4\% & +40.7\% \\
\bottomrule
\end{tabular}%
}
\label{app:tab_ablation_c}
\end{table*}
% \end{document}

%% file: table/table_ablation_weight.tex
% \documentclass{article}
% \usepackage[margin=0.55in]{geometry}
% \usepackage{booktabs}
% \usepackage{graphicx}
% \begin{document}
\begin{table*}[t]
\centering
\small
\setlength{\tabcolsep}{4pt}
\caption{IWR $2\sigma^2$ ablation with fixed contact threshold scale. Percentages compare against the better of CRL and CRTR.}
\label{tab:iwr_sigma_search}
\resizebox{\textwidth}{!}{%
\begin{tabular}{lrrrrr}
\toprule
Task & CRL & CRTR & $2\sigma^2=0.5 \times$ & $2\sigma^2=1.0\times$ & $2\sigma^2=2.0\times$ \\
\midrule
Air Hockey r0.06 & 0.695 $\pm$ 0.039 & 0.727 $\pm$ 0.017 & 0.789 $\pm$ 0.040 (+8.5\%) & 0.742 $\pm$ 0.041 (+2.1\%) & \textbf{0.816 $\pm$ 0.042 (+12.3\%)} \\
Air Hockey real-transfer & 0.477 $\pm$ 0.044 & 0.465 $\pm$ 0.014 & \textbf{0.562 $\pm$ 0.030 (+17.9\%)} & 0.500 $\pm$ 0.026 (+4.8\%) & 0.516 $\pm$ 0.035 (+8.1\%) \\
Box2D hard & 0.317 $\pm$ 0.050 & 0.365 $\pm$ 0.069 & 0.652 $\pm$ 0.060 (+78.6\%) & 0.565 $\pm$ 0.084 (+54.8\%) & \textbf{0.677 $\pm$ 0.094 (+85.4\%)} \\
Box2D hard velocity & 0.387 $\pm$ 0.075 & 0.377 $\pm$ 0.062 & 0.459 $\pm$ 0.035 (+18.5\%) & 0.436 $\pm$ 0.036 (+12.7\%) & \textbf{0.496 $\pm$ 0.040 (+28.3\%)} \\
MetaWorld pick-place & 0.266 $\pm$ 0.073 & 0.305 $\pm$ 0.163 & 0.844 $\pm$ 0.266 (+176.6\%) & 0.570 $\pm$ 0.196 (+86.9\%) & \textbf{0.848 $\pm$ 0.279 (+177.9\%)} \\
MetaWorld push & 0.699 $\pm$ 0.059 & 0.750 $\pm$ 0.064 & \textbf{0.816 $\pm$ 0.059 (+8.9\%)} & 0.730 $\pm$ 0.080 (-2.7\%) & 0.801 $\pm$ 0.032 (+6.8\%) \\
\midrule
Average improvement &  &  & +51.5\% & +26.4\% & \textbf{+53.1\%} \\
\bottomrule
\end{tabular}%
}
\end{table*}
\label{app:tab_ablation_w}
% \end{document}